\tikzstyle{EDB}=[draw=white, opacity = 0.0,line width=2pt, preaction={clip, postaction={pattern=north west lines, pattern color=black}}]
\newcommand{\printfnsymbol}[1]{%
  \textsuperscript{\@fnsymbol{#1}}%
}
\title{Certified Polyhedral Decompositions of Collision-Free Configuration Space}
\author{
Hongkai Dai\affilnum{*}\affilnum{2},
Alexandre Amice\affilnum{*}\affilnum{1},
Peter Werner\affilnum{1},
Annan Zhang\affilnum{1},
Russ Tedrake\affilnum{1,2}}
\affiliation{
\affilnum{1}Massachusetts Institute of Technology (MIT),
\affilnum{2}Toyota Research Institute,
\affilnum{*}equal contribution}
\date{January 2023}
\begin{document}

\begin{abstract}
Understanding the geometry of collision-free configuration space (C-free) in the presence of task-space obstacles is an essential ingredient for collision-free motion planning. While it is possible to check for collisions at a point using standard algorithms, to date no practical method exists for computing C-free \emph{regions} with rigorous certificates due to the complexity of mapping task-space obstacles through the kinematics. In this work, we present the first to our knowledge rigorous method for approximately decomposing a rational parametrization of C-free into certified polyhedral regions. Our method, called \CIris (C-space Iterative Regional Inflation by Semidefinite programming), generates large, convex polytopes in a rational parameterization of the configuration space which are rigorously certified to be collision-free. Such regions have been shown to be useful for both optimization-based and randomized motion planning. Based on convex optimization, our method works in arbitrary dimensions, only makes assumptions about the convexity of the obstacles in the \emph{task} space, and is fast enough to scale to realistic problems in manipulation. We demonstrate our algorithm's ability to fill a non-trivial amount of collision-free C-space in several 2-DOF examples where the C-space can be visualized, as well as the scalability of our algorithm on a 7-DOF KUKA iiwa, a 6-DOF UR3e and 12-DOF bimanual manipulators. An implementation of our algorithm is open-sourced in {\href{https://drake.mit.edu/}{Drake}}. We furthermore provide examples of our algorithm in interactive {\href{https://deepnote.com/workspace/alexandre-amice-c018b305-0386-4703-9474-01b867e6efea/project/C-IRIS-7e82e4f5-f47a-475a-aad3-c88093ed36c6/notebook/2d_example_bilinear_alternation-14f1ee8c795e499ca7f577b6885c10e9}{Python notebooks}}.
\end{abstract}

\maketitle

\section{Introduction} \label{S: Introduction}
The notion of configuration space (C-space) has played a foundational role in robot motion planning since its proposal in the seminal work \cite[]{lozano1983spatial}. In the presence of obstacles in the Cartesian task space, a fundamental challenge is describing the collision-free C-space (C-free): the full range of configurations for which a robot is not in collision. Prior work has taken two complementary approaches to this problem. 

The first approach attempts to find an explicit description of the C-space obstacles from their task-space description and the inverse kinematics (IK). We refer to this approach as the negative approach, as C-free is described as the complement of the set of C-space obstacles. In its full generality, the problem of describing C-space obstacles is intractable \cite[]{canny1988complexity}, and so limiting assumptions on the robot are often made. For example, \cite[]{kavraki1995computation} develops a method for computing C-space obstacles based on the Fast Fourier Transform under the assumption that the robot can only translate in the workspace. In \cite[]{branicky1990computing}, explicit descriptions of C-space obstacle due to the presence of point, line, and planar task-space obstacles are presented for two and three degree of freedom (DOF) robots. A thorough review of describing C-space obstacles can be found in \cite[Chapter 3]{latombe2012robot}. There it is shown that if all the task-space obstacles are described as semi-algebraic sets (i.e. as the intersection and union of polynomial inequalities) then C-space obstacles are also semi-algebraic. This is an important result from a complexity-theoretic standpoint as it shows that describing the C-space obstacles is at least decidable, though still very hard. 

We refer to the second approach as the positive approach, as it seeks to directly describe C-free as a union of simpler sets. This description is attractive as a variety of optimization-based motion planning methods can efficiently leverage such descriptions, particularly when the simpler sets are convex \cite[]{deits2015efficient, schouwenaars2001mixed, marcucci2021shortest, marcucci2022motion}.

Rapidly-exploring Random Trees (RRT) \cite[]{lavalle1998rapidly}, Probabilistic Roadmaps (PRM) \cite[]{kavraki1996probabilistic}, and their variants can all be considered examples of this approach, describing C-free using piecewise-linear paths. Frequently, these methods provide probabilistic guarantees that the paths contain no collisions via sampling along the paths. To avoid false positive claims of non-collision, rigorous certification procedures such as \cite[]{schwarzer2004exact} can be used. Works such as \cite[]{verghese2022configuration, han2019configuration, wong2014adaptive} all seek to describe non-zero volume subsets of C-free. Similar to RRTs and PRMs, these methods have the advantage of working in arbitrary configuration spaces, make no assumptions on the C-space obstacles, and proceed via sampling. Therefore, they are typically relatively simple to implement and quite fast in low dimensions. Unfortunately, these methods only provide probabilistic guarantees of non-collision.

When the C-space obstacles are assumed to be convex, rigorous descriptions of C-free may be possible, though hardness results exist. For example, in two and three dimensions with polyhedral C-space obstacles, it is known that finding a minimal decomposition is NP-hard \cite[]{lingas1982power} to solve exactly and even APX-hard \cite[]{eidenbenz2003approximation} to approximate\footnote{A problem is said to be APX-hard if no polynomial time algorithm can achieve an approximation ratio of $1+\delta$ for some $\delta > 0$ unless $P = NP$.}. Works such as \cite[]{lien2007approximate} and \cite[]{ghosh2013fast} overcome these hardness results by finding decompositions that are unions of approximately convex sets. 

In arbitrary dimensions and under the assumption of known, convex C-space obstacles, C-free can be decomposed into convex polyhedra by using the \Iris algorithm of \cite[]{deits2015computing}. As it is based on convex programming, \Iris is relatively fast, and is also able to generate \emph{rigorous certificates} of non-collision. Unfortunately, it is often the case that obstacles are naturally described as convex sets in \emph{task space}, which are rarely convex in C-space. 

In this work, we similarly provide a method for describing C-free using convex polyhedra in a bijective, rational parametrization of C-space known as the tangent configuration space (TC-space). Our primary technical contributions are two convex (specifically Sums-of-Squares (SOS)) programs which can certify that a polyhedron in TC-space contains no collision when the obstacles are specified as convex sets in \emph{task space}. Similar to \cite[]{deits2015computing}, we then construct certified, collision-free polytopic regions by alternating between a pair of convex programs. Our method works in arbitrary dimensions and is the first to our knowledge to provide rigorous certificates for non-zero volume sets in this setting. Moreover, we provide a fast, mature implementation technique in the open-source robotics toolbox {\href{https://drake.mit.edu/}{Drake}}\footnote{\url{https://drake.mit.edu/}}.

A conference version of this paper is published in \cite[]{amice2022finding}, which assumes a robotic manipulator composed of revolute joints operating in a scene where all task-space obstacles are decomposed as a union of vertex representation (V-rep) polytopes. This journal version extends these results in many ways.

First, we demonstrate how our approach can be extended to handle other common, non-polytopic geometries such as spheres, capsules, and cylinders. Moreover, we describe how to extend our approach to handle a robot composed of any of the algebraic joints: revolute, prismatic, spherical, planar, and cylindrical. 
Our second technical contribution introduces a second method for certifying non-collision inspired by the dual of the separating hyperplane approach used in the conference paper. This approach takes the form of certifying the emptiness of a set of polynomial equations and inequalities which can also be written as an optimization program. The third technical contribution of this work is to show that feasibility of the optimization programs we use for certification is not only sufficient, but also necessary for a TC-space region to be collision free provided the degree of certain polynomials are chosen sufficiently large. Finally, we provide new examples of our algorithm deployed on various robots including 2-DOF robots to visualize the TC-space, a robot containing a prismatic joint, and a UR3e robot with collision geometries approximate by cylinders.


We begin in Section \ref{S: Problem Statement} by formally introducing our problem and our assumptions. We proceed in Section \ref{S: Background} by introducing necessary mathematical background for describing our technical approach. In Section \ref{S: Certification}, we present our most technical results: two convex programs which can certify whether a region of TC-space is collision-free. We also state the conditions under which feasibility of these programs are guaranteed when a proposed region is collision-free. We describe how to leverage the certification programs to generate convex decompositions of TC-free in Section \ref{S: Bilinear Alternation}. We conclude in Section \ref{S: Results} with examples of our algorithm deployed on various robots. We will first illustrate the algorithm on two simple 2-DOF systems where both the task and configuration spaces can be visualized and the entire configuration space can be quickly covered. We next demonstrate the ability of our algorithm to certify a wide range of postures for two realistic, 7-DOF manipulators interacting with a shelf. We conclude by showing our algorithm's ability to scale by exploring two 12-DOF, bimanual manipulators.

\noindent\textbf{Notation:} 
Throughout the paper, we will use calligraphic letters ($\calS$) to denote sets, Roman capitals ($X$) to denote matrices, and Roman lower case ($x$) to denote vectors. We use $[N] = \{1, \dots, N\}$, denote the set of all multivariate polynomials in the vector of variables $x$ as $\setR[x]$, and denote the cone of Sums-of-Squares (SOS) polynomials as $\bSigma$. Additionally, we will adopt the monogram notation of \cite[]{tedrakeManip} for rigid transforms.

\section{Problem Statement} \label{S: Problem Statement}
We consider a known, task-space environment where our robot and all obstacles have been decomposed as a union of compact, convex bodies\footnote{For technical reasons, we formally assume that the bodies are compact, convex sets expressible as a Archimedean, basic semi-algebraic sets. See Appendix \ref{A: Archimedean} for the definition of Archimedean}
for example cylinders, capsule, spheres, or vertex representation (V-rep) polytopes. Such collision geometries of our task space are readily available through standard tools such as V-HACD \cite[]{mamou2009simple} and are often a required step for simulating any given environment.

Our robot is a mechanism composed of $N + 1$ links connect via either revolute or prismatic joints \cite[]{wampler2011numerical}:
\begin{itemize}
    \item Revolute (R): a 1-DOF joint permitting revolution about an axis of symmetry. An example is a door handle.
    \item Prismatic (P): a 1-DOF joint permitting translation along an axis. An example is a linear rail.
\end{itemize}
We will assume that all revolute joints are constrained from undergoing complete rotations and all prismatic joints have bounded translation. Formally, if $\theta$ is the configuration-space variable associated to revolute joint, then:
\begin{align} \label{E: joint limit angles}
    -\pi < \theta_{l} \leq \theta \leq \theta_{u} < \pi,
\end{align}
and if $z$ is the configuration-space variable associated to a displacement then:
\begin{align} \label{E: joint limit prismatic}
    z_{l} \leq z \leq z_{u}.
\end{align}
where the bounds $\theta_{l}$, $\theta_{u}$, $z_{l}$, and $z_{u}$ are fixed constants.

Our objective is to find large, convex regions of TC-free regardless of the dimension of the configuration space. This objective is beyond the scope of current decomposition for non-convex spaces/objects such as V-HACD due to the dimensionality of the problem for interesting robots and the complexity of the non-linear kinematics.

\begin{remark}
    Our approach can handle a robot composed of any of the five algebraic joints: revolute, prismatic, planar, cylindrical, planar, and spherical \cite[]{wampler2011numerical}. We restrict ourselves to R and P joints as the other joints can be seen as a composition of these two (see appendix \ref{A: alg kin} for details).
\end{remark}

\section{Background} \label{S: Background}

This section introduces key notions from convex analysis and algebraic geometry that will be essential for our approach presented in Section \ref{S: Certification}. We begin by recalling some classic theorems pertaining to the separation of convex bodies. We next review the Positivstellensatz, a central theorem from algebraic geometry that forms the basis for many applications of the Sums-of-Squares method that we will leverage. We conclude by recalling a parameterization of a robot's forward kinematics using rational functions.

\subsection{Separating Convex Bodies} \label{S: separating convex bodies}
In this section, we review two dual ways to check whether two compact, convex sets  $\calA$ and $\calB$ intersect by using convex optimization. Our certification programs in Section \ref{S: Certification} will rely on generalizations of the programs introduced in this section. 

A well-known result from convex optimization theory is the Separating Hyperplane Theorem \cite[Section 2.5]{boyd2004convex} which states that $\calA$ and $\calB$ do not intersect, if and only if there exists a hyperplane $\calH(a,b) = \{x \mid a^{T}x + b = 0, (a, b)\neq (0, 0)\}$ which strictly separates the two bodies. The hyperplane $\calH(a,b)$ serves as a \emph{certificate} of non-intersection. Such a hyperplane is visualized in Figure \ref{F: sep hyperplane} and is described by the solution to program \eqref{E: sep hyperplane generic}. Many previous works \cite[]{brossette2017collision, lin2022reduce} have applied the Separating Hyperplane Theorem to find 
 a \textit{single} collision-free posture; in this paper we apply the theorem to find a convex set of collision-free postures. 


Conversely, if $\calA$ and $\calB$ do intersect, then it is possible to certify this by finding a point in $\calA \cap \calB$. Such a point can be found by solving the convex optimization program \eqref{E: intersection via same point generic}. A certificate of the \emph{infeasibility} of \eqref{E: intersection via same point generic} proves that $\calA$ and $\calB$ do not intersect. Finding a certificate of infeasibility can be obtained by considering the dual of \eqref{E: intersection via same point generic} and is a standard notion in convex optimization \cite[Section 5.8]{boyd2004convex}.


A solution to program \eqref{E: sep hyperplane generic} has the advantage of being able to quantify the magnitude of separation between the two bodies. Therefore, in Section \ref{S: Bilinear Alternation} we will prefer to base our algorithm on a generalization of \eqref{E: sep hyperplane generic}. However, we will see that certain results will be easier to show by considering the \emph{infeasibility} of program \eqref{E: intersection via same point generic}.

\begin{figure*}
\begin{minipage}{0.45 \textwidth}
\begin{subequations}\label{E: sep hyperplane generic}
    \begin{gather} 
    \textbf{Find } a,~ b \nonumber
    \\
    a^{T}x + b > 0
    , ~ \forall~x \in \calA\label{E: sep hyperplane generic A}
    \\
    a^{T}y + b < 0
    ,~ \forall~y \in \calB \label{E: sep hyperplane generic B}
    \\
    \nonumber
\end{gather}
\end{subequations}
\end{minipage}
\hfill
\begin{minipage}{0.45 \textwidth}
\begin{subequations} \label{E: intersection via same point generic}
\begin{gather}
    \find x,~ y \nonumber~ \subjectto \\
    x \in \calA, ~ y \in \calB \label{E: in set constraint generic}
    \\
    x = y \label{E: same point constraint generic}
\end{gather}
\end{subequations}
\end{minipage}
\begin{minipage}[t]{0.45 \textwidth}
\centering{\begin{tikzpicture}

\draw (-1.9,1.5) -- (-1.2,0.1) -- (1,1.3) -- (-0.5,2.7)--cycle;

\draw[rotate = 30]  (1.3,-1.3) ellipse (1.5 and 1.0);

\node at (-1,1.6) {$\mathcal{A}$};
\node at (1.8,-0.5) {$\mathcal{B}$};

\node (v1) at (-1.6,-1.2) {};
\node (v2) at (3,2.3) {};
\draw  (v1) edge (v2);
\node[rotate = 38] at (-1.1,-0.4) {$a^{T}x + b = 0$};

\end{tikzpicture}}
    \subcaption{If $\calA \cap \calB = \emptyset$ then there exists a hyperplane $a^{T}x + b = 0$ which separates the two bodies.}\label{F: sep hyperplane}
\end{minipage}
\hfill 
\begin{minipage}[t]{0.45 \textwidth}
\centering{\begin{tikzpicture}
\draw (-1.9,1.5) -- (-1.2,0.1) -- (1,1.3) -- (-0.5,2.7)--cycle;
\draw[rotate = 30]  (1.5,0.6) ellipse (1.5 and 1.0);
\node at (-1,1.6) {$\mathcal{A}$};
\node at (1.6,1.7) {$\mathcal{B}$};
\node at (0.3,1.3) {$x = y$};
\node at (0.3, -1) {};

\draw [fill = black] (0.3,1.1) ellipse (0.1 and 0.1);
\end{tikzpicture}}
    \subcaption{If $\calA \cap \calB \neq \emptyset$ then there exists $x \in \calA$ and $y \in \calB$ such that $x = y$}\label{F: convex interseciton}
\end{minipage}

\caption{Program \eqref{E: sep hyperplane generic} searches for a hyperplane which separates $\calA$ and $\calB$ while program \eqref{E: intersection via same point generic} searches for a point in $\calA \cap \calB$. Both of these are convex optimization programs, and exactly one of these programs is feasible.}
\end{figure*}

We conclude by noting that programs \eqref{E: sep hyperplane generic} and \eqref{E: intersection via same point generic} are \emph{strong alternatives}; exactly one of the two programs is feasible. The key to solving either program is to find a finite parameterization of conditions \eqref{E: sep hyperplane generic A}, \eqref{E: sep hyperplane generic B},  and \eqref{E: in set constraint generic}. In Table \ref{Tab: shape conditions table}, we provide a convenient reference for some common geometries.

\begin{table}[htb]
\centering
    \begin{tabular}{p{1.1in} | p{2.3in} | p{1.75in}}
    \centering{Body} & 
    \Centering{$a^{T}x + b > 0$, 
    $\forall x \in \calA$} & 
    \Centering{$x \in \calA$}
    \\
    \hline
    \begin{tabular}{p{1in}}
    \RaggedRight{V-rep Polytope with vertices} $$\{v_{1}, \dots, v_{m}\}.$$
    \end{tabular}
    & 
    \begin{tabular}{p{2.3in}}
    \begin{gather*}
    a^{T}v_{i} + b \geq 1, ~ \forall~ i \in \{1, \dots, m\}
    \end{gather*}
    \end{tabular}
    &
    \begin{tabular}{p{1.75in}}
    \begin{gather*}
    x = \sum_{i=1}^{m} \mu_{i}v_{i}, ~
    \sum_{i=1} \mu_{i} = 1,
    \\
    \mu_{i} \geq 0
    \end{gather*}
    \end{tabular}
    \\
    \hline
    \begin{tabular}{p{1in}}
    \RaggedRight{Sphere with center $o$ and radius $r$.}
    \end{tabular}
    & 
    \begin{tabular}{p{2.3in}}
    \begin{gather*}
        a^{T} o + b \geq r\norm{a}\\
        a^{T}o + b \ge 1
    \end{gather*}
    \end{tabular}
    &
    \begin{tabular}{p{1.75in}}
    $$\norm{x - o}^{2} \leq r^{2}$$
    \end{tabular}
    \\
    \hline
    \begin{tabular}{p{1in}}
    \RaggedRight{Capsule, the convex hull of two spheres with centers $o_{1}$ and $o_{2}$ and radii $r_{1}$ and $r_{2}$.}
    \end{tabular}
    &
    \begin{tabular}{p{2.3in}}
    \begin{gather*} a^{T} o_{1} + b \geq r_{1} \norm{a} \\ a^{T} o_{2} + b \geq r_{2} \norm{a}\\
    a^To_1+b\ge 1\end{gather*}
    \end{tabular}
    &
    \begin{tabular}{p{1.75in}}
    \begin{gather*}
    o_{\mu} = \mu o_{1} + (1-\mu)o_{2}
    \\
    \norm{x - o_{\mu}} \leq \mu r_{1} + (1-\mu)r_{2}
    \\
    0 \leq \mu \leq 1
    \end{gather*}
    \end{tabular}
    \\
    \hline
    \begin{tabular}{p{1in}}
    \RaggedRight{Cylinder, the convex hull of two circles with centers $o_{1}$ and $o_{2}$, and with radii $r_{1}$ and $r_{2}$.}
    \end{tabular}
     &
     \begin{tabular}{p{2.3in}}
    \begin{gather*}
        \frac{a_{z}\norm{o_{1} - o_{2}}}{2} + b \geq r_1 \norm{\begin{bmatrix} a_{x} & a_{y}\end{bmatrix}}
        \\
        \frac{-a_{z}\norm{o_{1} - o_{2}}}{2} + b \geq r_2 \norm{\begin{bmatrix} a_{x} & a_{y}\end{bmatrix}}
        \\
        a^{T}\left(\frac{o_{1}+o_2}{2}\right) + b \geq 1
    \end{gather*}
    \end{tabular}
    &
    \begin{tabular}{p{1.75in}}
    \begin{gather*}
        o_{\mu} = \mu o_{1} + (1-\mu) o_{2}
        \\
        v ^{T} (o_{1} - o_{2}) = 0
        \\
        x = o_{\mu} + v
        \\
        \norm{v} \leq \mu r_{1} + (1-\mu)r_{2}
        \\
        0 \leq \mu \leq 1
    \end{gather*}
    \end{tabular}
    \end{tabular}
    \caption{Parameterizations of conditions \eqref{E: sep hyperplane generic A} and \eqref{E: in set constraint generic} respectively for particular convex bodies. \protect\footnotemark } \label{Tab: shape conditions table}
\end{table}
\footnotetext{Strictly speaking, the formulation \eqref{E: sep hyperplane generic A} given for the sphere, capsule, and cylinder only enforce non-strict separation i.e. $a^{T}x + b \geq 0$. This can be remedied by replacing $r$ with $r+\varepsilon$ for any $\varepsilon > 0$.}

\begin{remark}
    Problem \eqref{E: sep hyperplane generic} is frequently written with non-strict inequalities \eqref{E: sep hyperplane generic A} and \eqref{E: sep hyperplane generic B} to make it compatible with modern solvers. Such a formulation requires excluding the trivial solution $(a,b) = (0,0)$ via extra constraints as well as planes which are not strictly separating. The conditions given in Table \ref{Tab: shape conditions table} accomplish both with the constraint $a^Tx + b \ge 1$ with $x = v_{i}$ for polytopic geometries and $x = o$ for sphere, cylinder, and capsules.
\end{remark}


\subsection{Certificates of Positivity and Infeasibility} \label{S: Psatz}
In Section \ref{S: Certification}, we will show how to generalize programs \eqref{E: sep hyperplane generic} and \eqref{E: intersection via same point generic} to be able to certify non-collision for a range of robot configurations. Both generalizations will reduce to well-studied polynomial problems. Specifically, given the set
$$\calS_{g, h} = \{x \mid g_{i}(x) \geq 0, h_{j}(x) = 0, i \in [n], j \in [m]\},$$
where $g_i(x)$ and $h_{j}(x)$ are all given polynomial functions of $x$, then certifying the separating hyperplane conditions \eqref{E: sep hyperplane generic A} and \eqref{E: sep hyperplane generic B} will be akin to a certifying a polynomial implication of the form
\begin{align} \label{E: Gen Cert Prob}
     x \in \calS_{g,h}  \implies p(x) \geq 0
\end{align}
where $p(x)$ is again a polynomial.

Moreover, certifying the infeasibility of \eqref{E: intersection via same point generic} will be akin to certifying that
\begin{align} \label{E: Gen Infeasible}
    \calS_{g, h} = \emptyset.
\end{align}

Both of these polynomial problems are tractable. In particular, a class of results known as Positivstellensatz Theorems (Psatz) can be used to reduce both problems to a convex optimization program \cite[]{parrilo2000structured, blekherman2012semidefinite}. In this section, we review the Psatz results that we will use.

Our assumption \eqref{E: joint limit angles} and \eqref{E: joint limit prismatic} that our robot has joint limits implies that the subsets of TC-free we wish to certify will be Archimedean sets, a property slightly stronger than compactness formally defined in Appendix \ref{A: Archimedean}. This will enable us to use a very strong Psatz Theorem for proving implications of the form \eqref{E: Gen Cert Prob} known as Putinar's Positivstellensatz.

\begin{theorem}[Positivstellensatz {\cite[]{putinar1993positive}}] \label{T: Putinar}
Suppose $\calS_{g, h}$ is Archimedean and suppose that $p(x) > 0$ for all $x \in \calS_{g,h}$. Then there exists polynomials $\phi_{j}(x),~j = 0, \dots, m$ and SOS polynomials $\lambda_{i}(x),~ i=0,\dots,n$ such that:
\begin{align} 
    p(x) = \lambda_{0}(x) + \sum_{i=1}^n\lambda_{i}(x)g_{i}(x) + \sum_{j=1}^m \phi_{j}(x) h_{j}(x) \label{E: Putinar Psatz}.
\end{align}

Moreover, if $p(x)$ is any polynomial that can be expressed as in \eqref{E: Putinar Psatz}, then
\begin{align} \label{E: poly positive implication}
    x \in \calS_{g,h} \implies p(x) \geq 0
\end{align}

\end{theorem}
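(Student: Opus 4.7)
I would begin with the ``moreover'' direction, which is one line: if $p$ is written as in \eqref{E: Putinar Psatz} and $x \in \calS_{g,h}$, then $h_j(x) = 0$ annihilates the last sum, while every $\lambda_i(x) g_i(x) \geq 0$ because SOS polynomials are pointwise nonnegative and $g_i(x) \geq 0$ on $\calS_{g,h}$; hence $p(x) \geq 0$.

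The existence direction is the heart of Putinar's theorem, and my plan is the classical moment-problem argument by contradiction. Let
$$M = \left\{\lambda_0 + \sum_{i=1}^{n} \lambda_i g_i + \sum_{j=1}^{m} \phi_j h_j : \lambda_i \in \bSigma,\ \phi_j \in \setR[x]\right\},$$
a convex cone in $\setR[x]$ that contains $1$ (take $\lambda_0 = 1$) and is closed under multiplication by SOS polynomials. The claim is $p \in M$. Suppose for contradiction that $p \notin M$. Truncating by degree and applying a geometric Hahn--Banach separation on each finite-dimensional slice $\setR[x]_{\leq d}$, followed by a diagonalization as $d \to \infty$, produces a linear functional $L: \setR[x] \to \setR$ with $L(1) = 1$, $L(q) \geq 0$ for all $q \in M$, and $L(p) \leq 0$.

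The main obstacle, and where the Archimedean hypothesis does its work, is upgrading $L$ to integration against a Borel probability measure supported on $\calS_{g,h}$. By hypothesis there exists $N$ with $N - \|x\|^2 \in M$, so $L(\|x\|^2) \leq N$; iterating and using the Cauchy--Schwarz-type inequality $L(q)^2 \leq L(1) L(q^2)$ available for any functional nonnegative on squares, one obtains uniform bounds $|L(q)| \leq C_q$ controlled by the supremum of $q$ on a compact ball $K$ containing $\calS_{g,h}$. Thus $L$ extends continuously to $C(K)$, and by the Riesz--Markov--Kakutani representation theorem $L(q) = \int q\, d\mu$ for a Borel probability measure $\mu$ on $K$. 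I would then verify $\mathrm{supp}(\mu) \subseteq \calS_{g,h}$: since both $h_j$ and $-h_j$ lie in $M$, we get $L(h_j) = 0$, and similarly $\pm h_j^2 \in M$ yields $L(h_j^2) = 0$, forcing $\mu$ to concentrate on $\{h_j = 0\}$; an analogous argument with polynomials approximating $\max(-g_i, 0)$ rules out mass where any $g_i < 0$. The contradiction is then immediate: $\int p\, d\mu > 0$ since $p > 0$ on $\mathrm{supp}(\mu) \subseteq \calS_{g,h}$ and $\mu$ is a probability measure, contradicting $L(p) \leq 0$. The delicate technical step on which everything hinges is the a priori $L^\infty$-type bound on $L$ in terms of the Archimedean constant $N$, as this is exactly what fails for merely compact (but non-Archimedean) quadratic modules.
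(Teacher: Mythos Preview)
The paper does not prove this statement: Theorem~\ref{T: Putinar} is quoted as background from \cite{putinar1993positive} and used as a black box, so there is no ``paper's own proof'' to compare against. Your proposal is therefore not competing with anything in the manuscript; it is simply a sketch of the classical proof of Putinar's theorem.

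As a sketch, your outline is the standard moment-problem route and is broadly sound, but two steps deserve tightening if you intend this to stand on its own. First, the separation-plus-diagonalization in degree is delicate: truncated quadratic modules $M \cap \setR[x]_{\leq d}$ need not be closed, so the finite-dimensional Hahn--Banach step does not immediately give a separating functional with $L(p) \leq 0$; the usual fix is to separate $p$ from the closure and then argue that the Archimedean condition forces $M$ itself to be closed (or to bypass this via a Zorn/GNS argument producing a multiplicative state). Second, your support argument for the inequality constraints is a bit loose: rather than approximating $\max(-g_i,0)$, the clean line is that $\sigma g_i \in M$ for every $\sigma \in \bSigma$, so $\int \sigma g_i\, d\mu \geq 0$ for all SOS $\sigma$, and Stone--Weierstrass then forces $g_i \geq 0$ $\mu$-a.e. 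The ``moreover'' direction is exactly right as you wrote it.
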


As an immediate corollary, the previous theorem can be used to prove that $\calS_{g,h}$ is empty.

\begin{theorem}[
{\cite[]{parrilo2004sum}}] \label{T: Putinar Dual}
Suppose $\calS_{g, h}$ is Archimedean. Then $\calS_{g,h} = \emptyset$ if and only if there exists polynomials $\phi_{j}(x)$ and SOS polynomials $\lambda_{i}(x)$ such that
\begin{align}
-1 = \lambda_{0}(x) + \sum_{i} \lambda_{i}(x)g_{i}(x) + \sum_{j} \phi_{j}(x) h_{j}(x) \label{E: Putinar Dual Psatz}.
\end{align}
\end{theorem}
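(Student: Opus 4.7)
The plan is to derive this statement as an essentially immediate corollary of Putinar's Positivstellensatz (Theorem \ref{T: Putinar}), with the two directions handled separately.

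For the direction that the representation implies emptiness, I would argue by contradiction without invoking the theorem at all. If some point $x^{\star} \in \calS_{g,h}$ existed, then evaluating the right-hand side of \eqref{E: Putinar Dual Psatz} at $x^{\star}$ produces a sum of non-negative terms: each $\lambda_i(x^{\star}) g_i(x^{\star}) \geq 0$ because $\lambda_i$ is SOS (hence non-negative) and $g_i(x^{\star}) \geq 0$, and each $\phi_j(x^{\star}) h_j(x^{\star}) = 0$ because $h_j(x^{\star}) = 0$. Since the left-hand side is $-1 < 0$, this yields a contradiction, so $\calS_{g,h}$ must be empty.

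For the direction that emptiness implies the existence of such a representation, the key trick is to apply Theorem \ref{T: Putinar} with the specific choice $p(x) = -1$. Putinar's theorem hypothesizes that $p(x) > 0$ for every $x \in \calS_{g,h}$; when $\calS_{g,h} = \emptyset$, this is vacuously true, regardless of the fact that $-1$ is globally negative. The conclusion of Theorem \ref{T: Putinar} then delivers precisely the certificate \eqref{E: Putinar Dual Psatz}.

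The only subtlety I would flag is the Archimedean hypothesis: as detailed in Appendix \ref{A: Archimedean}, Archimedeanity is properly a property of the quadratic module generated by the $g_i$ (together with the $h_j$ and $-h_j$), not of the zero set itself, so it carries over transparently to the vacuous application of Putinar's theorem. No genuine obstacle arises in the argument; the whole content of the result lies in recognizing that a vacuously satisfied positivity hypothesis still triggers Putinar's non-trivial existence statement.
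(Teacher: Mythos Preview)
Your proposal is correct and matches the paper's treatment: the paper does not give a standalone proof but simply introduces the result as ``an immediate corollary'' of Theorem~\ref{T: Putinar}, which is precisely what you spell out by applying Putinar with $p(x)=-1$ vacuously and handling the converse by direct evaluation.
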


In both cases, the multiplier polynomials $\lambda$ and $\phi$ serve as \emph{certificates} that the conditions \eqref{E: Gen Cert Prob} or \eqref{E: Gen Infeasible} hold. These certificates can be searched for using a convex optimization technique known as Sums-of-Squares (SOS) programming, a subset of semidefinite programming (SDP) \cite[]{parrilo2000structured}. The SOS technique has been widely used in robotics, for example in stability verification \cite[]{tedrake2010lqr,majumdar2017funnel, shen2020sampling}, reachability analysis \cite[]{jarvis2003some, yin2021backward} and geometric modeling \cite[]{ahmadi2016geometry}. In this paper, we will use SOS programming to generate certificates that subsets of TC-space are contained in TC-free.

\subsection{Rational Forward Kinematics} \label{S: Rat Forward}
Our method in Section \ref{S: Certification} will rely critically on parameterizing the forward kinematics of our robot using polynomials. Many robots contain rotational joints and so their forward kinematics are naturally specified as trigonometric functions. In this section, we review a standard change of variables of our robot kinematics which will enable us to parameterize the forward kinematics as a rational function.

The forward kinematics of a rigid-body robot with $N$ joints can be written by composing rigid transforms \cite[]{craig2005introduction, tedrakeManip}. Written in homogeneous coordinates, and using the monogram notation \cite{tedrakeManip}\footnote{In monogram notation, the pose of a frame $A$ expressed in a frame $F$ is denoted as $\leftidx{^F}X^{A}$.}, the pose of a frame $A$, expressed in the reference frame $F$, as a function of the robot configuration $q$ assumes the form:
\begin{align}\label{E: gen forward kin}
    \leftidx{^F}X^{A} = \begin{bmatrix}
        \leftidx{^F}R^{A}(q) & \leftidx{^F}p^{A}(q) \\
        0_{1 \times 3} & 1 \\
    \end{bmatrix}
    =
    \prod_{i \in \calI_{F, A}} 
    \leftidx{^{P_{i}}}X^{C_i}(q_{i})\ \leftidx{^{C_i}}{X}^{P_{i+1}}
\end{align}
 In equation \eqref{E: gen forward kin}, $\calI_{F, A} =\{i_1,\hdots, i_n\}\subseteq [N]$ is the set of joints lying on the kinematic chain between $F$ and $A$. We attach two frames to each joint, with $P_i$ rigidly fixed to the parent link of the $i$\textsuperscript{th} joint, and $C_i$ rigidly fixed to the child link of the same joint. The two frames $P_i$ and $C_i$ coincide when the joint configuration $q_i=0$. The subset of configuration variables $q_{i}$ defines the degrees of freedom at the $i$\textsuperscript{th} joint, $\leftidx{^{P_i}}X^{C_i}(q_i)$ is the relative transform of the joint after the joint moves by $q_i$. The rigid transform $\leftidx{^{C_i}}X^{P_{i+1}}$ describes the physical properties of the $i$\textsuperscript{th} link such as its length. We assume that the reference frame $F$ is the $P_{i_1}$, the parent frame of the first joint $i_1$; while the frame $A$ is $C_{i_n}$, the child frame of the last joint $i_n$. \footnote{Since joint $i_n$ is the last joint on this chain $\mathcal{I}_{F, A}$, we assume $\leftidx{^{C_{i_n}}}X^{P_{i_{n+1}}} = I$.}  We choose to be explicit about the reference frame $F$ at the risk of being pedantic, as the choice of reference frame $F$ will have important consequences for the scalability of the approach described in Section \ref{S: Bilinear Alternation} (see Appendix \ref{A: Frame Selection} for a detailed discussion).

The matrices $\leftidx{^{P_{i}}}X^{C_{i}}(q_{i})$ assume the following forms \cite[]{wampler2011numerical}
\begin{align} \label{E: gen low order pair}
    \leftidx{^{P_{i}}}X^{C_i}(q_{i}) &=
    \begin{cases}
    \begin{bmatrix}
        \cos(\theta_{i}) & -\sin(\theta_{i}) & 0 & 0 \\
        \sin(\theta_{i}) &  \cos(\theta_{i}) & 0 & 0 \\
        0  & 0  & 1 & 0 \\
        0 & 0 & 0 & 1
    \end{bmatrix}
    &
    \text{if $i$\textsuperscript{th} joint is Revolute}
    \\
    \begin{bmatrix}
        1 & 0 & 0 & 0 \\
        0 & 1 & 0 & 0 \\
        0 & 0  & 1 & z_{i} \\
        0 & 0 & 0 & 1
    \end{bmatrix}
    &
    \text{if $i$\textsuperscript{th} joint is Prismatic}
    \end{cases}
\end{align}
 
Expression \eqref{E: gen forward kin} expresses the position of our robot as an \emph{multilinear trigonometric polynomial function}. Concretely, the $w$\textsuperscript{th} component (where $w\in\{x, y, z\}$) of the position of $A$ relative to $F$ and expressed in $F$ is an expression of the form:
\begin{align} \label{E: gen forward kin pos}
    \leftidx{^F}p^{A}_{w}(q) = \sum_{j} c_{jw} \prod_{i \in \calI_{F,A}} \xi_{ij,w}(q_{i})
\end{align}
with $\xi_{ij,w}(q_{i}) \in \{\cos(\theta_{i}), \sin(\theta_{i}), z_{i}\}$. The scalar constants $c_{jw}$ are determined by the robot kinematic parameters (link length, joint axis, etc). Therefore, our configuration-space variables are
\begin{align*}
    q = \bigcup_{i} \{\theta_{i}, z_{i}\}.
\end{align*}

Multilinear trigonometric functions have many fortunate algebraic properties which we exploit throughout this paper, the first of which will be a change of variables enabling us to write \eqref{E: gen forward kin pos} as a rational function.

Specifically, we will introduce the substitution:
\begin{align} \label{E: rational sub}
    t_{i} \coloneqq \tan\left( \frac{\theta_{i}}{2}\right),
\end{align}
which allows us to write
\begin{align*}
    \cos(\theta_{i}) &= \frac{1-t_{i}^{2}}{1+t_{i}^{2}}, ~~
    \sin(\theta_{i}) = \frac{2t_{i}}{1+t_{i}^{2}}.
\end{align*}
This substitution is known as the stereographic projection \cite[]{spivakCalc} and is bijective if $\theta_{i} \in (-\pi, \pi)$ which we have assumed is the case for our robotic system\footnote{An alternative approach is to write the forward kinematics $p_w(q)$ as a multilinear polynomial of indeterminates $c_i=\cos(\theta_i)$ and $s_i=\sin(\theta_i)$, with the additional constraints $c_i^2+s_i^2=1$. We don't choose this parameterization as it is hard to integrate the volume on the quotient ring $c_i^2+s_i^2=1,\;\forall i$. Also this parameterization requires introducing two variables $c_i, s_i$ for each revolute joint, rather than one variable $t_i$.}. After performing this change of variables, our forward kinematics variables are
\begin{align*}
    s =  \bigcup_{i} \{t_{i}, z_{i}\}.
\end{align*}
We refer to the configuration-space variable $s$ as the \emph{tangent-configuration-space} (TC-space) variable. 

In the TC-space variable, our forward kinematics are a \emph{rational function} with a polynomial numerator and positive, polynomial denominator. This is an expression of the form
\begin{gather}\label{E: rational forward kinematics gen}
\leftidx{^F}p^{A}_{w}(s) =  \sum_{j} c_{jw} \prod_{i \in \calI_{F,A}} \frac{\leftidx{^F}f_{ij,w}^{A}(s_{i})}{\leftidx{^F}g_{ij,w}^{A}(s_{i})}  = 
    \frac{\leftidx{^F}f^{A}_{w}(s)}{\leftidx{^F}g^{A}_{w}(s)},\; ~w\in\{x, y, z\},
\end{gather}
where $$\frac{\leftidx{^F}f_{ij,w}^{A}(s_{i})}{\leftidx{^F}g_{ij,w}^{A}(s_{i})} \in \left\{\frac{1-t_{i}^{2}}{1+t_{i}^{2}}, \frac{2t_{i}}{1+t_{i}^{2}}, \frac{z_{i}}{1}\right\}.$$
We will abbreviate the vector quantity:
\begin{align}
\leftidx{^F}p^{A}(s) 
=
\frac{\leftidx{^F}f^{A}(s)}{\leftidx{^F}g^{A}(s)}
\end{align}
where $\leftidx{^F}f^{A}(s)$ is a \emph{vector of polynomials} and $\leftidx{^F}g^{A}(s)$ is a single, positive polynomial. Notice that $\leftidx{^F}g^{A} (s) > 0$ since each denominator $\leftidx{^F}g_{ij, w}^{A}(s_{i}) = 1+t_i^2 \text{ or } 1$, which is strictly positive.

We emphasize again that we have assumed:
\begin{gather*}
    -\pi < \theta_{l,i} \leq \theta_{i} \leq \theta_{u,i} < \pi,
    \\
    z_{l, i} \leq z_{i} \leq z_{u, i}.
\end{gather*}
and therefore generically $s_{l} \leq s \leq s_{u}$ component-wise.

Therefore, our substitution between $q$ and $s$ is bijective and so trajectories in TC-space correspond unambiguously to trajectories in C-space. Moreover, this assumption on boundedness of our configuration space allows us to seek collision-free regions $\calP$ that are contained within $\calP_{lim}$, a polytope encoding our joint limit: $\calP\subseteq \calP_{lim} = \{s \mid s_{l} \leq s \leq s_{u}\}$. 


\begin{example}
    As an example, we consider the double pendulum \cite[]{underactuated}.
    
    \begin{figure}[htb]
    \centering
        \scalebox{0.90}{\begin{tikzpicture}
\draw[rotate = 30,fill, opacity = 0.3]  (-3.8926,3.9595) rectangle (-3.4926,0.3595);
\draw[fill]  (-5.1915,1.6064) ellipse (0.2 and 0.2);
\draw[rotate = 45, fill, opacity = 0.3]  (-3.6666,1.2474) rectangle (-3.2666,-2.3526);
\draw[fill]  (-3.3795,-1.5269) ellipse (0.2 and 0.2);
\draw[very thick, <->] (-5.2,-1) -- (-5.2,1.6) -- (-3.2,1.6);
\node at (-3.2,1.4) {$x$};
\node at (-5.4,-1) {$y$};
\draw[dashed] (-5.2,1.6) -- (-1.4,-5);
\draw[dashed] (-3.4,-1.5) -- (-0.3,-4.6);
\draw[dashed] (-5.2,1.6) -- (-5.2,-3.5);
\draw[->,thick] (-5.2,0.1) arc (-90:-60:1.5);
\draw[->, thick] (-2.25,-3.4919) arc (-60.0005:-45:2.3);
\node at (-4.7,-0.2) {$\theta_1$};
\node at (-1.8,-3.7) {$\theta_2$};
\node at (-3.7,0.2) {$l_1$};
\node at (-1.5,-2.5) {$l_2$};
\end{tikzpicture}}
        \caption{The forward kinematics of the double pendulum described in \cite[]{underactuated} can be described in the form \eqref{E: gen forward kin}.}
    \end{figure}
    
    The pose of the tip of the second pendulum can be written as:
    \begin{multline*}
    \left[
        \begin{array}{ c | c}
        R(\theta) &
        \begin{array}{c}
            p_{x}(\theta) \\
            p_{y}(\theta) 
            \end{array} 
            \\
            \hline
            0 & 1
        \end{array}
        \right]
        =
        \begin{bmatrix}
            \cos(\theta_{1}) & \sin(\theta_{1}) & 0 \\
            \sin(\theta_{1}) & -\cos(\theta_{1}) & 0\\
            0 & 0 & 1
        \end{bmatrix}
        \begin{bmatrix}
            1 & 0 & 0 \\
            0 & 1 & l_{1} \\
            0 & 0 & 1
        \end{bmatrix}
        *\\
        \begin{bmatrix}
            \cos(\theta_{2}) & \sin(\theta_{2}) & 0 \\
            \sin(\theta_{2}) & -\cos(\theta_{2}) & 0\\
            0 & 0 & 1
        \end{bmatrix}
        \begin{bmatrix}
            1 & 0 & 0 \\
            0 & 1 & l_{2} \\
            0 & 0 & 1
        \end{bmatrix}
    \end{multline*}
    The difference in the sign of the trigonometric part ensures that the $y$-axis is pointing down. Expanding out this product enables us to write the $x$ coordinate of the tip of the system as:
    \begin{align*}
        p_{x}(\theta_{1}, \theta_{2}) &=
        l_{2}(\sin(\theta_{2})\cos(\theta_{1}) - \sin(\theta_{1})\cos(\theta_{2})) + l_{1} \sin(\theta_{1})\\
        p_{y}(\theta_1, \theta_2) &=l_2(\sin(\theta_1)\sin(\theta_2) + \cos(\theta_1)\cos(\theta_2)) - l_1\cos(\theta_1)
    \end{align*}
    Notice that these are multilinear trigonometric polynomials, i.e. no term contains $\cos(\theta_{i})\sin(\theta_{i})$. We can perform the substitution given in \eqref{E: rational sub} to express the position as a rational function:
    \begin{align*}
        p_{x}(t_{1}, t_{2}) &=
        \frac{2l_{2}(t_{2}(1-t_{1})^{2} - t_{1}(1-t_{2})^{2}) + 2l_{1} t_{1}(1+t_{2})^{2}}
        {(1+t_{1}^{2})(1+t_{2}^{2})}\\
        p_y(t_1, t_2) &= \frac{l_2(4t_1t_2+(1-t_1)^2(1-t_2)^2) - l_1(1-t_1)^2(1+t_2)^2}{(1+t_1)^2(1+t_2)^2}
    \end{align*}
    .
\end{example}

\section{Certification of Set-Membership in TC-Free} \label{S: Certification}
In this section, we will consider the problem of certifying the non-collision of two convex bodies $\calA$ and $\calB$ whose poses in task space are a function of the configuration of our robot. While programs \eqref{E: sep hyperplane generic} and \eqref{E: intersection via same point generic} can be used to certify non-collision between $\calA$ and $\calB$ for any fixed configuration, they are insufficient to certify $\calA$ and $\calB$ do not intersect for all configurations in an entire region $\calP$ of the configuration space. Therefore, in Sections \ref{S: Hyperplane Cert} and \ref{S: infeasible non-collision cert}, we will show how to combine the ingredients of Section \ref{S: Background} to generalize programs \eqref{E: sep hyperplane generic} and \eqref{E: intersection via same point generic}. 

The presence of trigonometric functions when the forward kinematics are expressed in the variable $q$ precludes using SOS programming, our tool of choice. Therefore, we will assume that $\calA(s)$ and $\calB(s)$ are convex sets in task space with their poses expressed as \emph{rational functions} in the TC-space variable $s$. This can be achieved using the developments in Section \ref{S: Rat Forward}. Our objective will be to certify that $\calA(s)$ and $\calB(s)$ do not intersect for all $s \in \calP = \{s \mid Cs \leq d\} \subseteq \calP_{lim} = \{s \mid s_{l} \leq s \leq s_{u}\}$. 

Under these assumptions, the generalizations of \eqref{E: sep hyperplane generic} and \eqref{E: intersection via same point generic} will respectively take the form of certifying a polynomial implication and certifying the emptiness of a basic-semialgebraic set. We give a formulation of each as a SOS program. We will conclude in Section \ref{S: cert power} by proving that feasibility of our convex optimization programs is both necessary and sufficient for $\calP$ to be collision-free.

\subsection{Parametrized Hyperplane Certificates of Non-Collision} \label{S: Hyperplane Cert}
In this section, we generalize \eqref{E: sep hyperplane generic} and use SOS to search for a polynomial family of hyperplanes parametrized by the TC-space variable $s$ which will certify the non-collision of $\calA(s)$ and $\calB(s)$ for all $s \in \calP = \{s \mid Cs \leq d\}$.

We begin by remarking that even if $\calA(s)$ and $\calB(s)$ do not collide for all $s \in \calP$, there may not be a single, static hyperplane $\calH = (a, b)$ which certifies this fact. An example of this can be seen in Figure \ref{F: svm}.

\begin{figure}[htb]
    \centering
    \scalebox{1.}{
\input{figures/tikz_sketches/cone_sketch/svm.tikz}
    }
    \caption{The convex collision geometries $\calA(s)$ and $\calB(s)$ are collision-free if and only if there exists a family of hyperplanes $\calH(s)$ separating the two for each configuration $s_{0}$. The planes act as a certificate of non-collision.}
    \label{F: svm}
\end{figure}

We therefore will look for a \emph{polynomial family} of hyperplanes $\calH(s) = \{x \mid a(s)^{T}x + b(s)=0\}$ parametrized by our TC-space variable $s$. Inspection of Table \ref{Tab: shape conditions table} shows that we must generalize
\begin{gather}
   s \in \calP \implies  a^{T}(s) \ \leftidx{^F}p^{v}(s) + b(s) \geq 1,  \label{E: polynomial polytope point in set}
   \end{gather}
   for particular points $v$ specific to each of the geometries, and 
 \begin{gather}
    s \in \calP \implies  a^{T}(s) \ \leftidx{^F}p^{o}(s) + b(s) \geq r\norm{a(s)} \label{E: polynomial round in set},
\end{gather}
for center $o$ if $\calA(s)$ is either a sphere or capsule. The generalization of the conditions for the cylinder are similar to those of the sphere and capsule, and so we defer its complete derivation to Appendix \ref{A: cylinder matrix sos}.

To generalize \eqref{E: polynomial polytope point in set} and \eqref{E: polynomial round in set}, we recall that the position of any point $A \in \calA(s)$ (and similarly $\calB(s)$) can be expressed as a rational function $\leftidx{^F}p^{A}(s) = \frac{\leftidx{^F}f^{A}(s)}{\leftidx{^F}g^{A}(s)}$ where $\leftidx{^F}g^{A}(s) > 0$.

Therefore, we can express \eqref{E: polynomial polytope point in set} as:
\begin{align}
    s \in \calP \implies a^{T}(s) \ \leftidx{^F}f^{v}(s) + (b(s)-1) \ \leftidx{^F}g^{v}(s)  \geq 0 \label{E: polynomial polytope point in set 2}
\end{align}
This is an polynomial implication of the form \eqref{E: poly positive implication}. As $\calP \subseteq \calP_{lim}$ is compact polytope, $\calP$ is Archimedean \cite[Theorem 7.1.3]{marshall2008positive} and so we can use Theorem \ref{T: Putinar} to express condition \eqref{E: polynomial polytope point in set} as:
\begin{align} \label{E: polytope separation psatz condition}
    a^{T}(s) \ \leftidx{^F}f^{v}(s)  + (b(s)-1) \ \leftidx{^F}g^{v}(s) 
    =
    \lambda_{01}(s) +  \sum_{j=1}^{m} \lambda_{j1}(s)(d_{j}-c^{T}_{j}s) 
\end{align}
where $\lambda_{j1}, j=0,\hdots, m$ are all SOS polynomials.

The condition \eqref{E: polynomial round in set}, can be expressed as a polynomial, matrix inequality using the Schur complement\footnote{
We have that $\gamma \ge r \norm{a}$ if and only if the Schur complement $\begin{bmatrix}\gamma I_3 &ra\\ ra^T & \gamma\end{bmatrix}\succeq 0$.
} 
\cite[]{boyd2004convex}
\begin{align} \label{E: shur complement implication}
    s \in \calP \implies
    \begin{bmatrix}
        \left((a(s))^T \ \leftidx{^F}f^{o}(s)+b(s) \ \leftidx{^F}g^{o}(s)\right)I_{3} & ra(s) \ \leftidx{^F}g^{o}(s) \\ r(a(s))^T \ \leftidx{^F}g^{o}(s) & (a(s))^T \ \leftidx{^F}f^{o}(s)+b(s) \ \leftidx{^F}g^{o}(s)
    \end{bmatrix}
    \succeq 0.
\end{align}
This is known as a \emph{matrix SOS} condition which can be represented as a set of semidefinite constraints \cite[]{nie2011polynomial}. Specifically, by introducing a vector auxillary variable $u$, we can write \eqref{E: shur complement implication} as:
\begin{multline}
s \in \calP, u^{T}u = 1 \implies
\\
u^{T}
    \begin{bmatrix}
        \left(a^{T}(s) \ \leftidx{^F}f^{o}(s)+b(s) \ \leftidx{^F}g^{o}(s)\right)I_{3} & ra(s) \ \leftidx{^F}g^{o}(s) \\ r(a(s))^T\ \leftidx{^F}g^{o}(s) & (a(s))^T \ \leftidx{^F}f^{o}(s)+b(s) \ \leftidx{^F}g^{o}(s)
    \end{bmatrix}
    u
    \geq 0
\end{multline}
which can be expressed as the SOS condition:
\begin{multline}\label{E: psatz round}
u^{T}
    \begin{bmatrix}
        \left((a(s))^T \ \leftidx{^F}f^{o}(s)+b(s) \ \leftidx{^F}g^{o}(s)\right)I_{3} & ra(s) \ \leftidx{^F}g^{o}(s) \\ r(a(s))^T\ \leftidx{^F}g^{o}(s) & (a(s))^T \ \leftidx{^F}f^{o}(s)+b(s) \ \leftidx{^F}g^{o}(s)
    \end{bmatrix}
    u = 
    \\
    \lambda_{02}(u,s) + \sum_{j=1}^{m} \lambda_{j2}(u,s)(d_{j}-c^{T}_{j}s) + \phi(u,s)(1-u^{T}u)
\end{multline}
where $\lambda_{j2}$ are all SOS polynomials, and $\phi \in \setR[u,s]$. We introduce the additional equality $u^Tu = 1$ to make the set $\{(u, s) | s\in\calP, u^Tu=1\}$ an Archimedean set.

We are now ready to describe our convex program certifying that $\calP$ is a region of TC-space containing no collision. For each pair of bodies $\calA(s)$ and $\calB(s)$ which can collide in the scene, we search for a polynomial hyperplane via the optimization program:
\begin{subequations}\label{E: cert by hyperplane poly}
\begin{gather} 
    {\forall~ \text{pairs } \calA,\calB} ~\find a_{\calA,\calB}, b_{\calA,\calB}
    ~~\textbf{subject to}
    \\
    \forall~s \in \calP, ~ a^{T}_{\calA, \calB}(s)x + b_{\calA, \calB}(s) > 0, ~\forall x \in \calA(s) \label{E: cert by hyperplane poly A}
    \\
    \forall~s \in \calP, ~  a^{T}_{\calA, \calB}(s)y + b_{\calA, \calB}(s) < 0 ~\forall y \in \calB(s) \label{E: cert by hyperplane poly B}
    \\
    \lambda_{ij}^{\calA, \calB}(u,s),~ \mu_{ij}^{\calA, \calB}(u,s) \in \bSigma, ~ \phi^{\calA, \calB}(u,s),~ \chi^{\calA, \calB}(u,s) \in \setR[u,s] \label{E: cert by hyperplane multiplier constraint}
\end{gather}
\end{subequations}
where $(a_{\calA, \calB}(s),b_{\calA, \calB}(s))$ are the parameters of the polynomial hyperplane separating $\calA$ and $\calB$, the polynomials $\lambda_{ij}^{\calA, \calB}(s)$ and $\phi^{\calA, \calB}(s)$ collect all the multiplier polynomials for enforcing \eqref{E: cert by hyperplane poly A}, and $\mu_{ij}^{\calA, \calB}(s)$ and $\chi^{\calA, \calB}(s)$ collect all the multiplier polynomials for enforcing \eqref{E: cert by hyperplane poly B} by using \eqref{E: polytope separation psatz condition} and \eqref{E: psatz round} depending on the geometry of $\calA$ and $\calB$. We stress in the above program that the decision variables are the \emph{coefficients} of the polynomials $a_{\calA, \calB}$, $b_{\calA, \calB}$, and the multiplier polynomials. The symbols $u$ and $s$ are known as \emph{indeterminates} and are not explicitly searched over.

In Table \ref{Tab: poly point in set}, we summarize the conditions for enforcing \eqref{E: cert by hyperplane poly A} and \eqref{E: cert by hyperplane poly B} for common families of sets. We call a feasible solution to \eqref{E: cert by hyperplane poly} a \emph{certificate} for the polytope $\calP$ which we denote:
\begin{align}\label{E: hyperplane certificate}
    \calC_{\calP} = \bigcup_{(\calA, \calB)} \{a_{\calA, \calB}(s),~b_{\calA, \calB}(s),~\lambda_{ij}^{\calA, \calB}(u, s),~\phi^{\calA, \calB}(u, s), ~\mu_{ij}^{\calA, \calB}(u, s),~\chi^{\calA, \calB}(u, s)\}
\end{align}

\begin{table}[htb]
\centering
    \begin{tabular}{p{2.1in} | p{3.0in} }
    \centering{Body} & 
    \Centering{Psatz Condition for \eqref{E: cert by hyperplane poly A}} 
    \\
    \hline
    \begin{tabular}{p{2in}}
    \vspace{0.25em}
    \RaggedRight{
    V-rep Polytope with $m$ vertices $v_{i}$ at position 
    $
    \leftidx^{F}p^{v_{i}}(s) =
    \frac{\leftidx^{F}f^{v_{i}}(s)}
    {\leftidx^{F}g^{v_{i}}(s)}
    $
    }
    \vspace{0.5em}
    \end{tabular}
    & 
    \Centering{Enforce \eqref{E: polytope separation psatz condition} for each vertex $v_{i}$.}
    \\
    \hline
    \begin{tabular}{p{2in}}
    \vspace{0.25em}
    \RaggedRight{Sphere with center $o$ at position $\leftidx^{F}p^{o}(s) = \frac{\leftidx^{F}f^{o}(s)}{\leftidx^{F}g^{o}(s)}$ and radius $r$}
    \vspace{0.5em}
    \end{tabular}
    & 
    \Centering{Enforce \eqref{E: psatz round} for the center $o$ with radius $r$. Also enforce \eqref{E: polytope separation psatz condition} for the center $o$.}
    \\
    \hline
    \begin{tabular}{p{2in}}
    \vspace{0.25em}
        \RaggedRight{Capsule, the convex hull of two spheres with centers 
$o_{1}$ and $o_{2}$ at positions $\leftidx^{F}p^{o_{i}}(s) = \frac{\leftidx^{F}f^{o_{i}}(s)}{\leftidx^{F}g^{o_{i}}(s)}$ and radii $r_{1}$, $r_{2}$
        }
        \end{tabular}
        \vspace{0.5em}
        &
        \Centering{
        For $i \in \{1,2\}$ enforce \eqref{E: psatz round} for center $o_{i}$ with radius $r_{i}$. Also enforce \eqref{E: polytope separation psatz condition} for $o_{i}$.
        }
        \\
        \hline
\begin{tabular}{p{2in}}
 \vspace{0.25em}
\RaggedRight{Cylinder, the convex hull of two circles with centers $o_{1}$ and $o_{2}$, at position $\leftidx^{F}p^{o_{i}}(s) = \frac{\leftidx^{F}f^{o_{i}}(s)}{\leftidx^{F}g^{o_{i}}(s)}$, lying in the plane normal to $\leftidx^{F}p^{o_{1}}(s) - \leftidx^{F}p^{o_{2}}(s)$, and with radii $r_{1}$ and $r_{2}$.}
        \end{tabular}
        \vspace{0.5em}
        &
        \Centering{
        See Appendix \ref{A: cylinder matrix sos}.
        }
    \end{tabular}
    \caption{SOS conditions for the constraint \eqref{E: cert by hyperplane poly A} and \eqref{E: cert by hyperplane poly B} depending on the geometry of bodies $\calA$ and $\calB$.}
    \label{Tab: poly point in set}
\end{table}


\subsection{Polynomial Infeasibility Certificates} \label{S: infeasible non-collision cert}
As we remarked in section \ref{S: separating convex bodies}, non-collision of two convex shapes $\calA$ and $\calB$ can be checked by certifying the \emph{infeasibility} of \eqref{E: intersection via same point generic}. The infeasibility of \eqref{E: intersection via same point generic} can be extended to the case when the locations of $\calA(s)$ and $\calB(s)$ are a function of $s$. 
\begin{subequations}\label{E: dual psatz poly cert abstract}
    \begin{gather} 
    \textbf{Certify that } \nexists~ s \in \calP,~ x, y \in \setR^{3} \textbf{ such that} \\
    x \in \calA(s), y \in \calB(s) \label{E: in set constraint generic poly}
    \\
    x = y \label{E: same point constraint generic poly}
\end{gather}
\end{subequations}

An equivalent, and perhaps more instructive, way of expressing \eqref{E: dual psatz poly cert abstract} is to consider the set
\begin{align} 
    \calS_{\calP, \calA, \calB} &= \{x, s \mid s \in \calP,~x \in \calA(s),~x \in \calB(s)\} \label{E: set to prove infeasible}
    \\
    &=
    \left\{x,~ s,~ u_{\calA},~ u_{\calB~}~ \middle |~
    \begin{gathered}
    Cs \leq d,
    \\
    \gamma_{i}^{\calA}(s, x, u_{\calA}) \geq 0,
    ~ h_{j}^{\calA}(s,x, u_{\calA}) = 0
    \\
    \gamma_{k}^{\calB}(s, x, u_{\calB}) \geq 0,
    h_{l}^{\calB}(s, x, u_{\calB}) = 0,
    \\
    ~i \in [n_{\calA}], ~j \in [m_{\calA}],
    ~k \in [n_{\calB}], ~l \in [m_{\calB}]
    \end{gathered}
    \right\}
    \label{E: set to prove infeasible explicit}
\end{align}
and to consider the problem
\begin{align} \label{E: opt prove infeasible abstract}
\textbf{Certify that } \calS_{\calP, \calA, \calB} = \emptyset, 
\end{align}

In \eqref{E: set to prove infeasible explicit}, $\gamma_{i}^{\calA}(s, x, u_{\calA})$ and $h_{j}^{\calA}(s, x, u_{\calA})$ are the polynomials encoding the condition that $x \in \calA(s)$ and $u_{\calA}$ collects any extra variables needed to write this condition. Similarly, $u_{\calB}$, $\gamma_{k}^{\calB}(s, x, u_{\calB})$, and $h_{l}^{\calB}(s, x, u_{\calB})$ encode that $x \in \calB(s)$. We provide explicit expressions for $\gamma_{i}^{\calA}, \gamma_{k}^{\calB}$ and $h_j^{\calA}, h_l^{\calB}$ in  Table \ref{Tab: shape conditions table poly} (given in Appendix \ref{A: Semialgebraic Set Memebership}) for a few common geometries.

\begin{example}
If $\calA$ is a polytope with $n_{\calA}$ vertices given by $v_{\calA_{i}}$, and $\calB$ is a sphere with center $o_{\calB}$ and radius $r_{\calB}$, then we can write
\begin{align*}
\calS_{\calP, \calA, \calB}
    &=
    \left\{x,~ s,~ \mu_{\calA_{i}} ~\middle |~
    \begin{gathered}
    Cs \leq d,
    \\
    \left(\prod_{i}\leftidx{^F}g^{v_{\calA_i}}\right) \left(x-
         \sum_{i=1}^{m} \mu_{\calA_i}\left(\frac{\leftidx^{F}f^{v_{\calA_i}}(s)}{\leftidx^{F}g^{v_{\calA_i}}(s)} \right)\right) = 0 ,
         \\
    1 - \sum_{i=1}^{m} \mu_{\calA_i}    = 0,
    \\
    \mu_{\calA_i} \geq 0 ~ \forall ~ i \in [n_{\calA}],
    \\
    \left(\leftidx^{F}g^{o_{\calB}}(s)\right)^{2}
    \left(r_{\calB}^{2}- \norm{x - \frac{\leftidx^{F}f^{o_{\calB}}(s)}{\leftidx^{F}g^{o_{\calB}}(s)}}^{2}\right)
    \geq 0
    \end{gathered}
    \right\}
\end{align*}
\end{example}

Now, we note that $\calS_{\calP, \calA, \calB}$ is an Archimedean set. This implies that we can use Theorem \ref{T: Putinar Dual} to write \eqref{E: opt prove infeasible abstract} as an optimization problem. Denoting $u = \{u_{\calA}, u_{\calB}\}$, this can be written explicitly as

\begin{subequations}\label{E: dual psatz poly cert}
\begin{gather} 
    \find \lambda_{0},~\lambda_{j}^{\calP},~ \lambda_{j}^{\calA},~ \lambda_{j}^{\calB}, ~ \phi_{k}^{\calA},~ \phi_{k}^{\calB}
    \\
    \begin{multlined}
    -1 =
    \lambda_{0}(s,x,u) +  \sum_{j=1}^{n}\lambda_{j}^{\calP}(s,x,u)(d_{j} - c^{T}_{j}s)
    + \\
    \sum_{i = 1}^{n_{\calA}}
    \lambda_{i}^{\calA}(s,x,u)\gamma_{i}^{\calA}(s,x,u_{\calA}) +
    \sum_{j = 1}^{m_{\calA}}
    \phi_{j}^{\calA}(s,x,u)h_{j}^{\calA}(s,x,u_{\calA}) 
    + \\
    \sum_{l = 1}^{n_{\calB}}
    \lambda_{l}^{\calB}(s,x,u)\gamma_{l}^{\calB}(s,x,u_{\calB}) +
    \sum_{k = 1}^{m_{\calB}}
    \phi_{k}^{\calB}(s,x,u)h_{k}^{\calB}(s,x,u_{\calB}) 
    \end{multlined}
    \label{E: dual psatz poly cert -1 constraint}
    \\
    \lambda_{0},~\lambda_{j}^{\calP},~ \lambda_{i}^{\calA},~ \lambda_{l}^{\calB} \in \bSigma
    \label{E: dual psatz poly cert psd constraint}
    \\
    \phi_{j}^{\calA},~ \phi_{k}^{\calB} \in \setR[s,x,u]
    \label{E: dual psatz poly cert free poly constraint}
\end{gather}
\end{subequations}

 We again emphasize that in program \eqref{E: dual psatz poly cert} the decision variables are the coefficients of $\lambda_{0},~ \lambda_{j}^{\calP},~  \lambda_{i}^{\calA},~ \lambda_{l}^{\calB},~ \phi_{j}^{\calA},~ \text{and }\phi_{k}^{\calB}$, while the symbols $\{x,s,u\}$ are not decision variables but rather polynomial indeterminates. Similar to the program in \eqref{E: cert by hyperplane poly}, a certificate of non-collision can be obtained by solving \eqref{E: dual psatz poly cert} for each pair $(\calA, \calB)$ with the multipliers acting as the certificate.

\subsection{Power of the Certification Programs} \label{S: cert power}
In this section, we consider the power of both certification programs. Specifically, in Sections \ref{S: Hyperplane Cert} and \ref{S: infeasible non-collision cert} we argued that feasibility of \eqref{E: cert by hyperplane poly} and \eqref{E: dual psatz poly cert} are sufficient to prove that $\calP$ is collision-free. In this section, we present two theorems showing that the feasibility of these programs is also \emph{necessary}.

 Such a result is important given the fact that as stated, \eqref{E: cert by hyperplane poly} and \eqref{E: dual psatz poly cert} are infinite dimensional and therefore in practice must be solved by selecting a basis of finite degree for the polynomials. Other subtleties about the power of our formulation are discussed in Appendix \ref{A: necessary and sufficient}. Fortunately, we can prove that there do exist finite degrees such that both programs become feasible when $\calP$ is truly collision-free.

\begin{theorem}\label{T: hyperplane poly cert is always feasible}
Let all multiplier polynomials from \eqref{E: cert by hyperplane poly} have degree at least $\rho$ and let all of the polynomials in the parameterization of the hyperplane have degree at least $\kappa$. Suppose $\calP \subseteq \calP_{lim}$ is a subset of TC-free. 

Then there exists finite $\kappa$ and $\rho$ sufficiently large such that \eqref{E: cert by hyperplane poly} is feasible.
\end{theorem}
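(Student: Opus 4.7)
The plan is to produce a polynomial separating hyperplane family from first principles and then invoke Putinar's Positivstellensatz (Theorem \ref{T: Putinar}) at sufficiently high degree to extract the SOS multipliers. The argument is carried out for a single colliding pair $(\calA, \calB)$; the overall degrees $\kappa, \rho$ are then obtained as the maximum over the finitely many pairs in the scene.

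The first step extracts a uniform separation margin from compactness. Because the kinematic map $s \mapsto (\leftidx{^F}p^{\calA}(s), \leftidx{^F}p^{\calB}(s))$ is rational with strictly positive denominator, hence continuous on $\calP_{lim}$, and because $\calP \subseteq \calP_{lim}$, $\calA(s)$, and $\calB(s)$ are all compact, the distance function $d(s) := \mathrm{dist}(\calA(s), \calB(s))$ is continuous on $\calP$. Since $\calP \subseteq$ TC-free, $d(s) > 0$ for every $s \in \calP$, so by compactness $\epsilon := \min_{s \in \calP} d(s) > 0$. Likewise the collision geometries stay in some uniformly bounded region of task space.

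The second step builds a continuous separating-hyperplane family. For each $s$ the set of pairs $(a, b)$ with $\|a\| \leq 1$ that strictly separate $\calA(s)$ and $\calB(s)$ with margin at least $\epsilon/2$ is nonempty, compact, and convex. Because $\calA(s), \calB(s)$ depend continuously on $s$ (in the Hausdorff metric), the induced set-valued map is lower semicontinuous, and Michael's selection theorem yields a continuous selection $s \mapsto (a^\star(s), b^\star(s))$ satisfying $a^\star(s)^T x + b^\star(s) \geq \epsilon/2$ for all $x \in \calA(s)$ and $a^\star(s)^T y + b^\star(s) \leq -\epsilon/2$ for all $y \in \calB(s)$. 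Stone--Weierstrass then supplies, for any $\delta>0$, polynomial approximations $(\tilde a(s), \tilde b(s))$ of some finite degree $\kappa$ with $\|a^\star - \tilde a\|_\infty + \|b^\star - \tilde b\|_\infty < \delta$ on $\calP$; choosing $\delta$ small relative to the uniform bound on $\|x\|$ and rescaling, we preserve strict separation and can normalize to $\tilde a(s)^T x + \tilde b(s) \geq 1$ on $\calA(s)$ and $\tilde a(s)^T y + \tilde b(s) \leq -1$ on $\calB(s)$, uniformly for $s \in \calP$.

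The third step invokes Putinar's Positivstellensatz. For a polytopic body with rational vertex position $\leftidx{^F}p^{v}(s) = \leftidx{^F}f^{v}(s)/\leftidx{^F}g^{v}(s)$, the polynomial $\tilde a(s)^T \leftidx{^F}f^{v}(s) + (\tilde b(s)-1)\leftidx{^F}g^{v}(s)$ is strictly positive on the Archimedean set $\calP$, so Theorem \ref{T: Putinar} gives a representation of the form \eqref{E: polytope separation psatz condition} with SOS multipliers $\lambda_{j1}$ of some finite degree. For spheres, capsules, and cylinders we apply the same theorem to the Archimedean set $\{(s, u) : s \in \calP, \ u^Tu = 1\}$ with the scalarized matrix polynomial appearing inside \eqref{E: psatz round}, which again is strictly positive by the choice of $\tilde a, \tilde b$ and strict separation margin; this yields the multipliers $\lambda_{j2}, \phi$ at some finite degree. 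Combining the degrees obtained across the finitely many colliding pairs and both geometric cases produces a common finite $\rho$ at which all SOS constraints in \eqref{E: cert by hyperplane poly} can be satisfied simultaneously.

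The main obstacle is the second step: constructing a continuous selection of separating hyperplanes. Bodies such as polytopes are not strictly convex, so closest-point constructions need not be unique or continuous, and one must appeal to a convex-valued selection result (Michael's theorem, applied to the uniformly thick "margin-$\epsilon/2$" slab of separating pairs) rather than an explicit formula. Once this selection is secured, both the polynomial approximation and the application of Putinar's Positivstellensatz are routine, and the asserted finite $\kappa, \rho$ follow immediately.
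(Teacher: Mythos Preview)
Your proposal is correct and follows essentially the same three-stage architecture as the paper's proof: (i) produce a \emph{continuous} family of strictly separating hyperplanes over $\calP$, (ii) approximate it by a polynomial family via Stone--Weierstrass, and (iii) invoke Putinar's Positivstellensatz on the resulting strictly positive polynomials to obtain finite-degree SOS multipliers. The only substantive difference is in how step (i) is carried out. The paper argues directly that the set $\Phi(s)$ of strict separators is open, that nearby configurations share an open set of common separators, and hence that the class $\calF$ of continuous selections is nonempty and open in the sup metric; density of polynomials in $\calF$ then yields a polynomial member. You instead package the selection step into Michael's theorem applied to the convex-valued correspondence of margin-$\epsilon/2$ separators with $\|a\|\le 1$. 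This is a cleaner and more standard route, and it makes explicit the uniform margin that the paper's argument uses implicitly; the only thing you gloss over is the verification of lower hemicontinuity, which follows because any plane strictly in the interior of the margin set at $s_0$ remains a valid separator for all nearby $s$ by continuity of $\calA(s),\calB(s)$ in the Hausdorff metric. Either way, once a continuous selection is in hand, the Stone--Weierstrass and Putinar steps are identical to the paper's.
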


 A similar theorem can be stated for the program in \eqref{E: dual psatz poly cert}. 
 
\begin{theorem}\label{T: dual psatz poly cert is always feasible}
Let $\calP \subseteq \calP_{lim}$ be a compact, polytopic subset of TC-free and let all multiplier polynomials from \eqref{E: dual psatz poly cert} have degree at least $\rho$. There exists a finite $\rho$ sufficiently large such that \eqref{E: dual psatz poly cert} is feasible.
\end{theorem}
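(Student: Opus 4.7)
The plan is to derive Theorem~\ref{T: dual psatz poly cert is always feasible} as a direct application of the Putinar dual Theorem~\ref{T: Putinar Dual} to each pair of colliding bodies $(\calA, \calB)$ in the scene. Program~\eqref{E: dual psatz poly cert} is literally the polynomial identity~\eqref{E: Putinar Dual Psatz} written out for the set $\calS_{\calP, \calA, \calB}$ defined in~\eqref{E: set to prove infeasible explicit}, so feasibility at some finite degree $\rho$ will follow once we verify the two hypotheses of Theorem~\ref{T: Putinar Dual}: (i) $\calS_{\calP, \calA, \calB}$ is Archimedean, and (ii) $\calS_{\calP, \calA, \calB}$ is empty whenever $\calP \subseteq \calP_{lim}$ is a collision-free polytopic subset of TC-space.

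Property (ii) is almost immediate from the construction. If $\calP$ is collision-free then for every $s \in \calP$ the bodies $\calA(s)$ and $\calB(s)$ do not intersect, so no $x \in \setR^{3}$ can simultaneously lie in both. Recalling that the polynomial constraints $\gamma_{i}^{\calA}, h_{j}^{\calA}$ together with the auxiliary variable $u_{\calA}$ encode exactly the predicate ``$x \in \calA(s)$'' (and analogously for $\calB$, cf.\ Table~\ref{Tab: shape conditions table poly}), any feasible tuple $(s, x, u_{\calA}, u_{\calB}) \in \calS_{\calP, \calA, \calB}$ would project to a common point of $\calA(s)$ and $\calB(s)$, a contradiction.

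Property (i) is the more delicate step and I expect it to be the main obstacle, since the naive constraint set in~\eqref{E: set to prove infeasible explicit} does not a priori bound every decision variable. I would verify boundedness block by block. The TC-space variable $s$ lies in the compact polytope $\calP$. The point $x$ is bounded because $\calA$ is compact convex and its pose is a rational function of $s$ whose denominator $\leftidx{^F}g^{A}(s)$ is strictly positive on the compact $\calP$; hence $\bigcup_{s \in \calP}\calA(s)$ is compact, and likewise for $\calB$. The auxiliary variables $u_{\calA}, u_{\calB}$ arising from the set-membership parameterizations of Table~\ref{Tab: shape conditions table poly} are intrinsically bounded: the convex-combination weights $\mu_{i}$ lie in $[0,1]$ for V-rep polytopes, capsules, and cylinders, and the radial displacement $v$ appearing in the cylinder description satisfies $\|v\| \leq \max(r_{1}, r_{2})$. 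Wherever such a bound is not already explicit in the semialgebraic description, I would append the redundant inequality $R^{2} - \|(x, u)\|^{2} \geq 0$ for $R$ sufficiently large; this does not change $\calS_{\calP, \calA, \calB}$ but exhibits the quadratic module as Archimedean in the sense of Appendix~\ref{A: Archimedean}.

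With (i) and (ii) in hand, Theorem~\ref{T: Putinar Dual} guarantees the existence of SOS polynomials $\lambda_{0}, \lambda_{j}^{\calP}, \lambda_{i}^{\calA}, \lambda_{l}^{\calB}$ and free polynomials $\phi_{j}^{\calA}, \phi_{k}^{\calB}$ satisfying~\eqref{E: dual psatz poly cert -1 constraint}. Each such witness has some finite total degree; let $\rho^{\star}(\calA, \calB)$ denote the maximum degree among them, and set $\rho^{\star} = \max_{(\calA, \calB)} \rho^{\star}(\calA, \calB)$ over the finitely many pairs of bodies. Any choice $\rho \geq \rho^{\star}$ makes the SOS ansatz in~\eqref{E: dual psatz poly cert} expressive enough to contain this witness, so the program is feasible, proving the theorem.
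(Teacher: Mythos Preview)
Your proposal is correct and follows essentially the same route as the paper: verify that $\calS_{\calP,\calA,\calB}$ is empty and Archimedean, then invoke Theorem~\ref{T: Putinar Dual}. The paper's proof is terser---it simply asserts that the assumptions on $\calP$, $\calA$, $\calB$ make $\calS_{\calP,\calA,\calB}$ Archimedean and then cites ``effective'' Positivstellensatz results (Nie, Baldi--Mourrain) that supply explicit degree bounds---whereas you work out the boundedness of each variable block and note that appending a redundant ball constraint secures the Archimedean condition; your observation that any Putinar witness already has finite degree makes the appeal to effective bounds unnecessary for the bare existence claim.
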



We delay the proofs and further discussion of these results to Appendix \ref{A: necessary and sufficient}. For now, we simply remark that
Theorems \ref{T: hyperplane poly cert is always feasible} and \ref{T: dual psatz poly cert is always feasible} assert that the certification programs presented in this section are both complete in the sense that any collision-free polytope $\calP$ can be certified with our technique.

\section{Polyhedral Decomposition of TC-free} \label{S: Bilinear Alternation}
In this section, we describe our algorithm for rapidly generating certified, polyhedral decomposition of TC-free. Our algorithm can be seen as a generalization of the \Iris algorithm of \cite[]{deits2015computing} to non-convex TC-space obstacles and so we name it \CIris (Configuration-Space, Iterative Regional Inflation by Semidefinite programming). The key idea is to iteratively grow certified convex polytopes of increasing size around various important configurations in the TC-space. This is achieved by solving a series of convex optimization programs. The complete algorithm is summarized in Algorithm \ref{Alg: Bilinear Alternation}.

We begin by discussing how we will measure the size of our polytope $\calP=\{s \mid Cs\le d\}$. While it may be attractive to measure the size of a polytope by its volume, it is known that computing the volume of a half-space representation (H-Rep) polytope is \#P-hard\footnote{\#P-hard problems are at least as hard as NP-complete problems \cite[]{provan1983complexity}.}
\cite[]{dyer1988complexity}
and therefore intractable as an objective. A useful surrogate for the volume of $\calP$ used in \cite[]{deits2015computing} is the volume of the maximum volume inscribed ellipse of $\calP$: the set $\calE_{\calP} = \{Qs + s_{0}\mid \norm{s}_2 \leq 1\}$ where $Q$ is a positive-semidefinite matrix describing the shape of the ellipsoid and $s_{0}$ its center. The problem of finding the maximum volume inscribed ellipsoid in a polytope is a semidefinite program described in \cite[Section 8.4.2]{boyd2004convex}.
\begin{subequations}\label{E: max inscribed ellipse in polytope}
\begin{gather}
\bmax_{Q, s_0}~ \logdet Q ~\subjectto 
    \\
    \norm{Qc_{i}}_{2} \leq d_{i} - c_{i}^Ts_{0} ~~\forall~ i\in [m]
    \label{E: ellipse in polytope}
    \\
    Q \succeq 0 \label{E: ellipse psd}
\end{gather}
\end{subequations}

As we wish our polytopes to cover diverse areas of TC-free, we will grow each polytope $\calP$ around some nominal configuration $s_{s}$ we call the seed point. New seed points are typically chosen using rejection sampling to obtain a point outside of the existing certified regions. The polytope $\calP$ is required to contain $s_{s}$ as it grows.

A maximal volume, certified polytope around $s_{s}$ can be obtained by solving the following optimization program which combines the ellipsoidal program \eqref{E: max inscribed ellipse in polytope} with the certification program \eqref{E: cert by hyperplane poly} from Section \ref{S: Hyperplane Cert}.

\begin{subequations} \label{E: bilinear program}
\begin{gather}
    \bmax_{
    \substack{
    Q, s_0, C, d,
    \\
    \forall (\calA, \calB)
    \\
    \lambda_{ij}^{\calA, \calB},~ \phi^{\calA, \calB},
    \\
    \mu_{ij}^{\calA, \calB},~ \chi^{\calA, \calB}
    \\
    a_{\calA, \calB},~ b_{\calA, \calB}}}~ \logdet Q ~\subjectto 
    \\
    \eqref{E: ellipse in polytope}, \eqref{E: ellipse psd} \label{E: max inscribed ellipse in polytope constraints}
    \\
    Cs_{s} \leq d\label{E: cert with ellipse contain sample}
    \\ 
    ~\norm{c_{i}}_{2}  \leq 1 ~\forall~ i \in [m]
    \label{E: polytope scaling}
    \\
    \eqref{E: cert by hyperplane poly A},
    ~\eqref{E: cert by hyperplane poly B},
    ~\eqref{E: cert by hyperplane multiplier constraint}
    \label{E: poly sep condition}
    \end{gather}
    \label{E: cert with ellipse}
\end{subequations}
The condition $\calE_{\calP}\subset\calP$ is given by the constraints \eqref{E: max inscribed ellipse in polytope constraints}. Constraint \eqref{E: cert with ellipse contain sample} enforces that $\calP$ grows around $s_{s}$. The added constraint \eqref{E: polytope scaling} prevents numerically undesirable scaling. Finally, \eqref{E: poly sep condition} enforces that we search for hyperplanes $(a_{\calA, \calB}(s), b_{\calA, \calB}(s))$ which separate each collision pair $\calA(s)$ and $\calB(s)$.

While this program is attractive as a specification, it is not convex due to bilinearity between $Q$ and $c_i$ in \eqref{E: ellipse in polytope} and the bilinearity between the multipliers and the defining equations of $\calP$ implicit in \eqref{E: poly sep condition} (see Section \ref{S: Hyperplane Cert}). This bilinearity precludes simultaneous search of the polytope $\calP$, inscribed ellipsoid $\calE_{\calP}$, and the corresponding certificate $\calC_{\calP}$. Therefore, we will approximate the solution to \eqref{E: bilinear program} by alternating between two convex programs; one of which will generate certificates of non-collision and one which will improve our polytope without violating the previous certificate.

\begin{remark}
 It is possible to replace \eqref{E: poly sep condition} with the equivalent constraints from program \eqref{E: dual psatz poly cert}. We prefer to base our algorithm on \eqref{E: cert by hyperplane poly} as it can be visualized (i.e. planes in the task space) and the polynomials contain fewer indeterminates and hence the optimization problem size is smaller. Also the separating planes approach produces separating certificates with quantifiable margins by measuring the distance from the collision geometries to the plane in task space.
\end{remark}

We begin by demonstrating how a certified polytopic region can be improved. Suppose that a convex polytope $\calP = \{s|Cs \leq d\}$ has been certified with certificate $\calC_{\calP}$ and the maximum inscribed ellipse $\calE_{\calP}$ has been computed using \eqref{E: max inscribed ellipse in polytope}. A new, larger polytope $\calP'$ can be found by solving the convex optimization program \eqref{E: polytope growth program} which pushes the faces of $\calP'$ as far away from the surface of $\calE_{\calP}$ without violating the certificate $\calC_{\calP}$. This procedure is visualized in Figure \ref{F: pushback}.


\begin{figure}
    \centering
    \scalebox{1.1}{\begin{tikzpicture}[scale=0.45, every node/.style={scale=0.7}]

\draw[opacity=0.2] (-2.414,-1.7495) -- (-5.1,-0.4);
\draw[opacity=0.2] (-3.714,-1.9495) -- (1.2796,0.6253);
\draw[opacity=0.2] (-3.9825,1.4686) -- (0.4,3.3);
\draw[opacity=0.2] (-3.1516,2.0658) -- (-5.1,-1.2);
\draw[opacity=0.2] (-0.7355,3.6263) -- (2.2,1.5);
\draw[opacity=0.2] (1.7,2.7) -- (-0.1456,-0.9485);
\draw[thick, opacity = 0.2] (-4.7325,-0.5675) -- (-2.8677,-1.4952) -- (0.4168,0.1389) -- (1.413,2.0447) -- (0,3.1) -- (-3.2915,1.738)--cycle;
\draw[rotate = 30, opacity = 0.2 ]  (-1,1.5) ellipse (3 and 1.5);
\draw (-5.2914,3.146) -- (1.6,4.5);
\draw (-4.7,3.5) -- (-5.7,-2.2);
\draw (0.6,4.6) -- (3.1,2.8);
\draw (3.1,3.3) -- (1.4981,-0.9552);
\draw (2.1,0) -- (-2.6,-3);
\draw (-6.2,-1.6) -- (-1.792,-3.019);
\draw[<->] (-1.1324,3.9542) -- (-0.8531,2.7303);
\draw[<->] (1.7,3.8) -- (0.7239,2.5926);

\draw[<->] (-5.2982,0.086) -- (-4.3254,-0.071);
\draw[<->] (0.9986,1.2045) -- (2.0972,0.6987);
\draw[<->] (-3.728,-1.0843) -- (-4.1632,-2.2531);
\begin{scope}[shift = {(-10, 0)}]
\draw[EDB] (-5.3,4.8) -- (-4.9893,4.8767) -- (-4.0159,5.042) -- (-2.7487,5.2073) -- (-1.6651,5.244) -- (-0.9305,5.2257) -- (-0.0122,4.9686) -- (1.1815,4.5462) -- (2.1733,4.4543) -- (2.6,4.5) -- (2.6,6.4) -- (-5.3,6.4)--cycle;
\draw[very thick, opacity = 0.8]  plot[smooth, tension=.7] coordinates {(-5.3,4.8) (-1.4447,5.2808) (0.9243,4.6196) (1.9529,4.436) (2.5773,4.5462)};

\draw[opacity=0.4] (-2.414,-1.7495) -- (-5.1,-0.4);
\draw[opacity=0.4] (-3.714,-1.9495) -- (1.2796,0.6253);
\draw[opacity=0.4] (-3.9825,1.4686) -- (0.4,3.3);
\draw[opacity=0.4] (-3.1516,2.0658) -- (-5.1,-1.2);
\draw[opacity=0.4] (-0.7355,3.6263) -- (2.2,1.5);
\draw[opacity=0.4] (1.7,2.7) -- (-0.1456,-0.9485);
\draw[thick] (-4.7325,-0.5675) -- (-2.8677,-1.4952) -- (0.4168,0.1389) -- (1.413,2.0447) -- (0,3.1) -- (-3.2915,1.738)--cycle;
\draw[rotate = 30 ]  (-1,1.5) ellipse (3 and 1.5);
\end{scope}

\begin{scope}[shift = {(10, 0)}]
\draw[EDB] (-5.3,4.8) -- (-4.9893,4.8767) -- (-4.0159,5.042) -- (-2.7487,5.2073) -- (-1.6651,5.244) -- (-0.9305,5.2257) -- (-0.0122,4.9686) -- (1.1815,4.5462) -- (2.1733,4.4543) -- (2.6,4.5) -- (2.6,6.4) -- (-5.3,6.4)--cycle;
\draw[very thick, opacity = 0.8]  plot[smooth, tension=.7] coordinates {(-5.3,4.8) (-1.4447,5.2808) (0.9243,4.6196) (1.9529,4.436) (2.5773,4.5462)};

\draw[opacity=0.2] (-5.2914,3.146) -- (1.9322,4.5155);
\draw[opacity=0.2] (-4.7,3.5) -- (-5.7,-2.2);
\draw[opacity=0.2] (0.8,4.5) -- (3.1,2.8);
\draw[opacity=0.2] (3.1,3.3) -- (1.4981,-0.9552);
\draw[opacity=0.2] (2.1,0) -- (-2.6,-3);
\draw[opacity=0.2] (-6.2,-1.6) -- (-1.792,-3.019);
\draw[rotate = 33 ]  (-0.7,1.5) ellipse (4.4 and 2.6);
\end{scope}

\draw[<->] (-0.8123,-0.4644) -- (-0.1977,-1.4495);
\node at (-0.8335,2.3025) {$\delta_1$};

\node at (0.2036,2.2159) {$\delta_6$};
\node at (0.355,1.3321) {$\delta_5$};
\node at (-1.1978,0.1527) {$\delta_4$};
\node at (-3.2325,-0.4453) {$\delta_3$};
\node at (-3.698,0.0924) {$\delta_2$};
\draw[EDB] (-5.3,4.8) -- (-4.9893,4.8767) -- (-4.0159,5.042) -- (-2.7487,5.2073) -- (-1.6651,5.244) -- (-0.9305,5.2257) -- (-0.0122,4.9686) -- (1.1815,4.5462) -- (2.1733,4.4543) -- (2.6,4.5) -- (2.6,6.4) -- (-5.3,6.4)--cycle;
\draw[very thick, opacity = 0.8]  plot[smooth, tension=.7] coordinates {(-5.3,4.8) (-1.4447,5.2808) (0.9243,4.6196) (1.9529,4.436) (2.5773,4.5462)};

\draw[thick] (5.2332,3.2523) -- (11.0376,4.3368) -- (12.931,2.9145) -- (11.7665,-0.2144) -- (7.6332,-2.8278) -- (4.371,-1.7965)--cycle;

\end{tikzpicture}}
    \caption{In \eqref{E: polytope growth program} we search for the maximum amount the polytopes faces can be pushed away from the current inscribed ellipse without violating the certificate found in the previous step.
    }
    \label{F: pushback}
\end{figure}

This can be achieved with the following optimization program:
\begin{subequations} \label{E: polytope growth program}
\begin{gather}
    \max_{
    \substack{
    C, d, \delta, 
    \\
    \forall (\calA, \calB)
    \\
    \lambda_{01}^{\calA, \calB},~ \lambda_{02}^{\calA, \calB},~  \phi^{\calA, \calB}
    \\
    \mu_{01}^{\calA, \calB},~ \mu_{02}^{\calA, \calB} ,~ \chi^{\calA, \calB}
    \\
    a_{\calA, \calB},~ b_{\calA, \calB}}
    }~ \prod_{i=1}^{m} (\delta_{i} + \varepsilon_{0}) ~\subjectto 
    \\
    \norm{Qc_{i}}_{2} \leq d_{i} - \delta_{i} - c_{i}^Ts_{0}, ~ \delta_{i} \geq 0 ~\forall~ i\in [m]
    \\
    \eqref{E: cert with ellipse contain sample}, \eqref{E: polytope scaling},
    \eqref{E: poly sep condition}~ \forall \text{pairs } (\calA(s), \calB(s))
        \label{E: growth sep constraints}
\end{gather} 
\end{subequations}
where $\varepsilon_{0} > 0$ is some positive constant ensuring that the objective is never $0$. We recall that \eqref{E: growth sep constraints} is either a constraint of 
the form \eqref{E: polytope separation psatz condition} or \eqref{E: psatz round}. We  emphasize that in \eqref{E: polytope growth program}, $\lambda_{i1}, \lambda_{i2}, \mu_{i1}, \mu_{i2}, i\ge 1$ are all fixed and it is the variables $c_{j}$ and $d_{j}$ which are searched over.


\begin{algorithm}
\caption{
Given an initial polytopic region $\calP_{0}$ and seed point $s_{s} \in \calP_{0}$ for which \eqref{E: cert with ellipse} is feasible, return a new polytopic region $\calP_{i}$ with a maximal inscribed ellipse $\calE_{\calP_{i}}$ with larger volume than $\calE_{\calP_{0}}$ and a collision-free certificate $\calC_{\calP_{i}}$. 
}\label{Alg: Bilinear Alternation}
\SetAlgoLined
 \LinesNumbered
  \SetKwRepeat{Do}{do}{while}
 $i \gets 0$
 \\
 \Do{$\left(\textbf{vol}(\calE_{\calP_{i}}) - \textbf{vol}(\calE_{\calP_{i-1}})\right) / \textbf{vol}(\calE_{\calP_{i-1}})  \geq$ tolerance}{
 $\calC_{\calP_{i}} \gets$ Solution of \eqref{E: cert by hyperplane poly} with data $\calP_{i}$
 \\
 $\calE_{\calP_{i}} \gets$ Solution of \eqref{E: max inscribed ellipse in polytope} with data $\calP_{i}$
 \\
 $(\calP_{i+1}, \calC_{\calP_{i+1}}) \gets$ Solution of \eqref{E: polytope growth program} with data $(\calE_{\calP_{i}}, \calC_{\calP_{i}})$
 \\
 $i \gets i+1$
 }
 \Return $(\calP_{i}, \calC_{\calP_{i}})$
\end{algorithm}

Our complete algorithm proceeds in three steps. First, an initial, collision-free polytope $\calP_{0}$ containing a seed point $s_{s}$ is certified using \eqref{E: hyperplane certificate} to obtain $\calC_{\calP_{0}}$. Next, the maximum inscribed ellipsoid $\calE_{\calP_{0}}$ is computed using \eqref{E: max inscribed ellipse in polytope}. Finally, $\calP_{0}$ is improved using \eqref{E: polytope growth program} to obtain a new polytope $\calP_{1}$. This polytope $\calP_{1}$ has the same number of defining inequalities as $\calP_{0}$. We iterate this process until the volume of $\calE_{\calP}$ stops improving. This algorithm is formalized in Algorithm \ref{Alg: Bilinear Alternation}. Every step of this process involves solving an \emph{convex program} for which very fast, commercial solvers exist \cite[]{mosek, andersen2000mosek}.

\begin{remark}
Some practical considerations for improving the runtime of Algorithm \ref{Alg: Bilinear Alternation} are discussed in the appendices. Specifically, in Appendix \ref{A: Practical Aspects} we expand on design choices which substantially impact the size of the optimization programs as well as which part of Algorithm \ref{Alg: Bilinear Alternation} can be parallelized. Additionally, in Appendix \ref{A: Seeding} we discuss a heuristic strategy for proposing a large, initial regions $\calP_{0}$.
\end{remark}

\section{Results} \label{S: Results}
We demonstrate the use of Algorithm \ref{Alg: Bilinear Alternation} on systems of varying complexity. We begin with very simple robots where both the task and configuration space can be visualized and demonstrate that our algorithm can find very large portions of TC-space and achieve near-complete coverage for simple systems in reasonable time.

We then demonstrate the use of Algorithm \ref{Alg: Bilinear Alternation} on various robots commonly found in industry. These include a KUKA iiwa reaching into a shelf, a bimanual KUKA iiwa, and similar setups for the Franka UR3. Our objective is show the scalability of our algorithm in realistic settings as well as demonstrate the diversity of shapes our approach can handle.

A mature implementation of our algorithm is available in the open-source robotics toolbox {\href{https://drake.mit.edu/}{Drake}} \cite[]{drake}. We furthermore provide examples of our algorithm in interactive {\href{https://deepnote.com/workspace/alexandre-amice-c018b305-0386-4703-9474-01b867e6efea/project/C-IRIS-7e82e4f5-f47a-475a-aad3-c88093ed36c6/notebook/2d_example_bilinear_alternation-14f1ee8c795e499ca7f577b6885c10e9}{Python notebooks}}. Animations of various figures in this section can also be found on this project's \href{https://alexandreamice.github.io/project/c-iris}{website}.

The implementation details of all experiments in this section, such as the choice of reference frame for each plane, the degree of the polynomials parametrizing the hyperplanes, and the degree of the multipliers polynomials in each program are expounded on in Appendix \ref{A: Practical Aspects}.

\subsection{Simple Robots} \label{S: Simple Robots}
In this section, we consider two simple robots each containing only two degrees of freedom. This enables us to visualize both the task space, as well as the configuration space. Though containing few degrees of freedom, each environment maintains rich, realistic collision geometries.

\subsubsection{Pendulum on a Rail} \label{S: Pend on Rail}

\begin{figure*}[htb]
 \centering
\begin{minipage}[c]{0.49 \textwidth}
    \centering{\includegraphics[width = 0.98\textwidth]{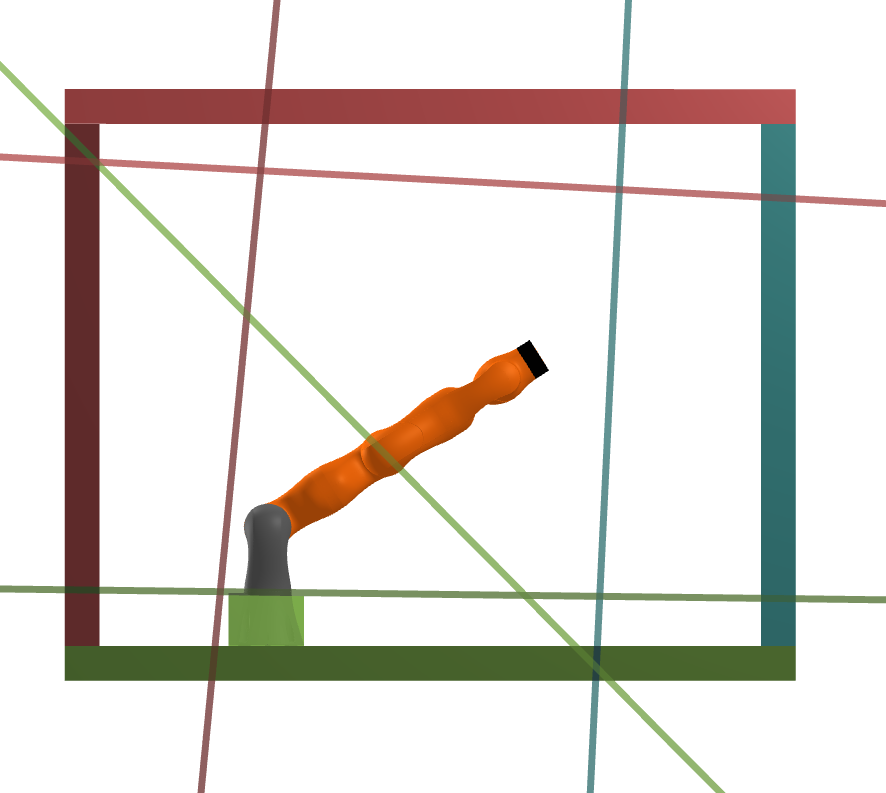}}
    \vspace{11mm}
    \subcaption{The pendulum on a rail robot. Each hyperplane is a function of the TC-variable $s$ and separates the collision body of the same color from the tip of the robot highlighted in black.}
    \label{F: pend on rail task space}
\end{minipage}
\hfill 
\begin{minipage}[c]{0.45 \textwidth}
\centering{
\begin{tikzpicture}
        \node [anchor = south west] (image) at (0,0) {\includegraphics[width=0.98\textwidth] {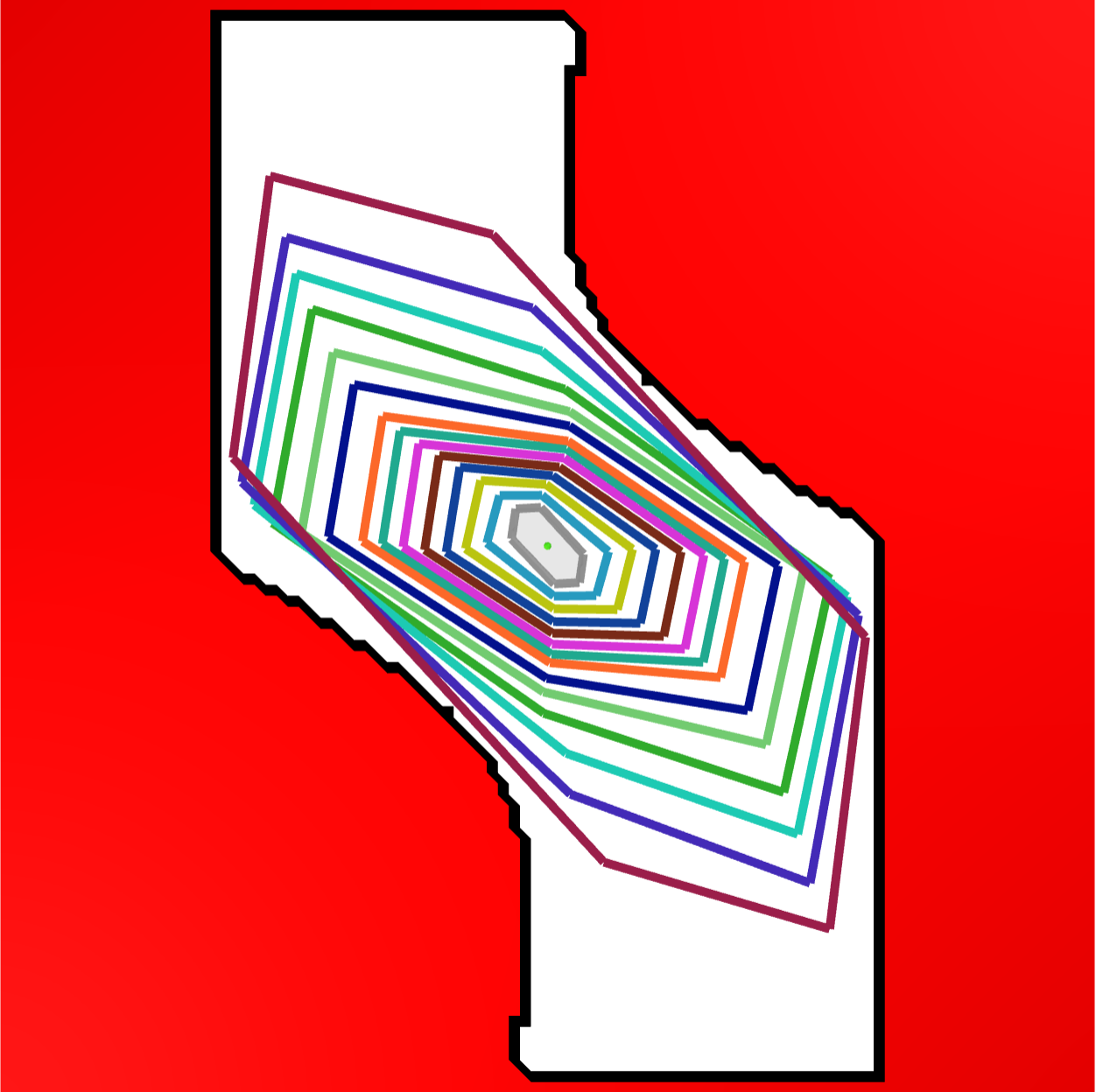}};
        \draw[stealth-, very thick,black] (image.south east) -- (image.south west) node[midway, below]  {Prismatic Joint Position};
        \draw[stealth-, very thick,black] (image.north west) -- (image.south west) node[midway, rotate=90, above]   {Revolute Joint Angle};
        \end{tikzpicture}
}
    \subcaption{The tangent configuration space of the pendulum on a rail robot. The tangent-configuration-space obstacle is in red. A sample of the polytopes obtained running Algorithm \ref{Alg: Bilinear Alternation} around the configuration $(0,0)$ are shown.}
        \label{F: pend on rail cspace}
\end{minipage}
    \caption{A 2-DOF robot consisting of a revolute joint at the base of the orange link and a prismatic joint between the base and the box.}
    \label{F: pend on rail}
\end{figure*}

Our first robot shown in Figure \ref{F: pend on rail task space} consists of a single arm, shown in orange, connected to a base via a revolute joint and placed within a box. The base of the robot is connected to the box via a prismatic joint. The collision geometries of the robot and box are approximated using polytopic boxes. A total of $42$ pairs of geometries can collide in this scene (i.e. certifying non-collision requires solving $42$ instances of either \eqref{E: cert by hyperplane poly} or \eqref{E: dual psatz poly cert}). In Figure \ref{F: pend on rail cspace}, we visualize the two dimensional tangent configuration space of our robot with the TC-space obstacle shown in red. We emphasize the highly non-convex shape of TC-free.

We run Algorithm \ref{Alg: Bilinear Alternation} starting with a regular octagon of side length $0.01$ centered at the configuration $(0,0)$, a configuration with the arm fully extended upwards and centered in the box. We obtain a sequence of certified polytopes of increasing size in the TC-space which are plotted in varying colors in Figure \ref{F: pend on rail cspace}. 

The algorithm terminates after 86 iterations of the while loop from Algorithm \ref{Alg: Bilinear Alternation} taking a total of 314 seconds of wall time. During the course of the algorithm, the volume of the maximum inscribed ellipsoid improves by a factor of $83$, from a starting value of $0.021$ to $1.746$. The improvement in the volume of the inscribed ellipsoid, as well as the average time to solve both the certification program \eqref{E: cert by hyperplane poly} and \eqref{E: polytope growth program} are reported in Figure \ref{F: pend on rail volume improvement} and Table \ref{F: pend on rail stats} respectively.

After completion, we select a single random configuration within our final certified region. In Figure \ref{F: pend on rail}, we highlight the tip of the pendulum in black. Additionally, we color each collision body for which the tip can collide in a separate color and plot the separating plane certificate between the tip and the body in the same color.

\begin{figure*}[htb]
\begin{subfigure}[c]{0.65\textwidth}
    \centering
        \includegraphics[width = \textwidth]{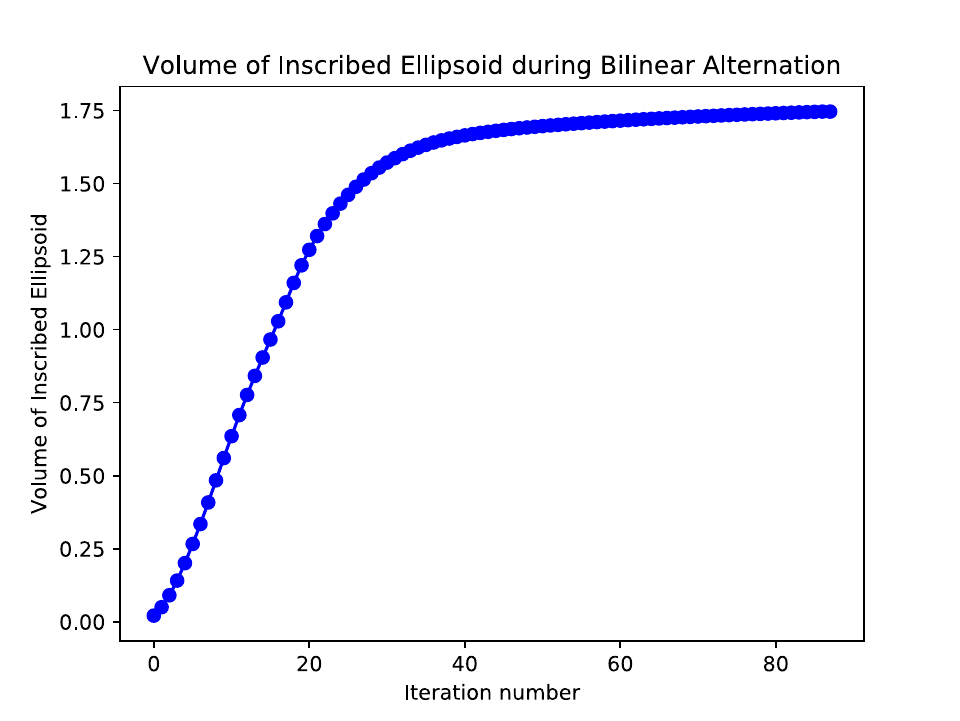}
        \caption{The volume of the maximum inscribed ellipsoids of the TC-free regions shown in Figure \ref{F: pend on rail cspace} is plotted over iterations of Algorithm \ref{Alg: Bilinear Alternation}. This volume grows by a factor of $83$ over the course of 86 iterations Algorithm \ref{Alg: Bilinear Alternation}.}
        \label{F: pend on rail volume improvement}
    \end{subfigure}
    \hfill
    \begin{subfigure}[c]{0.33\textwidth}
    \vspace{29.5mm}
    \centering
        \begin{tabular}{p{1in} | c}
        \hline
        Number of collision pairs & 42 \\
        \hline
        Size of the largest PSD variable & 2 \\
        \hline
        Average time to solve \eqref{E: cert by hyperplane poly} & 0.191s \\
        \hline
        Average time to solve \eqref{E: polytope growth program} & 0.423s \\
        \hline
        Wall time to grow single region & 314s \\
        \hline
        \end{tabular}
    \vspace{13mm}
    \caption{Statistics dominating the run time of Algorithm \ref{Alg: Bilinear Alternation} for the pendulum on a rail system. The complexity scales with the number of collision geometries as well as the size of the largest PSD matrix variable for enforcing the Psatz conditions in Programs \eqref{E: cert by hyperplane poly} and \eqref{E: polytope growth program}.} 
    \label{F: pend on rail stats}
    \end{subfigure}
    \caption{The progress of Algorithm \ref{Alg: Bilinear Alternation} on the pendulum on a rail system for a single polytopic region is plotted. Statistics dominating the run time of the algorithm are also reported.}
\end{figure*}

\subsubsection{Pinball Flipper} \label{S: Pinball}
\begin{figure*}
\begin{subfigure}[c]{0.49\textwidth}
    \vspace{12mm}
    \centering
        \includegraphics[width = 0.98\textwidth]{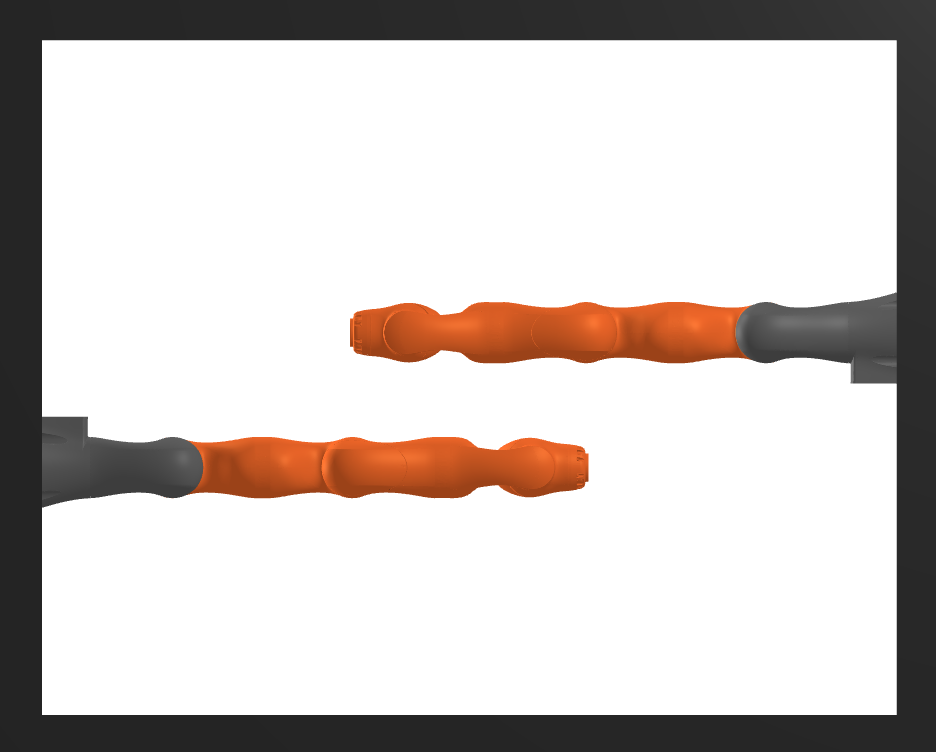}
        \vspace{25.5mm}
        \caption{The pinball flipper system consists of pendulums each with a revolute joint between the orange link and the gray base. All collision geometries in the scene are approximate using boxes.}
        \label{F: pinball task space}
    \end{subfigure}
    \hfill
    \begin{subfigure}[c]{0.49\textwidth}
    \centering
        \begin{tikzpicture}
        \node [anchor = south west] (image) at (0,0) {\includegraphics[width=0.98\textwidth] {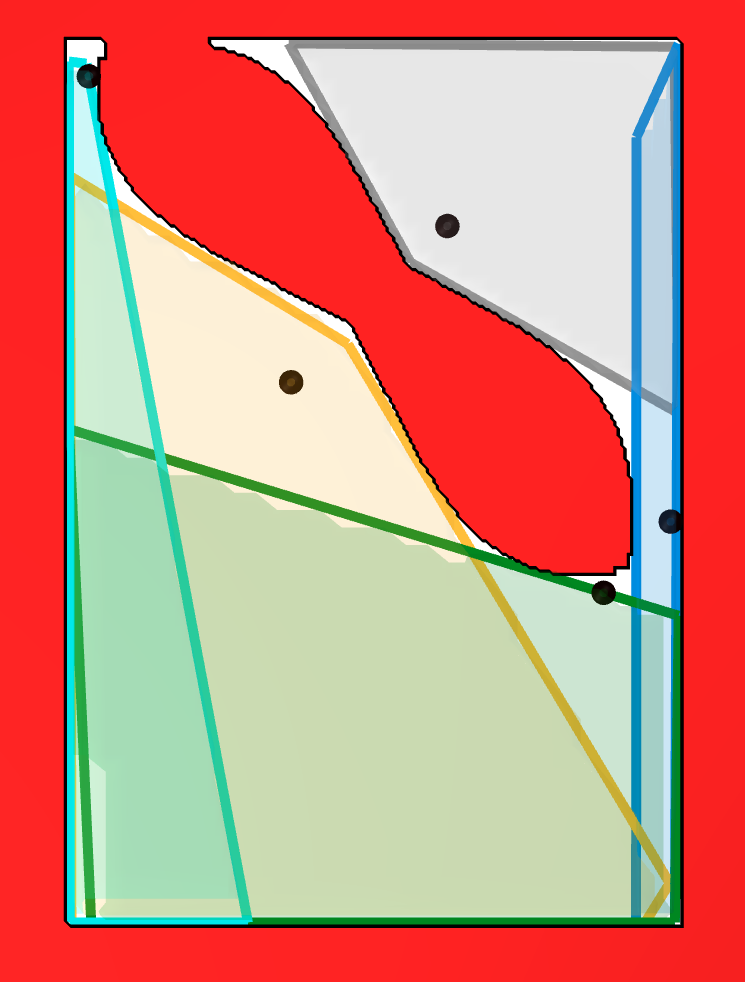}};
        \draw[stealth-, very thick,black] (image.south east) -- (image.south west) node[midway, below]  {Right Flipper Joint Angle};
        \draw[stealth-, very thick,black] (image.north west) -- (image.south west) node[midway, rotate=90, above]   {Left Flipper Joint Angle};
        \end{tikzpicture}
        \caption{The TC-space of the 2DOF pendulum flipper system. The TC-space obstacle is shown in red. Algorithm \ref{Alg: Bilinear Alternation} is run for five different polytopes each initially centered around the black dots. The polytopes output by the algorithm are plotted in various colors. These polytopes almost fully cover TC-free and are guaranteed to be collision-free by construction.}
        \label{F: pinball cspace}
    \end{subfigure}
\caption{The pinball flipper system and its TC-space. Algorithm \ref{Alg: Bilinear Alternation} is successfully able to cover TC-free with polytopic regions. An animation of the regions growing to cover this space is available \href{https://alexandreamice.github.io/project/c-iris/pinball_growth.html}{here}} 
\end{figure*}

We refer to our second system shown in Figure \ref{F: pinball task space} as the pinball flipper. Each orange arm is connected to its gray base via a revolute joint. Each collision geometry in the scene is approximated with a box and a total of $130$ collision pairs exist. We similarly plot the TC-space in Figure \ref{F: pinball cspace} with the TC-space obstacle highlighted in red. In this experiment, we attempt to almost completely cover TC-free with polytopic regions in order to enable a motion plan where the flippers exchange positions. Overall, this scene exhibits a much more complicated TC-space obstacle as well as substantially more collision pairs when compared to the system from Section \ref{S: Pend on Rail}.

We run Algorithm \ref{Alg: Bilinear Alternation} seeded with octagonal regions of side length $0.01$, each centered at one of $5$ different configurations shown as the black dots in Figure \ref{F: pinball cspace}. The resulting regions are also plotted in Figure \ref{F: pinball cspace} and almost completely cover the space. Though each region was initially seeded with a polytope of the same shape, our algorithm successfully adapts the shape of each polytope to fill the space. Our algorithm also is not conservative; it successfully finding regions which are tight to the TC-space obstacle in all cases. 

The change in volume of the maximum inscribed ellipsoid of each region is shown in Figure \ref{F: pinball vol growth}. We remark that the volume of each region exhibits a diverse set of behaviors over the iterations. Each region was grown sequentially, with a total wall time to cover the space of $1439$s. This wall time could easily be improved by growing each region in parallel.

In Figure \ref{F: pinball trajectory}, we demonstrate the behavior of our certificates for various poses of our robot. In the top panel, we highlight in black the two tips of each flipper. The current configuration is highlighted as the green dot in the bottom panel. For each configuration, we also plot the hyperplane that proves the separation between the two black tips. Notice that in Figures \ref{F: pinball cspace1}, \ref{F: pinball cspace2}, and \ref{F: pinball cspace3}, the current configuration is contained in multiple regions at once. Therefore, each hyperplane in Figure \ref{F: pinball traj0}
- \ref{F: pinball traj4} is drawn in the same color as its associated TC-space region in Figures \ref{F: pinball cspace0}
- \ref{F: pinball cspace4}. 

We draw attention to the fact that at every configuration $s_{0}$ in TC-free, many different separating hyperplanes exist. The hyperplane obtained by evaluating the output of our certifier at $s_{0}$ is highly dependent on the region which is being certified. For example, in Figure \ref{F: pinball cspace2}, the blue region corresponds largely to a change in the position of the left flipper, while the green region corresponds largely to a change in the right flipper. We see in Figure \ref{F: pinball traj2}, that the algorithm finds different separating planes for the blue and the green region, even for the same configuration, so as to accommodate the different range of robot motion in each region. For the blue region, which includes a large rotation of the left flipper, the blue plane would continue to separate the left flipper from the right flipper as the left flipper moves. Similarly, the green plane would continue to separate the right flipper from the left as the right flipper moves.

\begin{figure}
\begin{subfigure}[c]{0.66\textwidth}
    \centering
        \includegraphics[width = 0.98\textwidth]{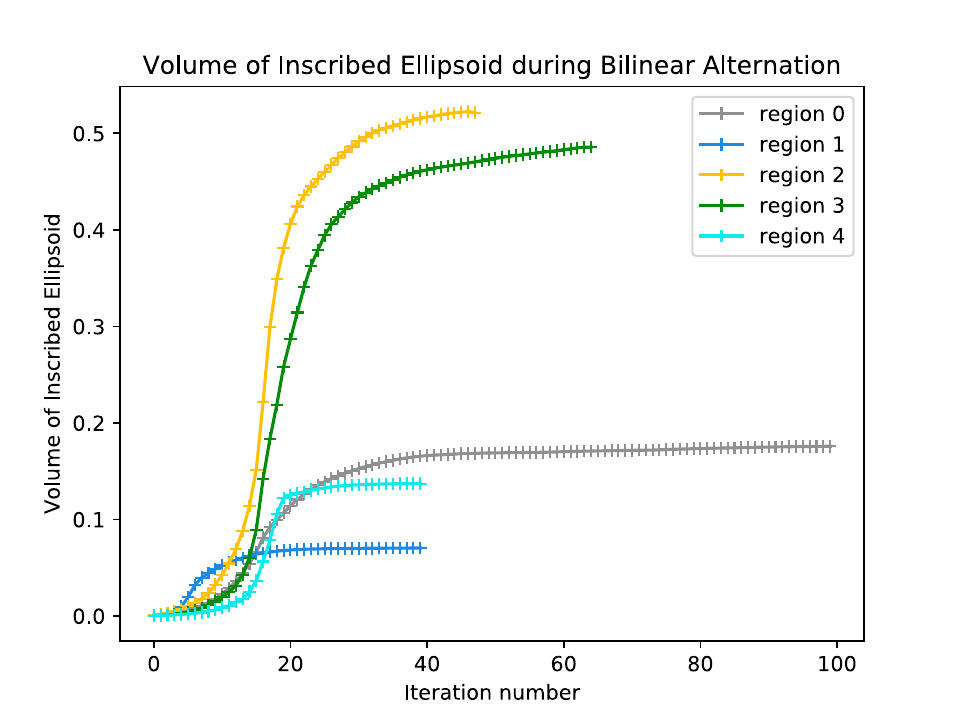}
        \caption{The volume of the maximum inscribed ellipsoid as the polytope is grown around various seedpoints is improved during Algorithm \ref{Alg: Bilinear Alternation}. The final polytopes associate to each color are shown in Figure \ref{F: pinball cspace}.}
        \label{F: pinball vol growth}
    \end{subfigure}
    \hfill
    \begin{subfigure}[c]{0.33\textwidth}
    \vspace{29mm}
    \centering
        \begin{tabular}{p{1in} | c}
        \hline
        \Centering{Number of collision pairs} & 130 \\
        \hline
        \Centering{Size of the largest PSD variable} & 2 \\
        \hline
        \Centering{Average time to solve} \eqref{E: cert by hyperplane poly} & 0.638s \\
        \hline
        \Centering{Average time to solve \eqref{E: polytope growth program}} & 1.319s \\
        \hline
        \Centering{Wall time to grow cover} & 1439s \\
        \hline
        \end{tabular}
        \vspace{13mm}
        \caption{Statistics dominating the run time of Algorithm \ref{Alg: Bilinear Alternation} for the pinball flipper system. The complexity scales with the number of collision geometries as well as the size of the largest PSD matrix variable for enforcing the Psatz conditions in Programs \eqref{E: cert by hyperplane poly} and \eqref{E: polytope growth program}.} \label{F: pinball stats}
    \end{subfigure}
    \caption{The progress of Algorithm \ref{Alg: Bilinear Alternation} on the pinball flipper system for each polytopic region is plotted. Statistics dominating the run time of the algorithm are also reported.}
    \label{F: pinball stats main}
\end{figure}

\begin{figure*}
 \captionsetup[subfigure]{justification=centering}
    \begin{subfigure}[c]{0.19\textwidth}
    \centering
        \includegraphics[width = \textwidth]{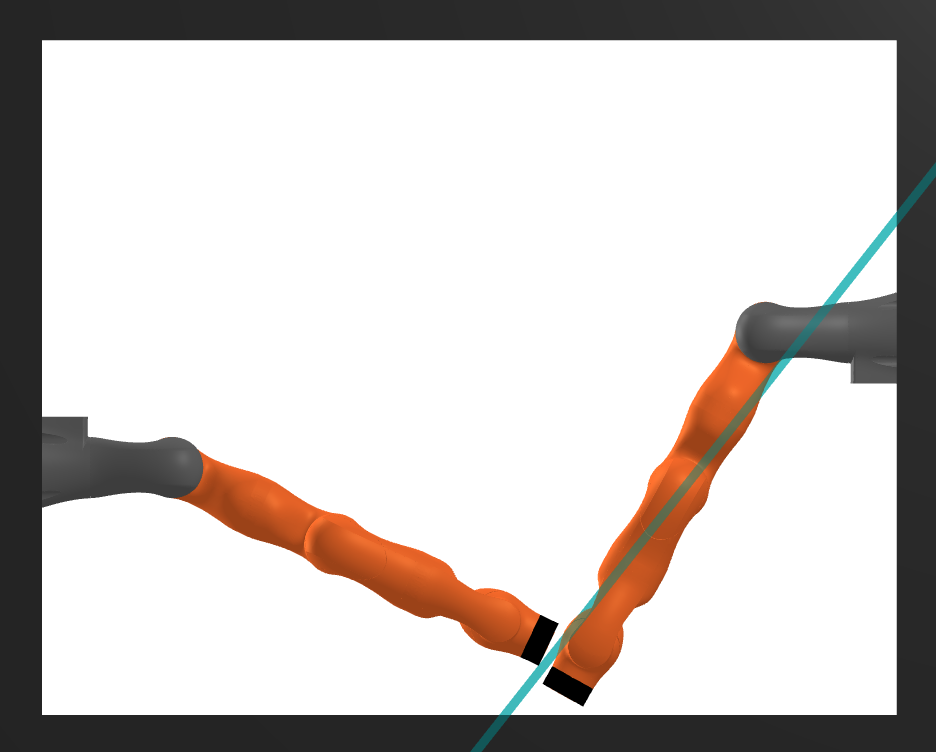}
        \caption{}\label{F: pinball traj0}
    \end{subfigure}
    \hfill
    \begin{subfigure}[c]{0.19\textwidth}
    \centering
        \includegraphics[width = \textwidth]{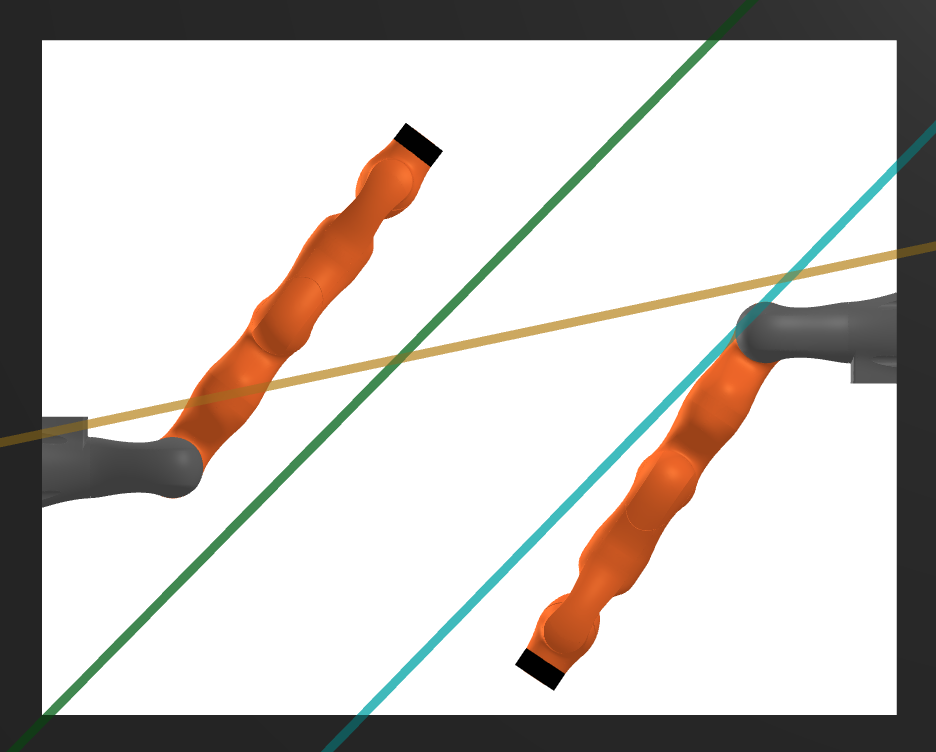}
        \caption{}\label{F: pinball traj1}
    \end{subfigure}
    \hfill
    \begin{subfigure}[c]{0.19\textwidth}
    \centering
        \includegraphics[width = \textwidth]{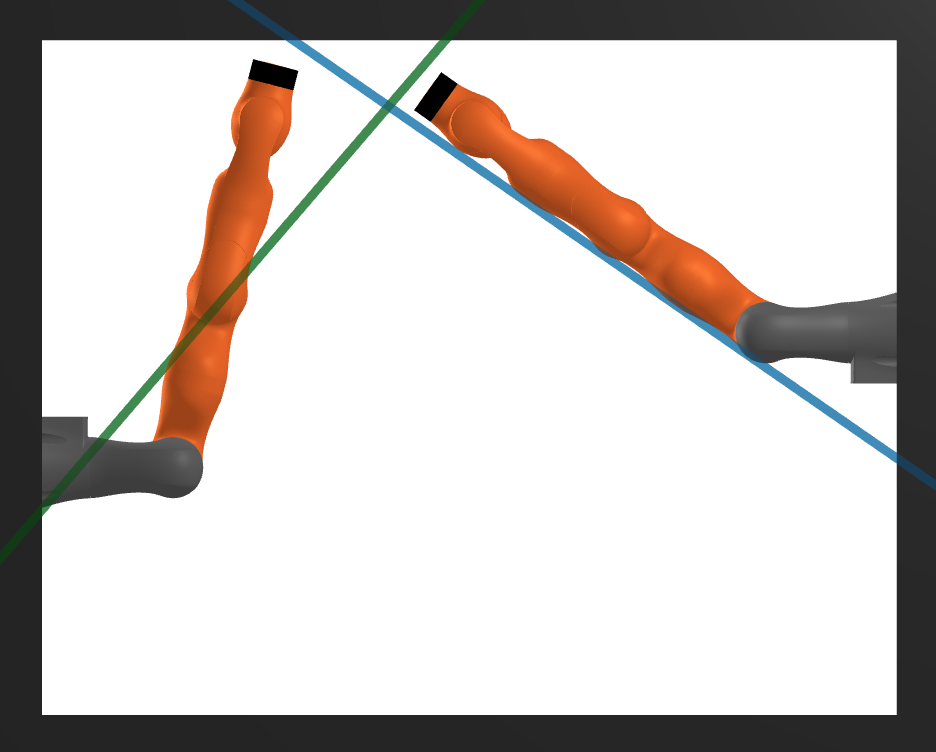}
        \caption{}\label{F: pinball traj2}
    \end{subfigure}
    \hfill
    \begin{subfigure}[c]{0.19\textwidth}
    \centering
        \includegraphics[width = \textwidth]{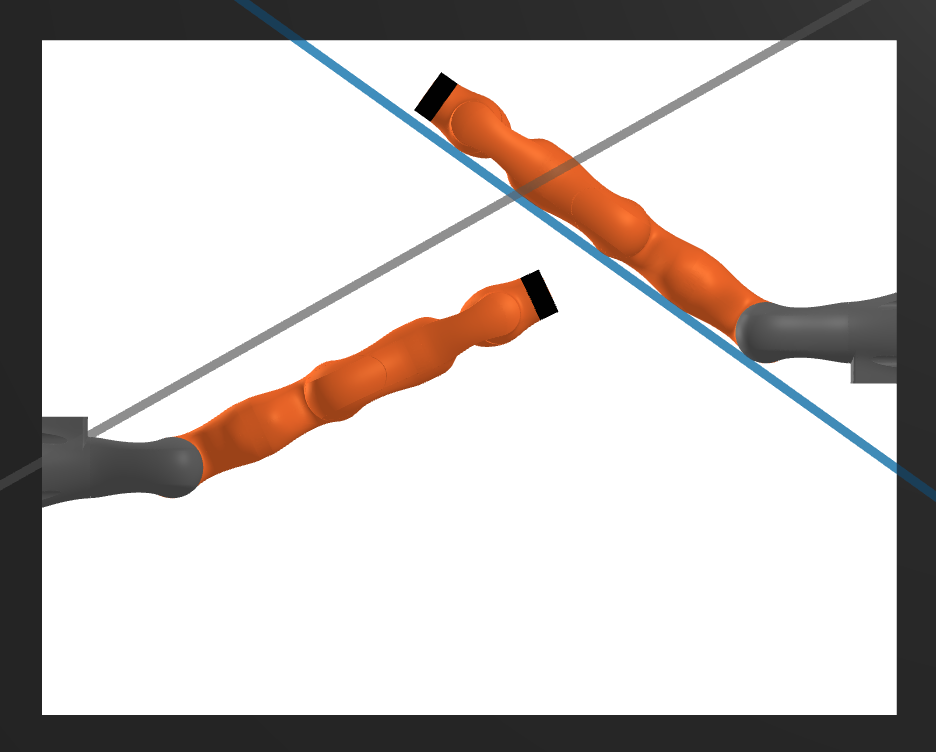}
        \caption{}\label{F: pinball traj3}
    \end{subfigure}
    \hfill
    \begin{subfigure}[c]{0.19\textwidth}
    \centering
        \includegraphics[width = \textwidth]{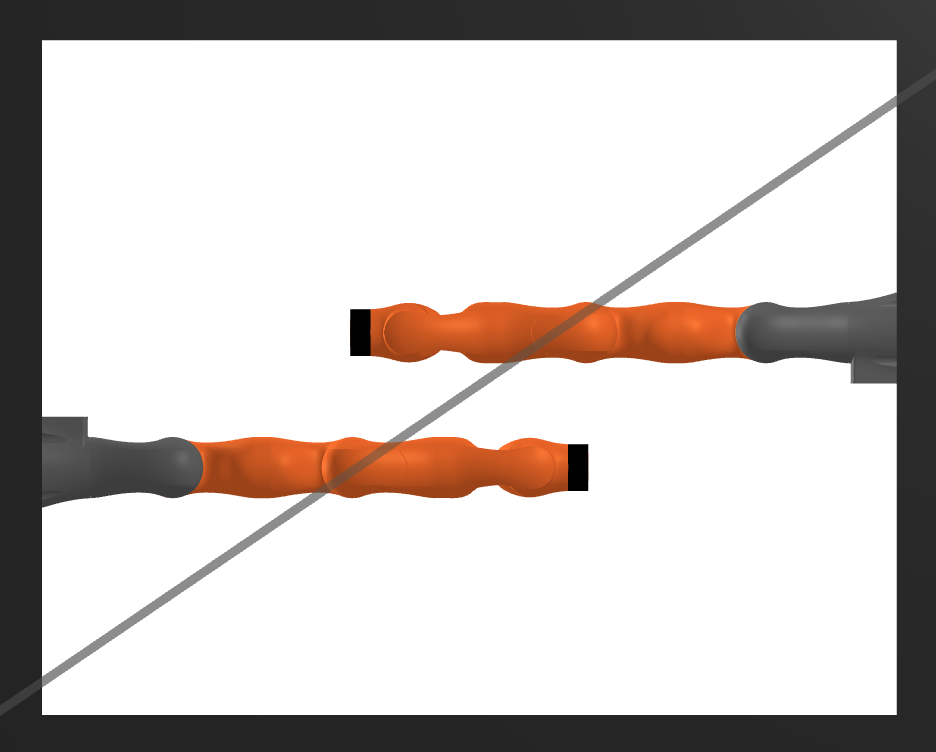}
        \subcaption{}\label{F: pinball traj4}
    \end{subfigure}
    \begin{subfigure}[c]{0.19\textwidth}
    \centering
        \includegraphics[width = 0.98\textwidth]{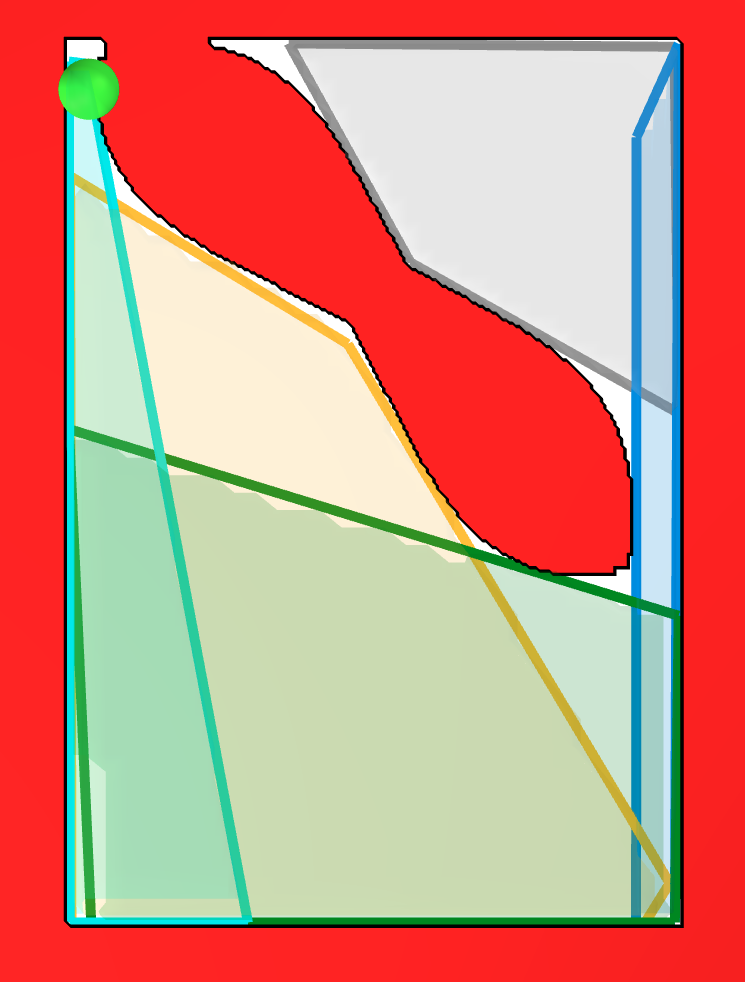}
        \caption{}\label{F: pinball cspace0}
    \end{subfigure}
    \hfill
    \begin{subfigure}[c]{0.19\textwidth}
    \centering
        \includegraphics[width = 0.98\textwidth]{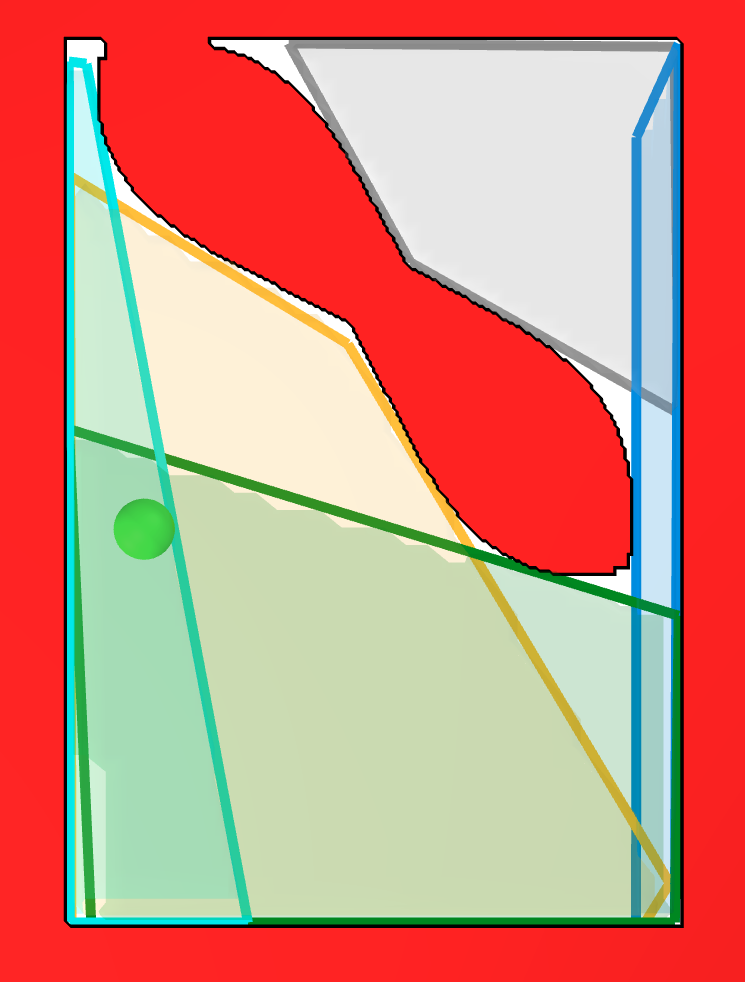}
        \caption{}\label{F: pinball cspace1}
    \end{subfigure}
    \hfill
    \begin{subfigure}[c]{0.19\textwidth}
    \centering
        \includegraphics[width = 0.98\textwidth]{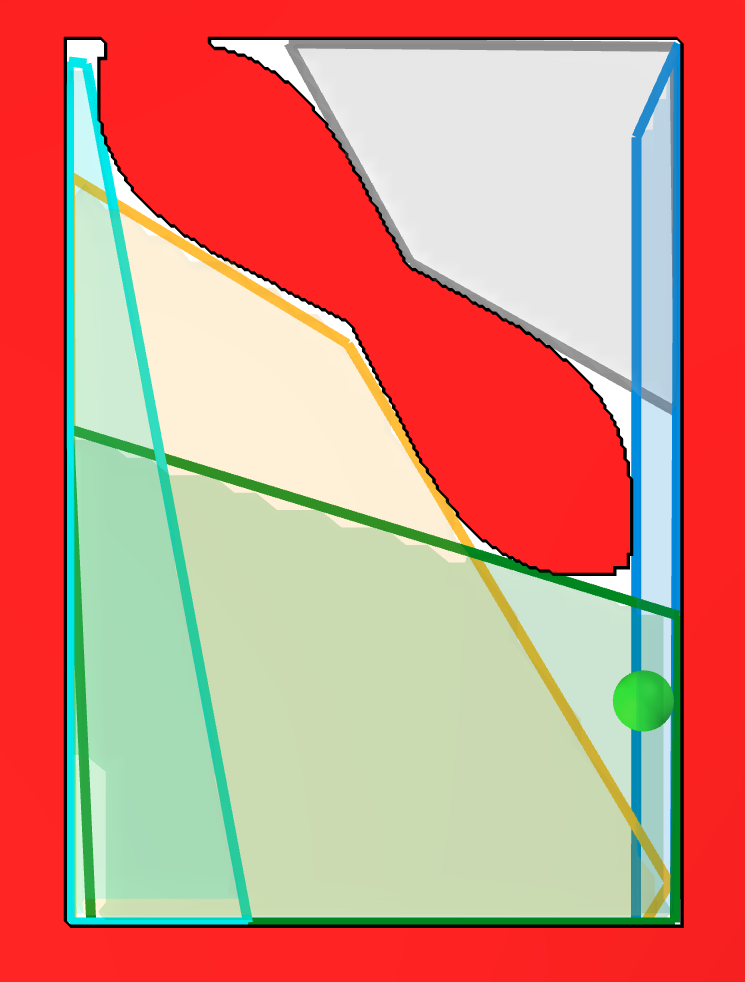}
        \caption{}\label{F: pinball cspace2}
    \end{subfigure}
    \hfill
    \begin{subfigure}[c]{0.19\textwidth}
    \centering
        \includegraphics[width = 0.98\textwidth]{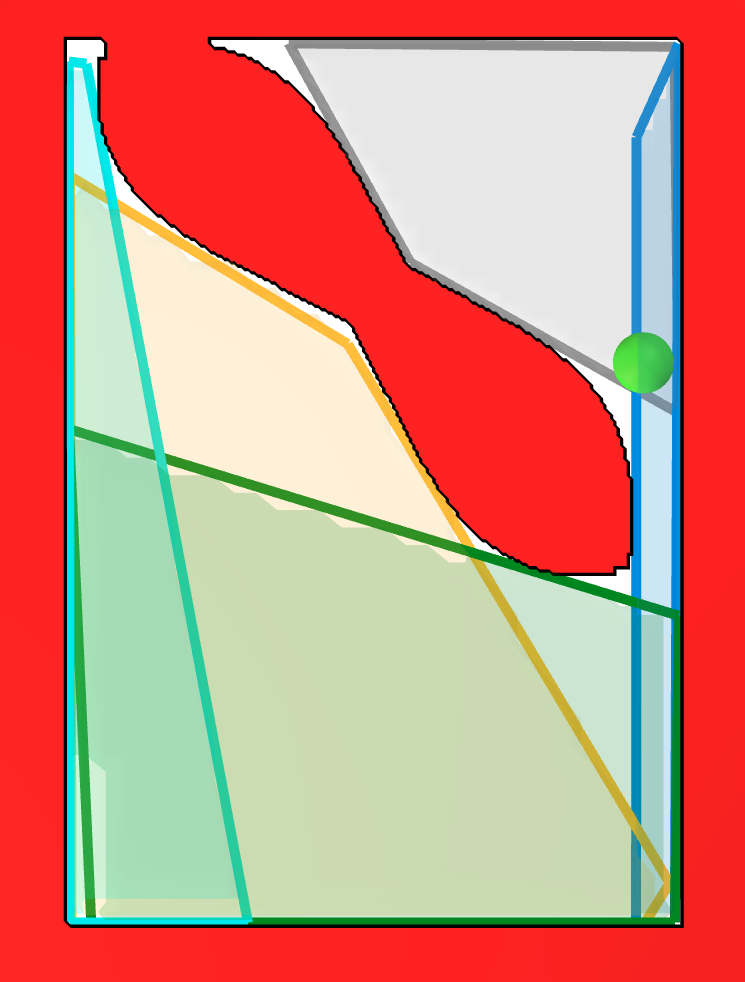}
        \caption{}\label{F: pinball cspace3}
    \end{subfigure}
    \hfill
    \begin{subfigure}[c]{0.19\textwidth}
    \centering
        \includegraphics[width = 0.98\textwidth]{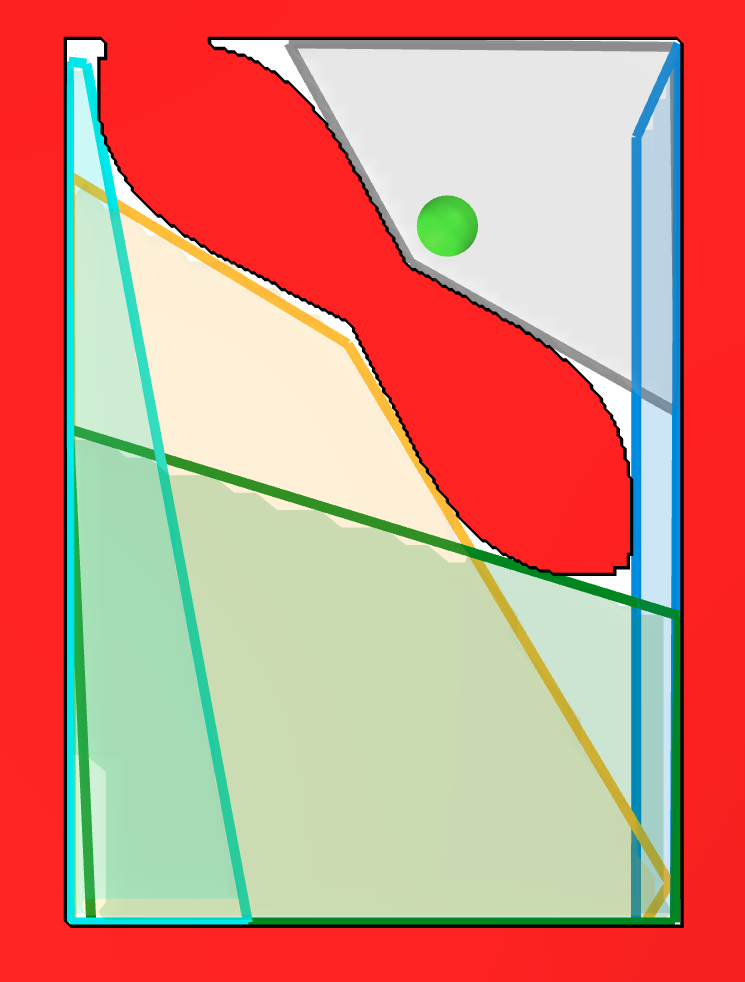}
        \caption{}\label{F: pinball cspace4}
    \end{subfigure}
    \caption{We approximate almost the entirety of TC-free for the robot flipper system using 5 polytopic regions. The top panel shows the hyperplanes certifying that the two black tips of the system do not collide. The bottom panel shows the configuration of the robot as a green dot. An example of this system undergoing a trajectory is available \href{https://alexandreamice.github.io/project/c-iris/pinball_trajectory.html}{here}.}
    \label{F: pinball trajectory}
\end{figure*}

\subsection{KUKA IIWA robot} \label{S: iiwa}
In this section we demonstrate our algorithm deployed on the KUKA iiwa arm in two scenes relevant to robot manipulation. The collision geometry of the iiwa is approximated as a union of convex polytopes as are all obstacles in the scene. We begin by considering a single iiwa to demonstrate the practicality of our algorithm before considering a bimanual manipulator to demonstrate the scalability of our approach.

\subsubsection{7-DOF IIWA With a Shelf} \label{S: iiwa and shelf}
\begin{figure*}
    \centering
    \begin{subfigure}[t]{0.32\textwidth}
    \centering
        \includegraphics[width = 1.0\textwidth, trim={18cm 8cm 22cm 12cm},clip]{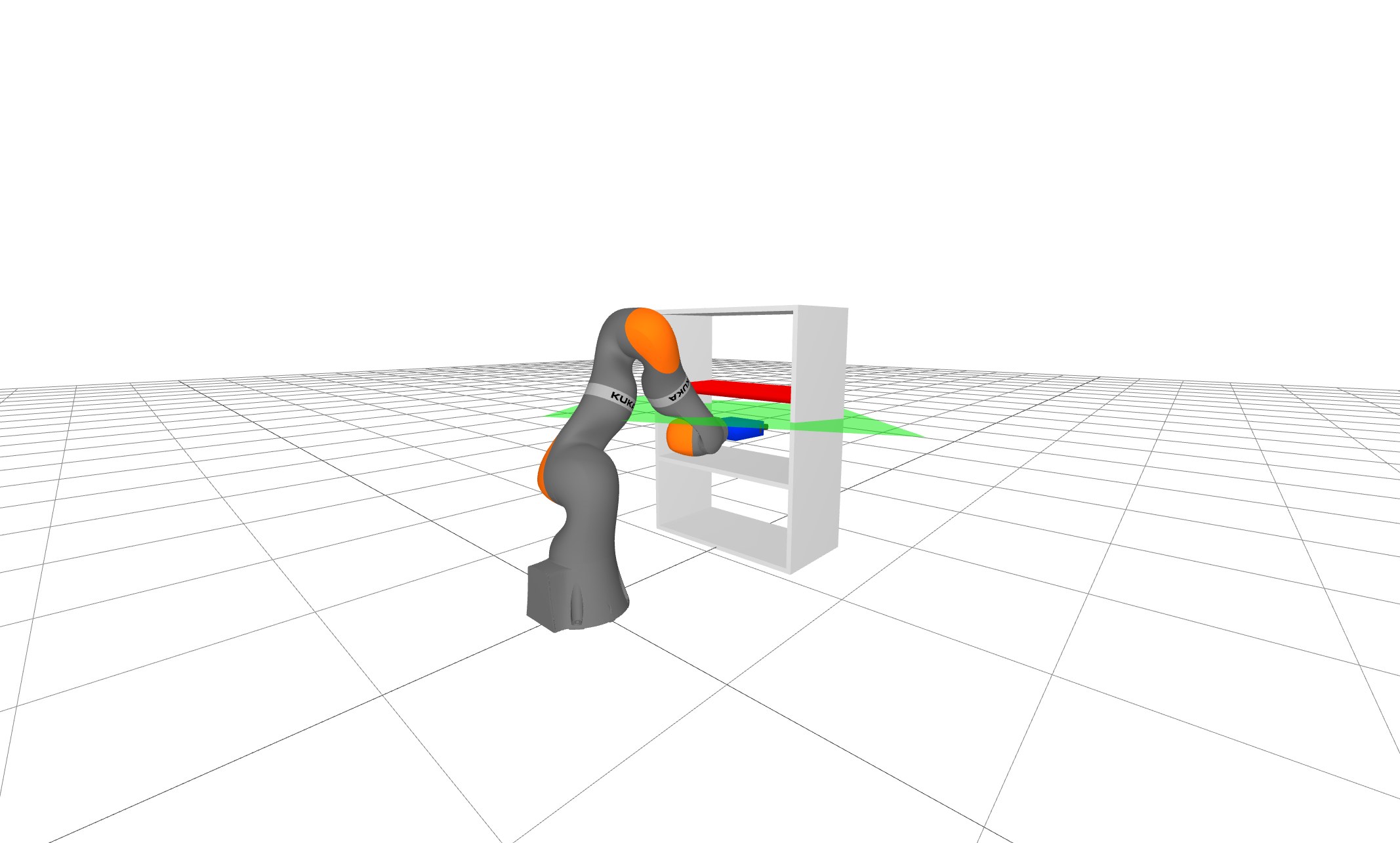}
    \end{subfigure}
    \begin{subfigure}[t]{0.32\textwidth}
    \centering
        \includegraphics[width = 1.0\textwidth, trim={18cm 8cm 22cm 12cm},clip]{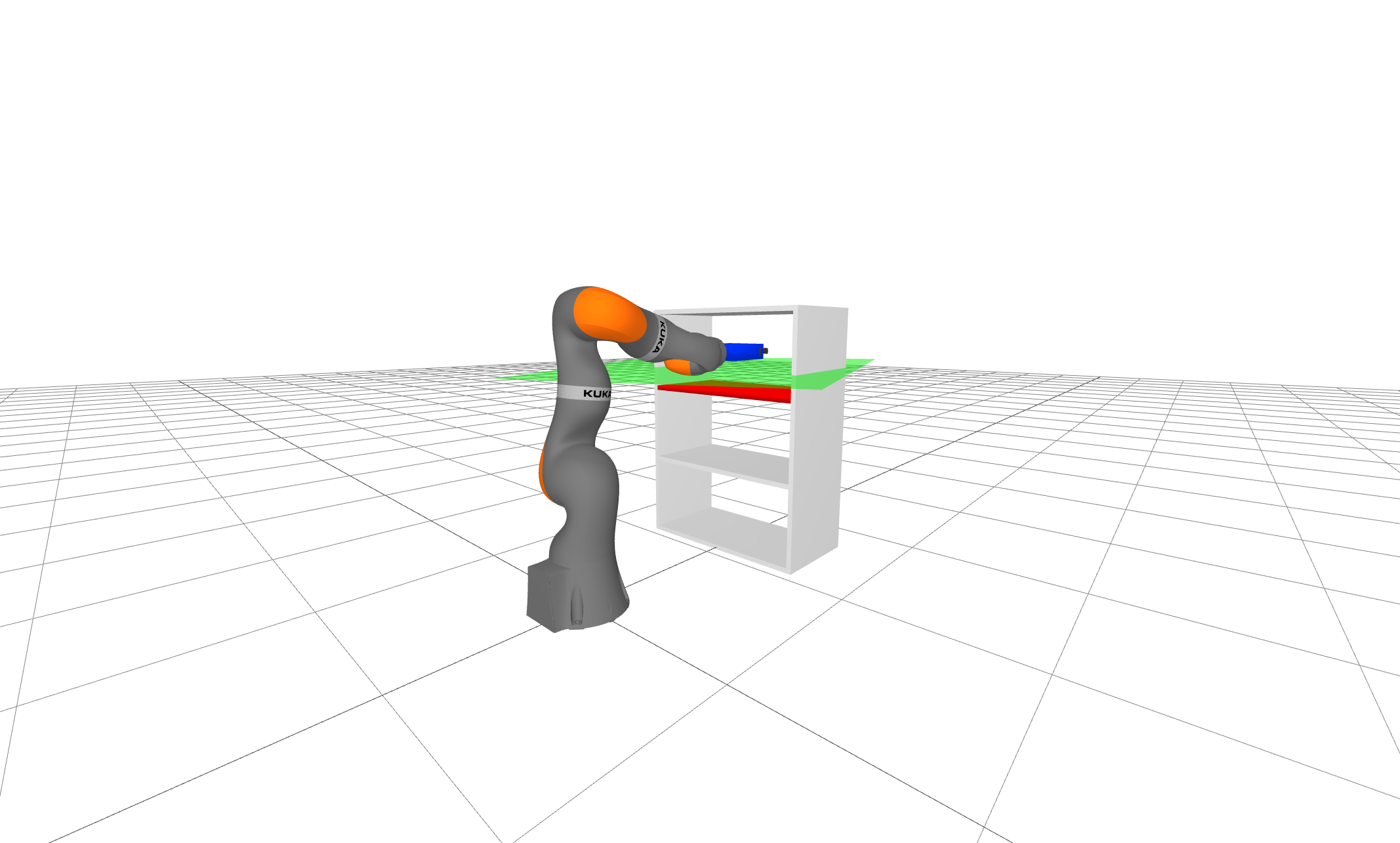}
        \vspace{ 0.3 cm}
    \end{subfigure}
    \begin{subfigure}[t]{0.32\textwidth}
    \centering
        \includegraphics[width = 1.0\textwidth, trim={18cm 8cm 22cm 12cm},clip]{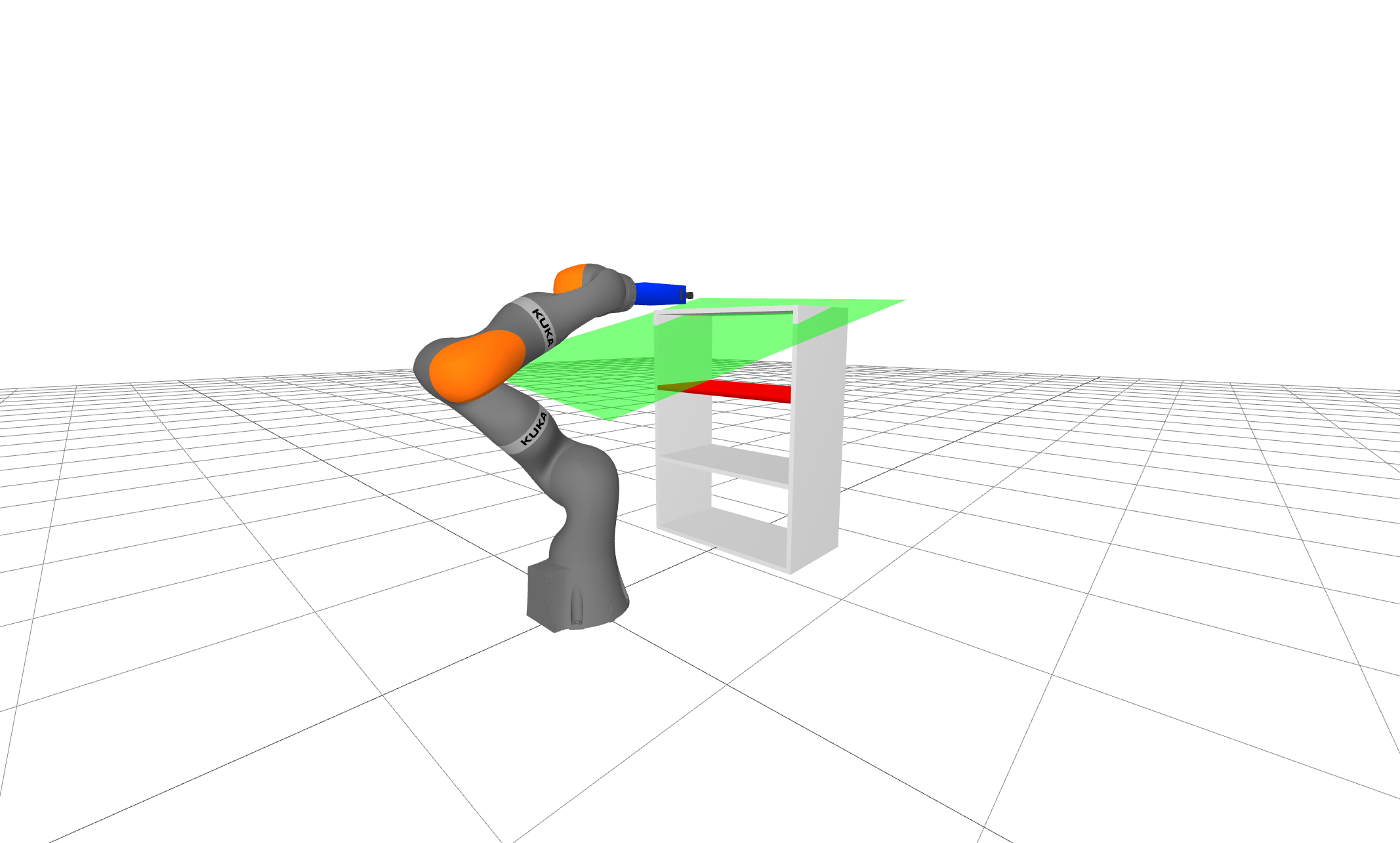}
    \end{subfigure}
    \caption{\small{7-DOF iiwa  example. We highlight one pair of collision geometries (blue on robot gripper and red on the shelf), together with their separating plane (green).}}
    \label{F: collision constraint}
    \vspace{-13.5pt}
\end{figure*}

We apply Algorithm \ref{Alg: Bilinear Alternation} to the scene shown in Figure \ref{F: collision constraint}: a 7-DOF KUKA iiwa arm reaching into a shelf. Our approach successfully finds many collision-free configurations, and we plot in green the separating hyperplane certificate between the end-effector, highlighted in blue, and the top shelf highlighted in red. 

The run time of Algorithm \ref{Alg: Bilinear Alternation} is dominated by the certification of non-collision between the pairs with the longest kinematic chain, as this leads to the highest degree polynomials and hence semidefinite variables in programs \eqref{E: cert by hyperplane poly} and \eqref{E: polytope growth program}. For this program, the largest positive semidefinite matrix variable has $16$ rows. Overall, the largest certification program \eqref{E: cert by hyperplane poly} takes 54s to solve, while the program \eqref{E: polytope growth program} takes on average 8s to solve.


In Figure \ref{fig:iiwa_shelf_volume}, we demonstrate the behavior of one certified region. In Figure \ref{F: iiwa multi posture}, we show that the configurations of one of our certified polytopic region of TC-space (with 24 faces in the polytope) corresponds to many task-space end-effector positions. The configurations from Figure \ref{F: iiwa multi posture} are drawn from a region which grows by a factor of $10,000$ using $11$ iterations of Algorithm \ref{Alg: Bilinear Alternation}. This improvement in volume is reported in Figure \ref{F: iiwa vol growth}, where we also compare the volume of the maximum volume inscribed ellipsoid against the volume of the polytopic region.

\begin{figure}
    \centering
    \begin{subfigure}[c]{0.35\textwidth}
    \vspace{18mm}
    \centering
    \includegraphics[width=1.\textwidth]{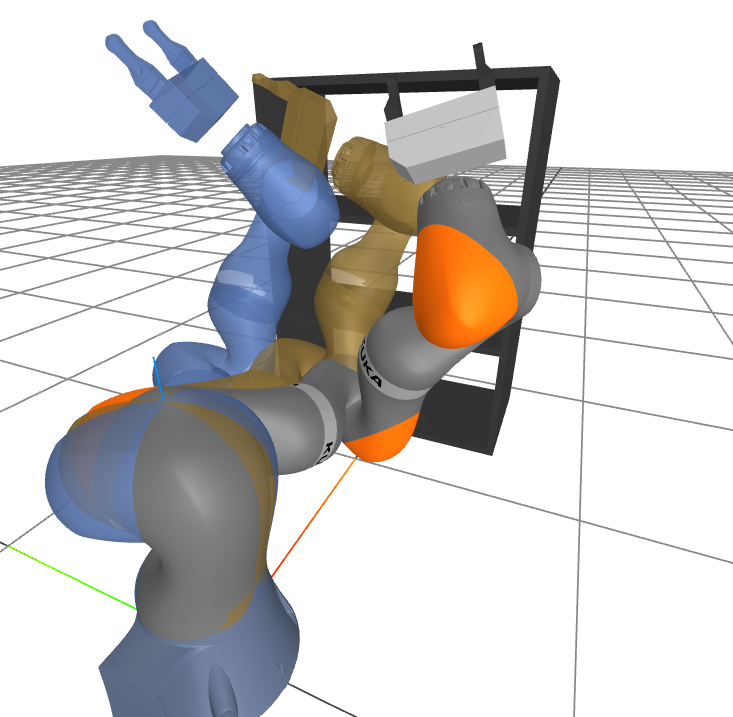}
    \vspace{5mm}
    \caption{The configurations in our certified regions correspond to a wide range of task-space positions. We sample three configurations from the same certified region and plot the corresponding task-space position in different colors.} \label{F: iiwa multi posture}
    \end{subfigure}
    \hfill
    \begin{subfigure}[c]{0.64\textwidth}
    \centering
    \includegraphics[width=1.\textwidth]{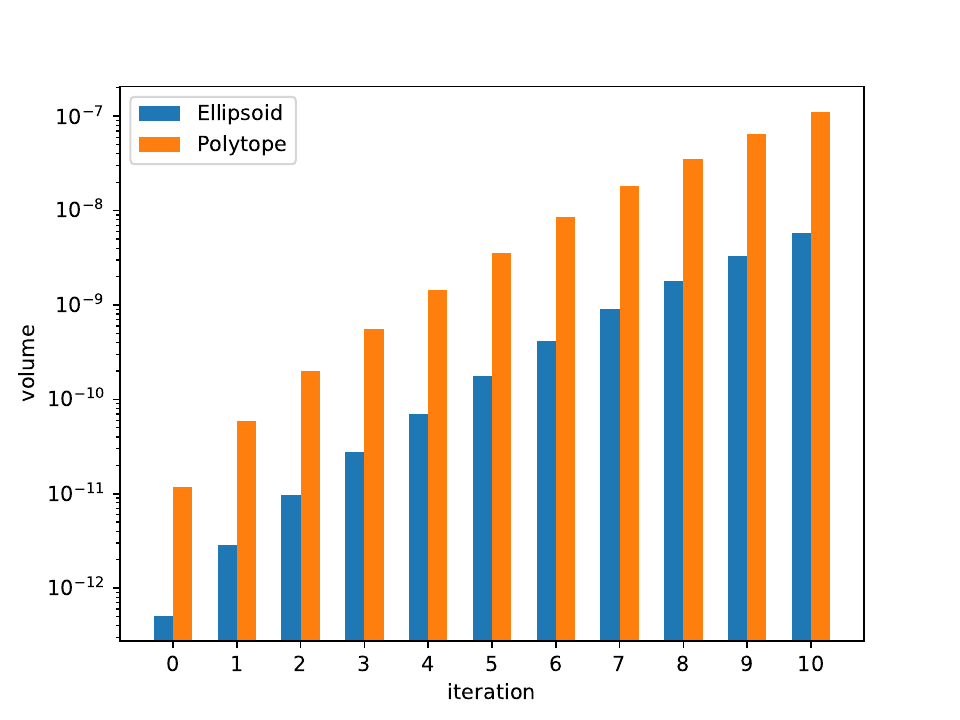}
    \caption{A single region for the 7-DOF KUKA iiwa is grown over the course of 11 iterations of Algorithm \ref{Alg: Bilinear Alternation}. We compare the volume of the maximum volume inscribed ellipsoid to the volume of the polytopic region at each iteration and show that the volume improves by a factor of 10,000.} \label{F: iiwa vol growth}
    \end{subfigure}
    \caption{Algorithm \ref{Alg: Bilinear Alternation} grows certified regions which contain configurations reaching a large portion of the task space. We show that our algorithm is capable of growing the volume of a certified region by a factor 10,000 over the course of just 11 iterations.}
    \label{fig:iiwa_shelf_volume}
    \vspace{-13.5pt}
\end{figure}

\subsubsection{12-DOF Bimanual KUKA IIWA Example} \label{S: bimanual iiwa}
We next consider designing regions to avoid self-collision for a robot consisting of two KUKA iiwa arms with the final joint welded (rotation of the final joint does not change the configuration of any geometry for this robot). This robot contains 12-DOF. This system tests the scalability of our algorithm due to the degree of the polynomials involved in the forward kinematics, as well as the complexity of the collision geometries.

Solving the largest certification program in \eqref{E: cert by hyperplane poly} takes 105 minutes, while the program in \eqref{E: polytope growth program} takes 4 minutes. The increase in solve times compared to the single iiwa environment from Section \ref{S: iiwa and shelf} is best attributed to the increase in the size of the semidefinite variables due to the larger DOF. The largest semidefinite matrix in both programs have $64$ rows and correspond to certifying that the two tips of the iiwas do not collide. 

Nonetheless, our algorithm again finds certified, 30-face polytopic regions of TC-space which correspond to a wide range of task-space positions as seen in Figure \ref{F: bimanual iiwa position}. Moreover, the same region is quite tight to the TC-space obstacle; one sampled configuration in the certified region, shown in Figure \ref{F: bimanual close}, corresponds to just $7.3$mm of separation between the two arms. 

\begin{figure*}[htb]
\centering
\begin{subfigure}[c]{0.45\textwidth}
\centering
\includegraphics[width=0.9\textwidth]{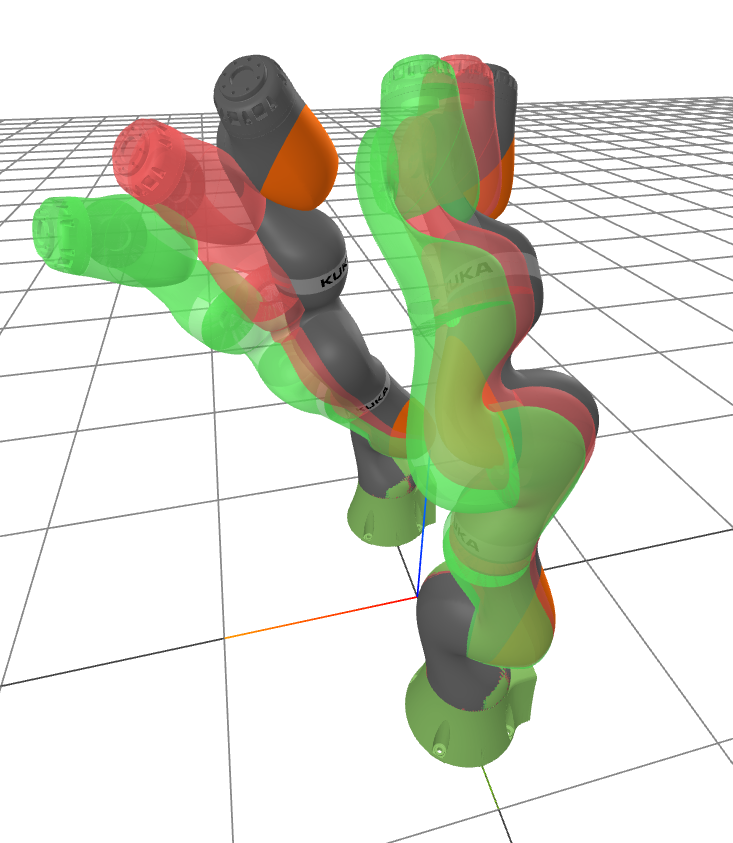}
\subcaption{Multiple configurations of the 12-DOF, bimanual iiwa manipulator sampled from a single certified region of TC-free. Each configuration is shown in a separate color.} \label{F: bimanual iiwa position}
\end{subfigure}
\hfill
\begin{subfigure}[c]{0.45\textwidth}
\vspace{9.5mm}
\centering
\includegraphics[width=0.9\textwidth]{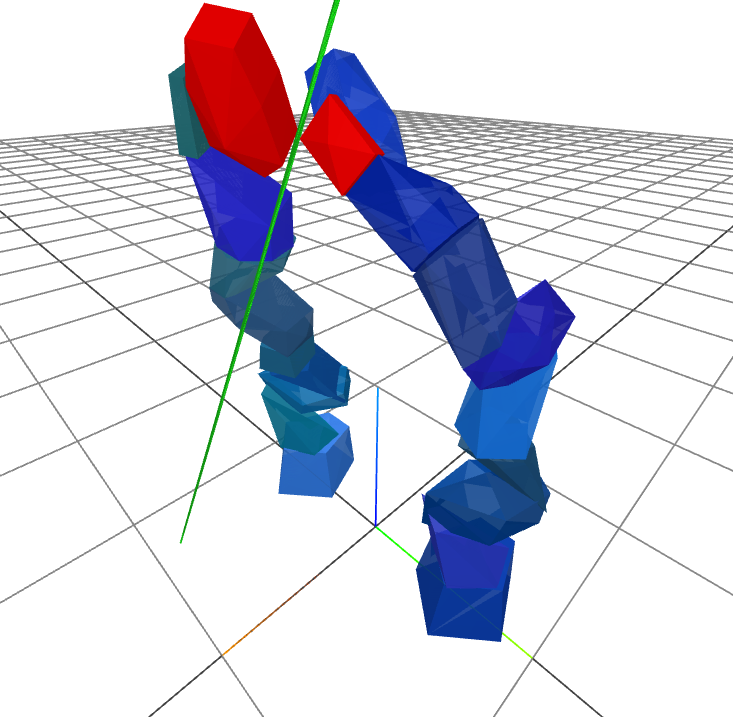}
\vspace{4mm}
\subcaption{The geometries of the bimanual iiwa from Figure \ref{F: bimanual iiwa position} are tightly approximated using polytopes. At one position in the certified TC-free region, the two geometries highlighted in red are separated by just $7.3$mm.}
\label{F: bimanual close}
\end{subfigure}
\caption{Algorithm \ref{Alg: Bilinear Alternation} finds certified polytopic regions of TC-free even for high DOF systems in reasonable times. The algorithm is also not conservative. It finds large regions which correspond to a broad range of task-space positions. Moreover, the regions are very tight to the TC-space obstacle, finding configurations which lead to very small separation between the task-space objects.}
\label{Fig: dual_iiwa}
\end{figure*}

\subsection{UR3e Robot}
In this section, we test our algorithm on a UR3e robot with a gripper mounted at the wrist. The robot's links are approximated by cylinders and we weld the gripper's prismatic joints so that each UR3e has a total of 6 DOFs. This section differs from the KUKA iiwa experiment in Section \ref{S: iiwa} due to the introduction of non-polytopic collision geometries into the scene. Similar to Section \ref{S: iiwa}, we test our approach for a scene where the robot is reaching into a shelf, as well as a bimanual set up.


\subsubsection{6-DOF UR3e With a Shelf}
\begin{figure}[htb]
\captionsetup[subfigure]{justification=centering}
\centering
\begin{subfigure}{0.45\textwidth}
\includegraphics[width=0.95\textwidth]{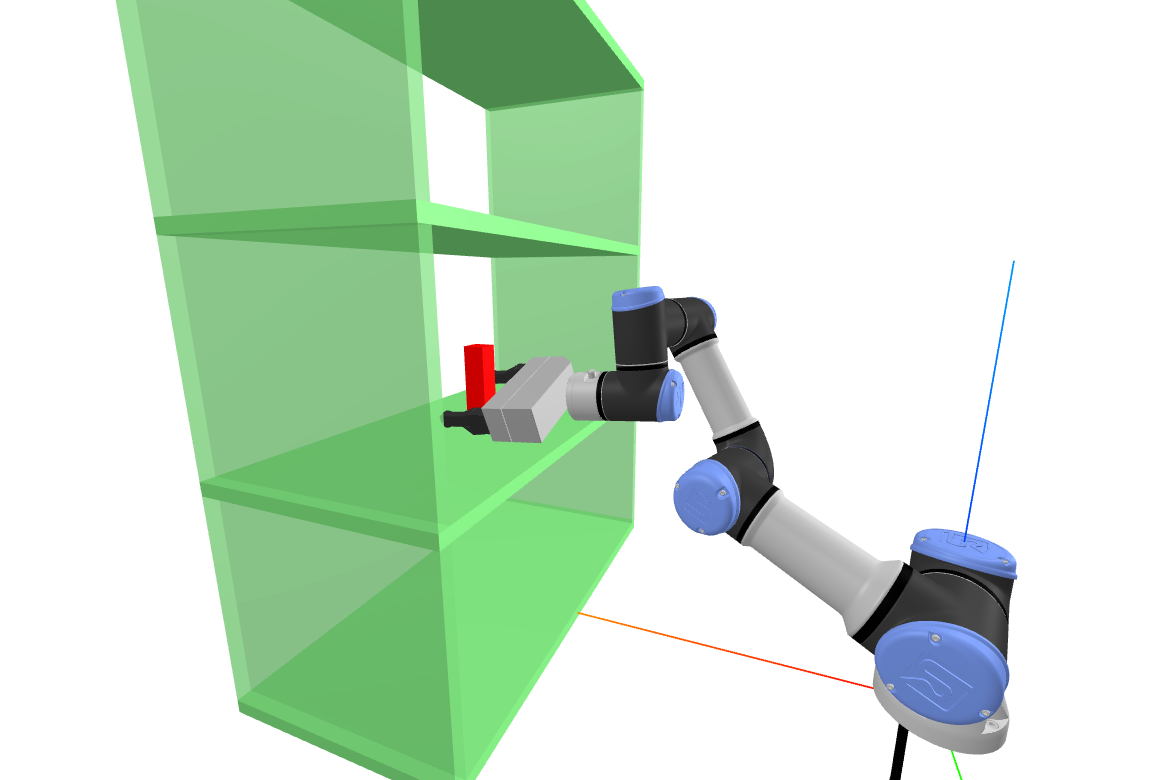}
\subcaption{}
\label{figure:ur_shelf1}
\end{subfigure}
\hfill
\begin{subfigure}{0.45\textwidth}
\includegraphics[width=0.95\textwidth]{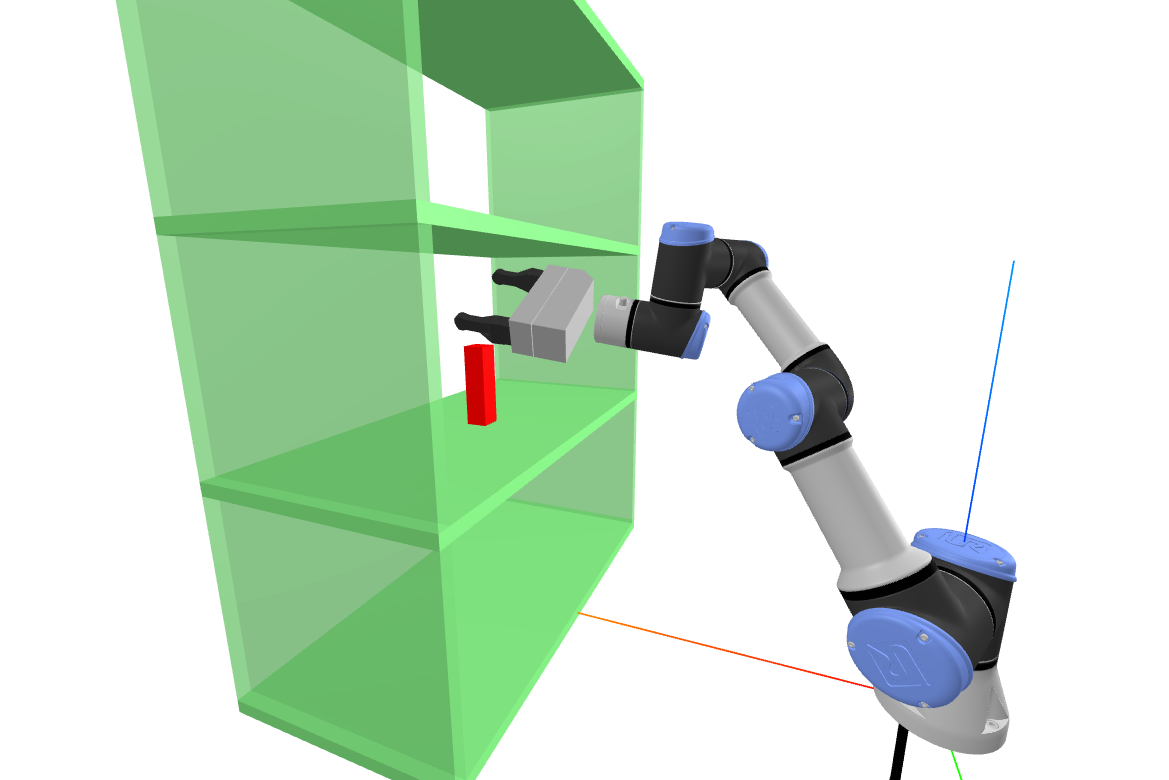}
\subcaption{}
\label{figure:ur_shelf2}
\end{subfigure}
\par \bigskip
\begin{subfigure}{0.45\textwidth}
\includegraphics[width=0.95\textwidth]{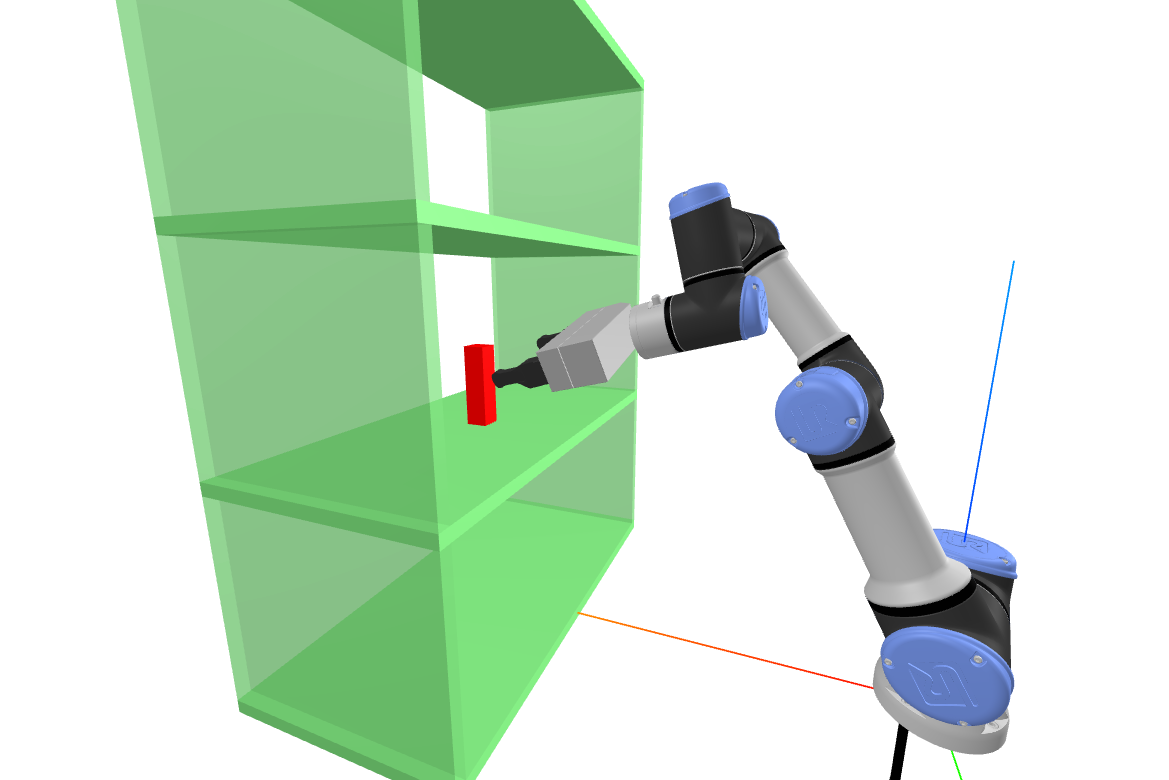}
\subcaption{}
\label{figure:ur_shelf3}
\end{subfigure}
\begin{subfigure}{0.45\textwidth}
\includegraphics[width=0.95\textwidth]{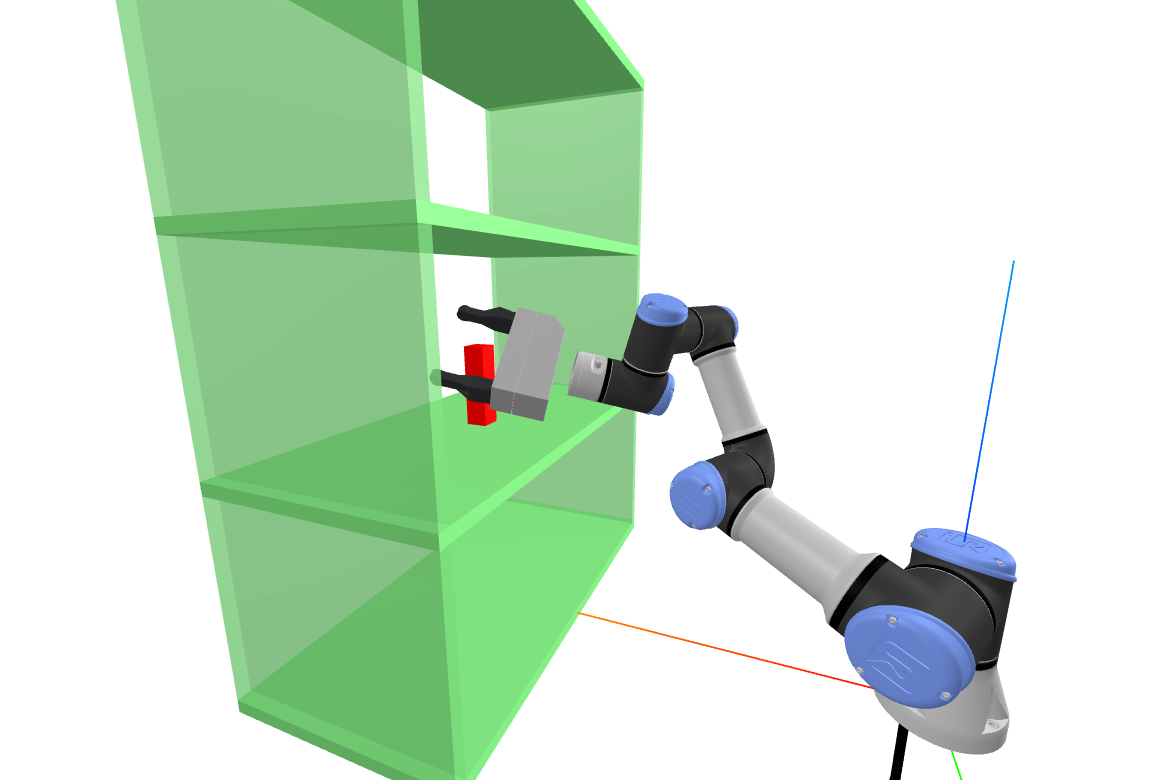}
\subcaption{}
\label{figure:ur_shelf4}
\end{subfigure}
\caption{Different postures sampled within one certified TC-space region for a UR3e robot with gripper. The certified-region include both the gripper reaching the red box in the center of the shelf (Fig.\ref{figure:ur_shelf1}), retracting from the shelf (Fig.\ref{figure:ur_shelf3}), and reaching different regions within the shelf while avoiding the red box (Fig.\ref{figure:ur_shelf2} and \ref{figure:ur_shelf4}). An animation of the range of configurations attainable in this region are available \href{https://alexandreamice.github.io/project/c-iris/ur_single.html}{here}.}
\label{fig:ur_shelf}
\end{figure}

In Figure \ref{fig:ur_shelf}, we consider a UR3e robot reaching into a shelf to grasp a small box shaped object. To simulate a situation where the robot is attempting to pick up the red object, we use Algorithm \ref{Alg: Bilinear Alternation} to grow a certified, TC-free polytope (with 12 faces) near the object. Figure \ref{fig:ur_shelf} shows a variety of postures sampled from the final TC-free polytope and demonstrates that within a single region, our robot is able to reach into the shelf to grasp the object, retract away from the shelf, and maneuver within the shelf while avoiding the object.

Similar to Section \ref{S: iiwa and shelf}, the largest semidefinite variables in programs \eqref{E: cert by hyperplane poly} and \eqref{E: polytope growth program} has $16$ rows with program \eqref{E: cert by hyperplane poly} taking about $56$s to solve.

\subsubsection{12-DOF Bimanual UR3e}

\begin{figure}[htb]
\captionsetup[subfigure]{justification=centering}
\centering
\begin{subfigure}{0.48\textwidth}
\includegraphics[width=0.98\textwidth]{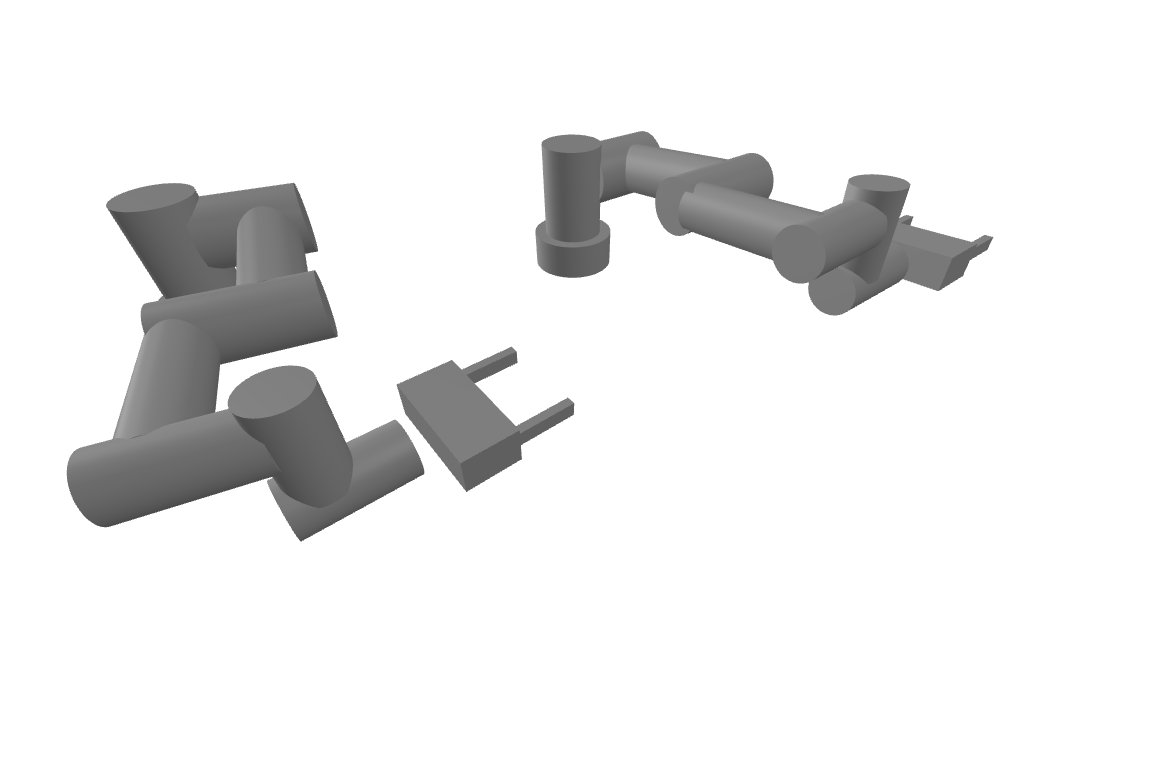}
\caption{}\label{fig:dual_ur_posture1}
\end{subfigure}
\begin{subfigure}{0.48\textwidth}
\includegraphics[width=0.98\textwidth]{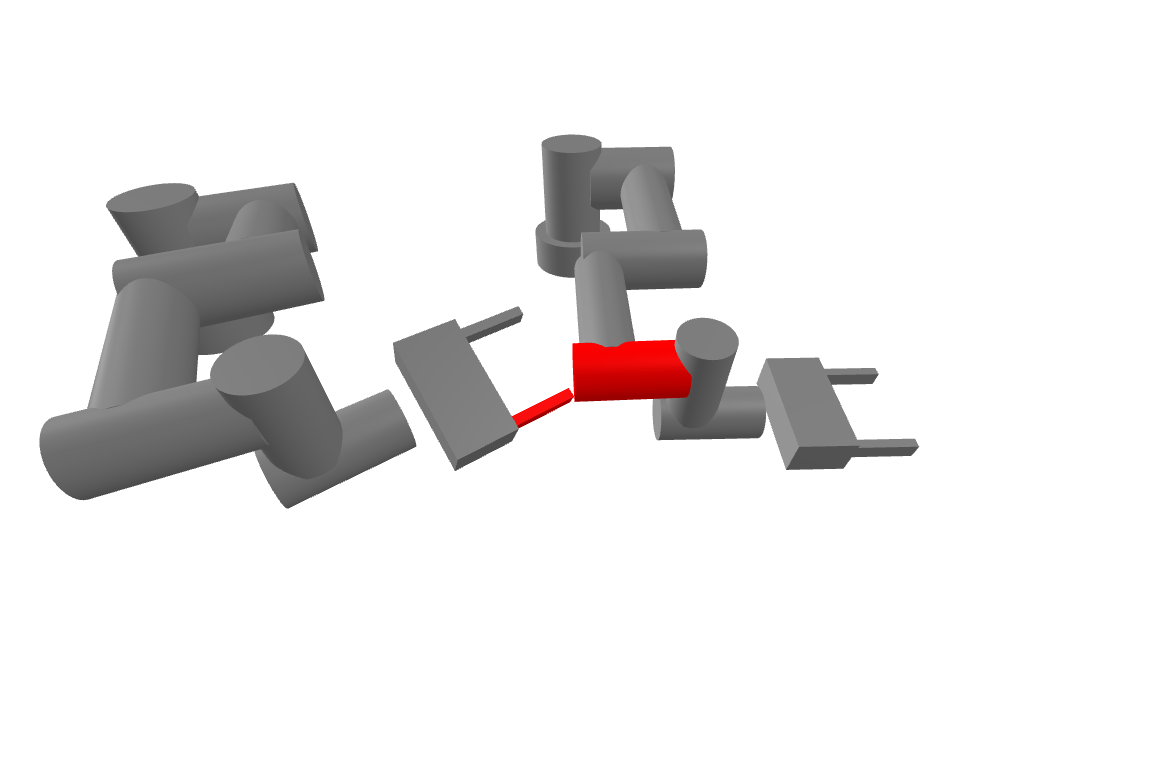}
\caption{}
\label{fig:dual_ur_posture2}
\end{subfigure}
\caption{Top down view of two postures sampled within one certified TC-space region on the dual UR3e platform. In the right figure we highlight the two collision geometries that are separated by only 0.3mm. A dynamic visualization of the range of attainable postures is available by running this \href{https://deepnote.com/workspace/alexandre-amice-c018b305-0386-4703-9474-01b867e6efea/project/C-IRIS-7e82e4f5-f47a-475a-aad3-c88093ed36c6/notebook/dual_ur-8fc84da71e494588bbc82350826b417a}{notebook}}
\label{fig:dual_ur}
\end{figure}

Finally, we demonstrate our algorithm on a dual UR3e platform shown in Figure \ref{fig:dual_ur}. Again, we emphasize that we are able to find large regions of TC-configuration space which correspond to diverse positions in task space with the postures in Figure \ref{fig:dual_ur_posture1} and \ref{fig:dual_ur_posture2} being drawn from the same certified region (a 13-face polytope). Moreover, these regions are very tight to the TC-configuration space obstacle with the two bodies highlighted in red in Figure \ref{fig:dual_ur_posture2} being just $0.3$mm apart. For this example, the largest positive semidefinite matrices in \eqref{E: cert by hyperplane poly} and \eqref{E: polytope growth program} has 128 rows with the largest program taking about $35$ minutes to solve. This program solves faster than the analogous program for the bimanual iiwa from  Section \ref{S: bimanual iiwa} because we require fewer polynomial positivity conditions to certify that the UR3e's cylindrical geometries are on a given side of a plane compared to the polytopic approximation used for the iiwa.

\section{Conclusion} \label{S: Conclusion}

Understanding the complicated geometry of C-free is an essential step to designing safe, collision-free motion plans. In this work, we presented an approach for describing a rational parametrization of C-free, known as TC-free, using a union of polytopes. Our primary contributions are two Sums-of-Squares program \eqref{E: cert by hyperplane poly} and \eqref{E: dual psatz poly cert} which can certify that a polytopic region of TC-space is collision-free, as well as another program \eqref{E: polytope growth program} which finds a local improvement that increases the size of a TC-free polytope. We prove that feasibility of our certification programs \eqref{E: cert by hyperplane poly} and \eqref{E: dual psatz poly cert} are both necessary and sufficient for proving that a polytopic region of TC-space is collision-free and we combine programs \eqref{E: cert by hyperplane poly} and \eqref{E: polytope growth program} into a practical algorithm for describing TC-free as a union of certified, collision-free polytopes in the TC-space. We deployed our algorithm on both simple and realistic environments and demonstrate that Algorithm \ref{Alg: Bilinear Alternation} finds large TC-space regions which correspond to diverse positions in task space. We demonstrate that these regions are not conservative and very tight to the TC-space obstacle even for 12-DOF systems by showing postures with just millimeters of separation.

The presented method works for TC-spaces of arbitrary dimensions, makes only very mild assumptions on the kinematics of our robot, and makes no assumptions about the shape of the TC-space obstacles. Moreover, it only relies on the mild assumption that obstacles in the task space are described as unions of convex sets, an assumption that is frequently satisfied whenever a given environment is simulated. 

Such certified descriptions of TC-free find practical application in both randomized and optimization-based collision-free motion planning algorithms, providing a means to certify safety of an entire trajectory by checking membership in a set rather than by finite sampling which can be prone to false assertions of safety. Moreover, the convexity of the generated regions is particularly attractive to optimization-based methods such as the GCS framework of \cite[]{marcucci2022motion}. Future work intends to further explore these applications as well as practical algorithms for seeding Algorithm \ref{Alg: Bilinear Alternation} to obtain good coverage of TC-free with few regions.

\section{Acknowledgement}
This work was supported by the MIT Quest For Intelligence.
\bibliographystyle{SageH}
\bibliography{bibliographies/amice_refs, bibliographies/refs}
\clearpage

\appendix
\section{Algebraic Kinematics}\label{A: alg kin}
An in depth review of algebraic kinematics and low order pairs can be found in \cite[Chapter 4]{wampler2011numerical}. We include a brief review in this appendix for completeness.

A mechanism composed of $N+1$ links is considered algebraic if each link is connected by one of the following five joints:
\begin{itemize}
        \item Revolute (R): a 1-DOF joint permitting revolution about an axis of symmetry. An example is a door handle.
        \item Prismatic (P): a 1-DOF joint permitting translation along an axis. An example is a linear rail.
        \item Cylindrical (C): a 2-DOF joint permitting both revolution about an axis of symmetry and independent translation along a given axis. An example is the rods of a Foosball table.
        \item Planar (E): A 3-DOF joint permitting translation and rotation in a two-dimensional plane. An example is hockey puck moving on the surface of the ice. 
        \item Spherical (S): A 3-DOF joint permitting free rotation between two links. An example is the human shoulder.
\end{itemize}
We recall from Section \ref{S: Rat Forward} that the pose of a point $A$ expressed in the reference frame $F$, written as a function of the robot configuration $q$ can be expressed as
\begin{align}
    \begin{bmatrix}
        \leftidx{^F}R^{A}(q) & \leftidx{^F}p^{A}(q) \\
        0_{1 \times 3} & 1 \\
    \end{bmatrix}
    =
    \prod_{i \in \calI_{F, A}} \leftidx{^{P_{i}}}X^{C_i}(q_{i}) \ \leftidx{^{C_i}} X ^{P_{i+1}}
\end{align}
where $\leftidx{^{P_{i}}}X^{C_{i}}(q_{i})$ is a rigid transform describing the relative motion allowed by the $i${\textsuperscript{th}} joint. The matrices $\leftidx{^{P_{i}}}X^{C_{i}}(q_{i})$ are in general restriction of the following forms
\begin{align} \label{E: gen low order pair complete}
    \leftidx{^{P_{i}}}X^{C_{i}}(q_{i}) &=
    \begin{cases}
    \begin{bmatrix}
        \cos(\theta_{i}) & -\sin(\theta_{i}) & 0 & x_{i} \\
        \sin(\theta_{i}) &  \cos(\theta_{i}) & 0 & y_{i} \\
        0  & 0  & 1 & z_{i} \\
        0 & 0 & 0 & 1
    \end{bmatrix}
    &
    \text{if $i$\textsuperscript{th} joint is one of R, P, C, or E}
    \\
    \begin{bmatrix}
        U(\psi_{i}) & 0_{3 \times 1} \\ 0 _{1 \times 3} & 1
    \end{bmatrix}
    &
    \text{if $i$\textsuperscript{th} joint is S}
    \end{cases}
\end{align}
The specific restrictions for R, P, C, and E joints are given in Table \ref{Tab: gen low order pair complete}. The matrix $U$ is an element of $SO(3)$ parametrized using Euler angles $\{\phi_{i,x}, \phi_{i,y}, \phi_{i,z}\}$.

\begin{table}[H]
\centering
    \begin{tabular}{c | c | c}
        Joint & Restriction & Definition of $q_{i}$ \\
        \hline
        R & $x_{i} = y_{i} = z_{i} = 0$ & $q_{i} = \{\theta_{i}\}$ \\
        \hline
        P & $\theta_{i} = x_{i} = y_{i} = 0$ & $q_{i} = \{z_{i}\}$ \\
        \hline
        C & $x_{i} = y_{i} = 0$ & $q_{i} = \{\theta_{i}, z_{i}\}$ \\
        \hline
        E & $z_{i} = 0$ & $q_{i} = \{\theta_{i}, x_{i}, y_{i}\}$ \\
        \hline
        S & see equation \eqref{E: sphere joint} & $q_{i} = \{\phi_{i,x}, \phi_{i,y}, \phi_{i,z}\}$
    \end{tabular}
    \caption{parameterization of algebraic joints in terms of the matrix given in \eqref{E: gen low order pair complete}.}
    \label{Tab: gen low order pair complete}
\end{table}

We remark that the joints C, E, and S can be constructed by the composition of R and P joints.
\begin{itemize}
    \item A C joint is a composition of an R joint and a P joint:
    \begin{align}
        \begin{bmatrix}
        \cos(\theta_{i}) & -\sin(\theta_{i}) & 0 & 0 \\
        \sin(\theta_{i}) &  \cos(\theta_{i}) & 0 & 0 \\
        0  & 0  & 1 & z_{i} \\
        0 & 0 & 0 & 1
    \end{bmatrix}
    =
    \begin{bmatrix}
        \cos(\theta_{i}) & -\sin(\theta_{i}) & 0 & 0 \\
        \sin(\theta_{i}) &  \cos(\theta_{i}) & 0 & 0 \\
        0  & 0  & 1 & 0 \\
        0 & 0 & 0 & 1
    \end{bmatrix}
    \begin{bmatrix}
        1 & 0 & 0 & 0 \\
        0 &  1 & 0 & 0 \\
        0  & 0  & 1 & z_{i} \\
        0 & 0 & 0 & 1
    \end{bmatrix}
    \end{align}

    \item An E joint is the composition of one R joint and two P joints
    \begin{align}
        \begin{bmatrix}
        \cos(\theta_{i}) & -\sin(\theta_{i}) & 0 & x_{i} \\
        \sin(\theta_{i}) &  \cos(\theta_{i}) & 0 & y_{i} \\
        0  & 0  & 1 & 0 \\
        0 & 0 & 0 & 1
    \end{bmatrix}
    =
    \begin{bmatrix}
        1 & 0 & 0 & x_{i} \\
        0 &  1 & 0 & 0 \\
        0  & 0  & 1 & 0 \\
        0 & 0 & 0 & 1
    \end{bmatrix}
    \begin{bmatrix}
        1 & 0 & 0 & 0 \\
        0 &  1 & 0 & y_{i} \\
        0  & 0  & 1 & 0 \\
        0 & 0 & 0 & 1
    \end{bmatrix}
    \begin{bmatrix}
        \cos(\theta_{i}) & -\sin(\theta_{i}) & 0 & 0 \\
        \sin(\theta_{i}) &  \cos(\theta_{i}) & 0 & 0 \\
        0  & 0  & 1 & 0 \\
        0 & 0 & 0 & 1
    \end{bmatrix}
    \end{align}

    \item An S joint is the composition of three R joints expressed as Euler angles.
    \begin{align} \label{E: sphere joint}
    U(\psi_{i})
    =
    \begin{bmatrix}
        \cos(\psi_{i,x}) & -\sin(\psi_{i,x}) & 0 \\
        \sin(\psi_{i,x}) &  \cos(\psi_{i,x}) & 0 \\
        0 & 0 & 1
    \end{bmatrix}
    \begin{bmatrix}
        \cos(\psi_{i,y}) & 0 & -\sin(\psi_{i,y}) \\
         0 &  1 & 0 \\
        \sin(\psi_{i,y})  & 0  & \cos(\psi_{i,y}) \\
    \end{bmatrix}
    \begin{bmatrix}
        1 & 0 & 0  \\
        0 & \cos(\psi_{i,z}) & -\sin(\psi_{i,z}) \\
        0 & \sin(\psi_{i,z}) &  \cos(\psi_{i,z})   \\
    \end{bmatrix}
    \end{align}

Our approach presented for a robot composed of R and P joints can be extended to handle any algebraic mechanism by consider the other algebraic joints as compositions of R and P joints.
    
\end{itemize}

\section{Definition of Archimedean}\label{A: Archimedean}
In this section we formally define the Archimedean property that appears in Theorem \ref{T: Putinar} and Theorem \ref{T: Putinar Dual}. 
\begin{definition} \label{D: Archimedean}
A semialgebraic set $\calS_{g} = \{x \mid g_{i}(x) \geq 0, i \in [n]\}$ is Archimedean if there exists $N \in \setN$ and $\lambda_{i}(x) \in \bSigma$ such that:
\begin{align*}
    N - \sum_{i=1}^{n}x_{i}^2 = \lambda_{0}(x) + \sum_{i=1}^{n} \lambda_{i}(x)g_i(x)
\end{align*}
\end{definition}

\section{Semialgebraic Descriptions of Set Membership for Common Convex Bodies} \label{A: Semialgebraic Set Memebership}

\begin{table}[H]
\setlength\extrarowheight{-10pt}
\small{
\centering
    \begin{tabular}{p{1.6in} | p{0.65in} | p{3in}}
    \centering{Body} & 
    \Centering{Variables} &
    \Centering{Description of $\calA(s)$ as a semi-algebraic set}
    \\
    \hline
    \begin{tabular}{p{1.6in}}
    \RaggedRight{V-rep Polytope with $m$ vertices $v_{i}$ at position 
    $
    \leftidx^{F}p^{v_{i}}(s) =
    \frac{\leftidx^{F}f^{v_{i}}(s)}
    {\leftidx^{F}g^{v_{i}}(s)}
    $
    }
    \end{tabular}
    &
    \begin{tabular}{p{0.65in}}
    $\{s,x, \mu\}$
    \end{tabular}
    &
    \begin{tabular}{p{3.0in}}
    \begin{gather*}
        h_{1}(s, x, \mu) = \left(\prod_{i}\leftidx{^F}g^{v_{i}}\right) \left(x-
         \sum_{i=1}^{m} \mu_{i}\left(\frac{\leftidx^{F}f^{v_{i}}(s)}{\leftidx^{F}g^{v_{i}}(s)} \right)\right) 
        \\
         h_{2}(\mu) = 1-\sum_{i=1} \mu_{i}    
        \\
        \gamma_{i}(\mu_{i}) = \mu_{i}, ~ i \in [m]
    \end{gather*}
    \end{tabular}
    \\
    \hline
    \begin{tabular}{p{1.6in}}
    \RaggedRight{Sphere with center $o$ at position $\leftidx^{F}p^{o}(s) = \frac{\leftidx^{F}f^{o}(s)}{\leftidx^{F}g^{o}(s)}$ and radius $r$}
    \end{tabular}
    &
    \begin{tabular}{p{0.65in}}
    $\{s,x\}$
    \end{tabular}
    &
    \begin{tabular}{p{3.0in}}
    \begin{multline*}
    \gamma_{1}(s,x) = 
    \left(\leftidx^{F}g^{o}(s)\right)^{2}
    \left(r^{2}- \norm{x - \frac{\leftidx^{F}f^{o}(s)}{\leftidx^{F}g^{o}(s)}}^{2}\right)
    \end{multline*}
    \end{tabular}
    \\
    \hline
    \begin{tabular}{p{1.6in}}
    \RaggedRight{Capsule, the convex hull of two spheres with centers 
$c_{1}$ and $c_{2}$ at positions $\leftidx^{F}p^{o_{i}}(s) = \frac{\leftidx^{F}f^{o_{i}}(s)}{\leftidx^{F}g^{o_{i}}(s)}$ and radii $r_{1}$, $r_{2}$
        }
    \end{tabular}
    &
    \begin{tabular}{p{0.6in}}
    $\{s,x,\mu\}$
    \end{tabular}
    &
    \begin{tabular}{p{3.0in}}
    \begin{gather*}
        \frac{\leftidx^{F}f^{o_{\mu}}}{\leftidx^{F}g^{o_{\mu}}} = \mu \frac{\leftidx^{F}f^{o_{1}}(s)}{\leftidx^{F}g^{o_{1}}(s)} + (1-\mu)\frac{\leftidx^{F}f^{o_{2}}(s)}{\leftidx^{F}g^{o_{2}}(s)}
        \\
        r_{\mu} = \mu r_{1} + (1-\mu) r_{2}
        \\
    \gamma_{1}(s, x,\mu) =\left(\leftidx^{F}g^{o_{\mu}}(s)\right)^{2}
    \left(
    r_{\mu}^{2}
    -  \norm{x - \frac{\leftidx^{F}f^{o_{\mu}}}{\leftidx^{F}g^{o_{\mu}}}}^{2}
    \right)
    \\
    \gamma_{2}(\mu) = \mu
    \\
        \gamma_{3}(\mu) = 1-\mu
    \end{gather*}
    \end{tabular}
    \\
    \hline
    \begin{tabular}{p{1.6in}}
    \RaggedRight{
    Cylinder, the convex hull of two circles with centers $o_{1}$ and $o_{2}$, at position $\leftidx^{F}p^{o_{i}}(s) = \frac{\leftidx^{F}f^{o_{i}}(s)}{\leftidx^{F}g^{o_{i}}(s)}$, lying in the plane normal to $\leftidx^{F}p^{o_{1}}(s) - \leftidx^{F}p^{o_{2}}(s)$, and with radii $r_{1}$ and $r_{2}$.
    }
    \end{tabular}
    &
    $\{s,x,v, \mu\}$
    &
    \begin{tabular}{p{3.0in}}
    \begin{gather*}
    \frac{\leftidx^{F}f^{o_{\mu}}(s)}{\leftidx^{F}g^{o_{\mu}}(s)} = \mu \frac{\leftidx^{F}f^{o_{1}}(s)}{\leftidx^{F}g^{o_{1}}(s)} + (1-\mu)\frac{\leftidx^{F}f^{o_{2}}(s)}{\leftidx^{F}g^{o_{2}}(s)}
        \\
    r_{\mu} = \mu r_{1} + (1-\mu) r_{2}
    \\
    h_{1}(v, s) = v^{T}\left(\frac{\leftidx^{F}f^{o_{1}}(s)}{\leftidx^{F}g^{o_{1}}(s)} -\frac{\leftidx^{F}f^{o_{2}}(s)}{\leftidx^{F}g^{o_{2}}(s)}\right)
    \\
    h_{2}(s, x, \mu, v) = x - \frac{\leftidx^{F}f^{o_{\mu}}(s)}{\leftidx^{F}g^{o_{\mu}}(s)} - v
    \\
    \gamma_{1}(v, \mu) = r_{\mu}^{2} - v^{T}v 
    \\
    \gamma_{2}(\mu) = \mu
    \\
    \gamma_{3}(\mu) = 1-\mu
    \end{gather*}
    \end{tabular}
    \end{tabular}
    \caption{Parameterizations of the condition that $x$ lies in a convex body that moves rigidly as a function of $s$.} \label{Tab: shape conditions table poly}
    }
\end{table}

\section{Parametrized Hyperplane Separation Condition for the Cylinder}\label{A: cylinder matrix sos}
To derive the hyperplane separation condition for cylinder, we first attach a geometric frame $G$ to the cylinder, as shown in Fig.\ref{fig:cylinder}. The cylinder's geometric frame $G$'s origin coincides with the cylinder's center, with the $z$ axis of the $G$ frame along the cylinder axis. The height of the cylinder is $2h$, with the top/bottom circle radius being $r_1$ and $r_2$ respectively.
\begin{figure}
\centering
\includegraphics[width=0.5\textwidth]{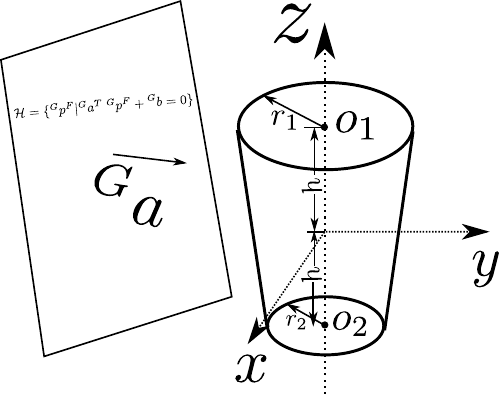}
\caption{Illustration of the cylinder on one side of the plane $\mathcal{H}$, with the plane normal being $\leftidx{^G}a$, expressed in the cylinders geometry frame $G$.}
\label{fig:cylinder}
\end{figure}

We first write the plane $\mathcal{H}$ with its parameters $\leftidx{^G}a(s), \leftidx{^G}b(s)$ in the cylinder's geometric frame $G$ and derive the conditions on $\leftidx{^G}a(s), \leftidx{^G}b(s)$. The cylinder is in the positive side of the plane if and only if both its top and bottom rim are on the positive side of the plane, namely
\begin{subequations}
\begin{align}
\left(\leftidx{^G}a(s)\right)^T \begin{bmatrix} r_1\cos\alpha\\
r_1\sin\alpha\\
h
\end{bmatrix} + \leftidx{^G}b(s) \ge 0\; \forall \alpha\\
\left(\leftidx{^G}a(s)\right)^T \begin{bmatrix} r_2\cos\alpha\\
r_2\sin\alpha\\
-h
\end{bmatrix} + \leftidx{^G}b(s) \ge 0\; \forall \alpha.
\end{align}
\end{subequations}
Taking the infimum of both sides with respect to $\alpha$ makes the above conditions equivalent to
\begin{subequations}
\begin{align}
\leftidx{^G}a_z(s) h + \leftidx{^G}b(s) \ge r_1 \norm{\begin{bmatrix} \leftidx{^G}a_x(s) & \leftidx{^G}a_y(s)\end{bmatrix}}\label{eq:cylinder_separation1}\\
-\leftidx{^G}a_z(s) h + \leftidx{^G}b(s) \ge r_2 \norm{\begin{bmatrix} \leftidx{^G}a_x(s) & \leftidx{^G}a_y(s)\end{bmatrix}}\label{eq:cylinder_separation2}
\end{align}
\label{eq:cylinder_separation}
\end{subequations}

Next, we use the Schur complement, to reformulate \eqref{eq:cylinder_separation1} and \eqref{eq:cylinder_separation2} the positive semidefinite matrix conditions. For example, \eqref{eq:cylinder_separation1} is equivalent to
\begin{subequations}
\begin{gather}
\begin{bmatrix}
\leftidx{^G}a_z(s)h+\leftidx{^G}b(s) & 0 & r_1\ \leftidx{^G}a_x(s) \\
0 & \leftidx{^G}{a}_z(s)h + \leftidx{^G}b(s) & r_1\ \leftidx{^G}a_y(s)\\
r_1\ \leftidx{^G}a_x(s) & r_1\ \leftidx{^G}a_y(s) &\leftidx{^G}{a}_z(s)h + \leftidx{^G}b(s)
\end{bmatrix}\succeq 0
\end{gather}

As explained in Section \ref{S: Hyperplane Cert}, this polynomial PSD condition can be reformulated as the condition
\begin{gather}
u^T\begin{bmatrix}
\leftidx{^G}a_z(s)h+\leftidx{^G}b(s) & 0 & r_1\ \leftidx{^G}a_x(s) \\
0 & \leftidx{^G}{a}_z(s)h + \leftidx{^G}b(s) & r_1\ \leftidx{^G}a_y(s)\\
r_1\ \leftidx{^G}a_x(s) & r_1\ \leftidx{^G}a_y(s) &\leftidx{^G}{a}_z(s)h + \leftidx{^G}b(s)
\end{bmatrix}u\geq 0 \;\forall u. \label{E: cylinder_matrix_sos}
\end{gather}
\end{subequations}

To avoid the trivial solution $\leftidx{^G}a(s) = 0, \leftidx{^G}b(s) = 0$ (which is not in fact a separating plane), we add the extra constraint $\leftidx{^G}a^T \left(\frac{\leftidx{^G}p^{o_1} + \leftidx{^G}p^{o_2}}{2}\right) + \leftidx{^G}b \geq 1$. Here $\frac{\leftidx{^G}p^{o_1} + \leftidx{^G}p^{o_2}}{2}$ is the position of the cylinder center expressed in the geometric frame $G$ which coincides with the frame's origin. Therefore $\frac{\leftidx{^G}p^{o_1} + \leftidx{^G}p^{o_2}}{2} = 0$, and so it is sufficient to introduce the constraint
\begin{align}
\leftidx{^G}b(s) \ge 1 \label{E: cylinder_b>1}
\end{align}
to exclude the trivial solution $\leftidx{^G}a=0, \leftidx{^G}b=0$.

In our optimization program, we express the separating plane in a frame $F$ (where the choice of frame $F$ is discussed in \ref{A: Frame Selection}), not in cylinder's geometric frame $G$. Hence we need to compute $\leftidx{^G}a(s), \leftidx{^G}b(s)$ from their corresponding terms $\leftidx{^F}a(s), \leftidx{^F}b(s)$ expressed in frame $F$ and the relative transform $\leftidx{^F}X^G$ between the two frames
\begin{subequations}
\begin{gather}
\leftidx{^G}a(s) = \leftidx{^G}R^F(s) \;\leftidx{^F}a(s)\\
\leftidx{^G}b(s) = \leftidx{^F}b(s) + \left(\leftidx{^F}a(s)\right)^T \;\leftidx{^F}p^G(s)
\end{gather}
\label{E: plane frame transform}
\end{subequations}

As described in Section \ref{S: Rat Forward}, both the position $\leftidx{^F}p^G(s)$ and orientation $\leftidx{^G}R^F(s)$ are rational functions of $s$. By replacing $\leftidx{^G}a(s), \leftidx^{G}b(s)$ in \eqref{E: cylinder_matrix_sos} and \eqref{E: cylinder_b>1} with \eqref{E: plane frame transform} and requiring the resulting numerator of the rational function to be non-negative, we derive that the plane separating cylinders can be enforced via a polynomial non-negativity condition which can be formulated as sums-of-squares condition.

\section{The Certification Programs are Necessary and Sufficient} \label{A: necessary and sufficient}
 In this section, we expand our discussion on the power of the certification programs presented in Sections \ref{S: Hyperplane Cert} and \ref{S: infeasible non-collision cert}. As remarked previously, Theorems \ref{T: dual psatz poly cert is always feasible} and \ref{T: hyperplane poly cert is always feasible} are necessary as programs \eqref{E: dual psatz poly cert} and \eqref{E: cert by hyperplane poly} are infinite dimensional. It is not immediately obvious that for every robot and every scene, there exists a finite degree where in each program must become feasible when $\calP$ truly contains no collisions.
 
A second subtlety applies specifically to \eqref{E: cert by hyperplane poly}. When generalizing \eqref{E: sep hyperplane generic}, we argued that it was beneficial to search for a parametric hyperplane as a function of our TC-space variable $s$ and asserted that a polynomial parameterization was a good choice. However, it is not obvious that a polynomial parameterization is sufficient, and perhaps we require a rational or even more complicated parameterization of the plane. 

These questions about the power of SOS programming arise in other domains. For example, SOS is commonly used to search for polynomial Lyapunov functions to prove the stability of polynomial dynamical systems \cite[]{majumdar2017funnel}. However, it is known that not every stable polynomial dynamical system admits a polynomial Lyapunov function \cite[]{ahmadi2011globally}, and therefore SOS programming is a sufficient, but not necessary tool for proving the stability of dynamical systems.

Fortunately, our certification programs from \ref{S: Hyperplane Cert} and \ref{S: infeasible non-collision cert} are indeed \emph{necessary and sufficient}, in the sense that there will always exist a finite degree such that the programs become feasible if $\calP$ contains no collision. The proof of this for the program \eqref{E: dual psatz poly cert} follows immediately from Theorem \ref{T: Putinar Dual}.

\begin{proof} 
\textbf{(of Theorem \ref{T: dual psatz poly cert is always feasible})}
    Our assumptions on $\calP, ~\calA$, and $\calB$ imply that $\calS_{\calP, \calA, \calB}$ is an Archimedean set. Therefore, the feasibility of \eqref{E: dual psatz poly cert} for sufficiently high degree $\rho$ follows immediately from ``effective" versions of Theorem \ref{T: Putinar Dual} such as those given in \cite[]{nie_complexity_2007, baldi2021moment} which give explicit degree bounds.  \qed
\end{proof}

Though the proof of Theorem \ref{T: hyperplane poly cert is always feasible} is more technically involved, the key idea is simple. In short, we construct a family of continuous functions which map each TC-space configuration $s \in \calP$ to a separating plane. We then argue that this family of continuous functions must contain hyperplanes which are parametrized as polynomials. Finally, we again appeal to ``effective" versions of Theorem \ref{T: Putinar} such as those given in \cite[]{nie_complexity_2007, baldi2021moment} to show that these polynomials can be found using SOS programming.

We proceed in steps, first establishing that the set of separating planes at a point $s$ in TC-free is open.

\begin{proposition} \label{P: sep set non empty and open}
Let $\Phi(s)$ denote the set of strictly separating hyperplanes at the point $s$ for bodies $\calA(s)$ and $\calB(s)$ and let $\calP$ be a non-empty, polytopic subset of TC-free. Then $s \in \calP$ implies that $\Phi(s)$ is a non-empty, open set.
\qed
\end{proposition}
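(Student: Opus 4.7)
The plan is to handle non-emptiness and openness separately, relying only on the fact that $\calA(s)$ and $\calB(s)$ are disjoint, compact, convex bodies for every $s\in\calP$. We view the set of hyperplane parameters as $(a,b)\in\setR^{3}\times\setR$ (with the trivial element $(0,0)$ excluded when needed), and say that $(a,b)\in\Phi(s)$ iff $a^{T}x+b>0$ for all $x\in\calA(s)$ and $a^{T}y+b<0$ for all $y\in\calB(s)$.

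First, for non-emptiness, I would invoke the strict Separating Hyperplane Theorem for two disjoint compact convex sets (e.g.\ \cite[Section 2.5]{boyd2004convex}): since $\calP\subseteq$ TC-free, the hypothesis $s\in\calP$ gives $\calA(s)\cap\calB(s)=\emptyset$, and by the standing assumption both bodies are compact and convex, so a strictly separating hyperplane exists. This immediately yields $\Phi(s)\neq\emptyset$.

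Next, for openness, I would fix an arbitrary $(a_{0},b_{0})\in\Phi(s)$ and define the two support-type functions
\begin{align*}
\alpha(a,b) \;=\; \min_{x\in\calA(s)} a^{T}x + b, \qquad
\beta(a,b) \;=\; \max_{y\in\calB(s)} a^{T}y + b.
\end{align*}
Because $\calA(s)$ and $\calB(s)$ are compact and nonempty, a standard argument (Berge's maximum theorem, or a direct $\varepsilon$--$\delta$ argument using the uniform bound $\sup_{x\in\calA(s)}\|x\|<\infty$) shows $\alpha$ and $\beta$ are continuous in $(a,b)$. By the strict separation assumption, $\alpha(a_{0},b_{0})>0$ and $\beta(a_{0},b_{0})<0$, so continuity provides an open neighbourhood $U\ni(a_{0},b_{0})$ on which both strict inequalities persist. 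Every $(a,b)\in U$ therefore lies in $\Phi(s)$, proving $\Phi(s)$ is open. I do not expect either step to present a genuine obstacle; the only mild subtlety is confirming continuity of $\alpha,\beta$, which follows from compactness of the feasible set and linearity of the objective in $(a,b)$.
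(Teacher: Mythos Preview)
Your proposal is correct and follows essentially the same approach as the paper: non-emptiness via the strict separating hyperplane theorem for disjoint compact convex sets, and openness by showing that a small perturbation of a strictly separating $(a,b)$ remains strictly separating. The only cosmetic difference is that you package the openness argument through continuity of the support-type functions $\alpha,\beta$, whereas the paper carries out the same estimate explicitly by bounding $(a+\delta v_a)^{T}x+(b+\delta v_b)$ over $x\in\calA(s)$ and $\|v\|\le 1$ to obtain a concrete $\delta$-radius.
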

\begin{proof}
     By definition, a hyperplane $\begin{bmatrix} a \\ b \end{bmatrix} \in \setR^{4}$ strictly separates $\calA(s)$ and $\calB(s)$ if and only if there exist
    positive constants $\varepsilon_{\calA}$ and $\varepsilon_{\calB}$ such that $a^{T}x + b \geq \varepsilon_{\calA}~ \forall x \in \calA(s) $and $a^{T}x + b \leq -\varepsilon_{\calB}~ \forall x \in \calB(s)$. Since the bodies $\calA(s)$ and $\calB(s)$ are strictly separating for every point $s \in \calP$, the Separating Hyperplane theorem guarantees the existence of such a vector and so $\Phi(s)$ is non-empty.

    Now consider $\begin{bmatrix} a \\ b \end{bmatrix} \in \Phi(s) \subseteq \setR^{4}$ and its $\delta$ neighborhood 
    \begin{align*}
        \calN(\delta) = 
        \left\{\begin{bmatrix} a \\ b \end{bmatrix} + \delta\begin{bmatrix} v_{a} \\ v_{b} \end{bmatrix} 
        \middle| \norm{\begin{bmatrix} v_{a} \\ v_{b} \end{bmatrix} } \leq 1\right\},
    \end{align*}
    with $\delta > 0$.
    
    We have that for all $x \in \calA$
    \begin{align*}
    (a^{T} + \delta v^{T}_{a})x + (b + \delta v_{b}) 
    \geq
    \varepsilon_{\calA} + \delta \min_{\norm{v} \leq 1,~ x \in \calA}  v^{T}_{a}x + v_{b}
    \end{align*}
    and similarly for all $x \in \calB$
    \begin{align*}
(a^{T} + \delta v^{T}_{a})x + (b + \delta v_{b}) 
    \leq
    -\varepsilon_{\calB} + \delta \max_{\norm{v} \leq 1,~ x \in \calB}  v^{T}_{a}x + v_{b}.
    \end{align*}

Letting $M_{l} = \displaystyle{\min_{\norm{v} = 1,~ x \in \calA}}  v^{T}_{a}x + v_{b}$ and $M_{u} = \displaystyle{\max_{\norm{v} = 1,~ x \in \calB}}  v^{T}_{a}x + v_{b}$, we have that all planes $\calN(\delta)$ are separating if
\begin{align*}
    0 < \delta < 
    \min \left\{
    \frac{\varepsilon_{\calA}}{\abs{M_{l}}},~
    \frac{\varepsilon_{\calB}}{\abs{M_{u}}}
    \right\}
\end{align*}
and so $\Phi(s)$ is open.
\qed
\end{proof}

\begin{proposition} \label{P: minimal neighborhood exists}
Define
\begin{align*}
        \calN(s, \delta) =
        \bigcap_{\norm{v} \leq 1} \Phi(s + \delta v)
    \end{align*}
    
    For all $s \in \calP$ there exists $\delta_{\min}(s) > 0$, not necessarily finite such that, $\calN(s, \delta)$ is non-empty and open for every $0 < \delta < \delta_{\min}$.
    
\end{proposition}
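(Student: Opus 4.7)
The plan is to exhibit a specific hyperplane that lies in $\calN(s,\delta)$ for all small $\delta$ and then leverage the argument already used in Proposition \ref{P: sep set non empty and open} to obtain openness. The key observation is that the set-valued maps $s \mapsto \calA(s)$ and $s \mapsto \calB(s)$ are continuous (in Hausdorff distance), because each is the image of a fixed compact convex body under a rigid transform whose entries are rational (hence continuous) functions of $s$ with strictly positive denominators on $\calP_{lim}$. Combined with the compactness of the unit ball $\{v : \norm{v}\le 1\}$, this continuity is the one tool needed.

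First I would fix any $(a^*, b^*) \in \Phi(s)$, whose existence is guaranteed by Proposition \ref{P: sep set non empty and open}, together with positive margins $\varepsilon_{\calA}, \varepsilon_{\calB} > 0$. Next, define the compact sets
\[
\calK_{\calA}(s,\delta) = \bigcup_{\norm{v}\le 1} \calA(s+\delta v), \qquad \calK_{\calB}(s,\delta) = \bigcup_{\norm{v}\le 1} \calB(s+\delta v),
\]
which are continuous images of the compact product $\{\|v\|\le 1\} \times$ (unit body), so they are compact and depend continuously on $\delta$ in the Hausdorff sense with $\calK_{\calA}(s,\delta) \to \calA(s)$ as $\delta \to 0^+$. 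By a standard result on optimization over continuously varying compact sets, the functions
\[
\delta \mapsto \min_{x\in\calK_{\calA}(s,\delta)} \bigl((a^*)^T x + b^*\bigr),\qquad \delta \mapsto \max_{x\in\calK_{\calB}(s,\delta)} \bigl((a^*)^T x + b^*\bigr)
\]
are continuous at $\delta = 0$, where they equal at least $\varepsilon_{\calA}$ and at most $-\varepsilon_{\calB}$ respectively. Hence there exists $\delta_{\min}(s) > 0$ such that for all $0 < \delta < \delta_{\min}(s)$ the hyperplane $(a^*, b^*)$ strictly separates $\calK_{\calA}(s,\delta)$ from $\calK_{\calB}(s,\delta)$. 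This immediately implies $(a^*, b^*) \in \Phi(s+\delta v)$ for every $\norm{v} \le 1$, so $(a^*, b^*) \in \calN(s,\delta)$ and $\calN(s,\delta)$ is non-empty. (If no collision ever occurs along any direction, $\delta_{\min}(s)$ may be taken to be $+\infty$.)

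For openness, observe that an element $(a,b) \in \calN(s,\delta)$ is exactly a hyperplane that strictly separates the two compact sets $\calK_{\calA}(s,\delta)$ and $\calK_{\calB}(s,\delta)$: strict separation of each perturbed pair upgrades to strict separation of the unions because the compactness of $\calK_{\calA}$, $\calK_{\calB}$ makes the infimum of the margin function attained and therefore strictly positive. The openness portion of the proof of Proposition \ref{P: sep set non empty and open} only used compactness of the two separated bodies---not convexity---so applying it verbatim to $\calK_{\calA}(s,\delta)$ and $\calK_{\calB}(s,\delta)$ shows that a small $\ell_2$-ball around $(a,b)$ still separates these compact sets, giving openness of $\calN(s,\delta)$.

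The main obstacle is the rigorous bookkeeping around continuity: one must carefully justify that $\calK_{\calA}$ and $\calK_{\calB}$ inherit Hausdorff continuity from the rational forward kinematics, and that infima/suprema of a fixed affine functional over these compact sets depend continuously on the parameter $\delta$. These are standard facts from convex analysis (Berge's maximum theorem suffices), but they are the only nontrivial inputs beyond reusing the earlier proposition; every other step is a direct consequence of compactness and the margin bounds established at $\delta = 0$.
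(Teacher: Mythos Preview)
Your proof is correct and follows essentially the same strategy as the paper: fix a separating hyperplane at $s$, use continuity of the forward kinematics to show it persists under small perturbations, and invoke the openness argument of Proposition~\ref{P: sep set non empty and open} (which, as you correctly note, uses only compactness and not convexity). Your packaging via the union sets $\calK_{\calA}(s,\delta)$, $\calK_{\calB}(s,\delta)$ and the appeal to Berge's maximum theorem is slightly cleaner than the paper's hand-built modulus $M_{\delta}(s)$, but the underlying argument is the same.
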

\begin{proof}

Recall that the position of every point in $\calA(s)$ is a continuous function of $s$ and that the distance from a point to a set is a continuous function \cite[]{rudin1976principles}. Therefore, the distance of every point in $\calA(s)$  to every element of $\Phi(s)$ changes continuously.
 For every $\delta > 0$ and $\begin{bmatrix} a \\ b \end{bmatrix} \in \Phi(s)$ we define:
\begin{align*}
M_{\delta}(s)
\coloneqq
\sup_{\norm{v} \leq 1}
\abs{
\inf_{x \in \calA(s)}
a^{T}x + b 
-
 \inf_{x \in \calA(s + \delta v)}
a^{T}x + b 
}
\end{align*}

We have that
\begin{align*}
   \inf_{\norm{v} \leq 1} \inf_{\substack{x \in \calA(s + \delta v)}}
    a^{T}x + b 
    \geq
    \inf_{x \in \calA(s)}
    a^{T}x + b 
    - M_{\delta}(s)
    \geq 
    \varepsilon_{\calA} - M_{\delta}(s)
\end{align*}

 Moreover, if $\delta_{2} < \delta_{1}$, then $M_{\delta_{2}}(s) \leq M_{\delta_{1}}(s)$. By continuity and monotonicity, $M_{\delta}(s) \rightarrow 0$ as $\delta \rightarrow 0$ and so there exists $\delta$ sufficiently small such that $\varepsilon_{\calA} - M_{\delta}(s) > 0$. A similar argument shows that $\delta$ can be chosen sufficiently small such that the plane $\begin{bmatrix} a \\ b \end{bmatrix}$ continues to satisfy the separating plane conditions for $\calB$. Therefore  $\begin{bmatrix} a \\ b \end{bmatrix}  \in \Phi(s + \delta v)$ for all $v$ such that $\norm{v} \leq 1$ if $\delta$ is chosen sufficiently small. It is clear that choosing $\delta$ smaller continues to ensure that $\calN(s, \delta)$ is non-empty. Openness is immediate following a similar argument to Proposition \ref{P: sep set non empty and open}.
 \qed
\end{proof}

The above proposition enables us to establish that there exists an open family of continuous functions $f(s)$ such that their outputs are always separating hyperplanes.

\begin{proposition} \label{P: calF non-empty}
Let $\calF$ be the set of continuous functions mapping $$f: s \mapsto \begin{bmatrix} a \\ b \end{bmatrix}$$ such that $f(s) \in \Phi(s)$ for all $s \in \calP$. The set $\calF$ is non-empty and open under the pointwise metric $$d(f,g) = \sup_{s \in \calP} \norm{f(s) - g(s)}.$$
\end{proposition}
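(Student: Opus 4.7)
The plan is a standard partition-of-unity construction for non-emptiness, combined with a uniform-margin argument for openness. The key structural fact I would use throughout is that $\Phi(s)$ is \emph{convex} for every $s$: if $(a_1,b_1)$ and $(a_2,b_2)$ both strictly separate $\calA(s)$ from $\calB(s)$, then any convex combination does as well, since the strict inequalities $a^{T}x + b > 0$ on $\calA(s)$ and $a^{T}y + b < 0$ on $\calB(s)$ are linear in $(a,b)$ and hence preserved under convex combinations.

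\textbf{Non-emptiness.} For each $s \in \calP$, Proposition \ref{P: minimal neighborhood exists} furnishes $\delta(s) > 0$ together with a hyperplane $p_{s} \in \calN(s, \delta(s))$, meaning $p_{s} \in \Phi(s')$ for every $s'$ within distance $\delta(s)$ of $s$. The open balls $\{B(s, \delta(s))\}_{s \in \calP}$ cover the compact polytope $\calP$, so I would extract a finite subcover $\{B(s_{i}, \delta(s_{i}))\}_{i=1}^{N}$ with associated hyperplanes $p_{i} := p_{s_{i}}$. Taking a continuous partition of unity $\{\rho_{i}\}_{i=1}^{N}$ subordinate to this subcover, I would define $f(s) = \sum_{i=1}^{N} \rho_{i}(s)\, p_{i}$. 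Continuity of $f$ is immediate. For each fixed $s$, the value $f(s)$ is a convex combination of exactly those $p_{i}$ with $\rho_{i}(s) > 0$; the support condition on $\rho_{i}$ ensures $s \in B(s_{i}, \delta(s_{i}))$ for each such index, whence $p_{i} \in \Phi(s)$. Convexity of $\Phi(s)$ then yields $f(s) \in \Phi(s)$, so $f \in \calF$.

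\textbf{Openness.} For openness, I would show that the strict-separation margin achieved by $f$ is uniformly bounded below on $\calP$. Writing $f(s) = (a(s), b(s))$, the functions $s \mapsto \inf_{x \in \calA(s)} \{a(s)^{T}x + b(s)\}$ and $s \mapsto -\sup_{y \in \calB(s)} \{a(s)^{T}y + b(s)\}$ are continuous (using continuity of $f$ together with continuous dependence of $\calA(s),\calB(s)$ on $s$, as already exploited in Proposition \ref{P: minimal neighborhood exists}) and strictly positive everywhere on the compact set $\calP$, so they attain positive minima $\varepsilon_{\calA}^{*}, \varepsilon_{\calB}^{*} > 0$. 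Combining this with the uniform bound $M = \sup_{s \in \calP}\sup_{x \in \calA(s) \cup \calB(s)} \norm{x} < \infty$ (finite because the collision geometries are compact and move continuously in $s$ over the compact region $\calP$), any $g$ with $d(f,g) < \eta$ perturbs the separation values by at most $\eta (M+1)$ pointwise. Choosing $\eta < \min\{\varepsilon_{\calA}^{*}, \varepsilon_{\calB}^{*}\}/(M+1)$ forces $g(s) \in \Phi(s)$ for every $s \in \calP$, so the $\eta$-ball around $f$ in the sup metric lies inside $\calF$.

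\textbf{Main obstacle.} The substantive work is the non-emptiness step: one needs the convexity of $\Phi(s)$ (so that the partition-of-unity blend remains a separator) paired with the local-uniform selection supplied by Proposition \ref{P: minimal neighborhood exists} (so that every $p_{i}$ appearing in the sum at $s$ actually separates at $s$, not just at $s_{i}$). The openness step is essentially routine once the uniform lower bounds on the separation margin are in hand via compactness of $\calP$.
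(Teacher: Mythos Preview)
Your proof is correct and in fact considerably more detailed than the paper's, but the non-emptiness step follows a genuinely different route. The paper argues by contradiction: it asserts that if $\calF$ were empty, then any selection $f(s)\in\Phi(s)$ would have to be discontinuous at some $s_0$ in a way that contradicts the openness and non-emptiness of $\calN(s_0,\delta)$ established in Proposition~\ref{P: minimal neighborhood exists}. Your argument is instead constructive: you observe that each $\Phi(s)$ is \emph{convex} (a structural fact the paper never states), extract a finite subcover from the local neighborhoods supplied by Proposition~\ref{P: minimal neighborhood exists}, and blend the associated constant hyperplanes with a partition of unity. This is the standard Michael-selection-type construction for lower hemicontinuous correspondences with convex values, and it is both more explicit and more robust than the paper's contradiction sketch. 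For openness, the paper simply asserts that a sufficiently small $\delta$-perturbation preserves separation; your uniform-margin argument via compactness of $\calP$ and boundedness of the collision geometries is the natural way to make that assertion precise, and it is correct as written.
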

\begin{proof}

Suppose $\calF$ were empty. Then every function satisfying $f(s) \in \Phi(s) ~\forall s \in \calP$ is not a continuous function. Namely, for every $f$ there exists a point $s_{0}$ such that for all $\delta > 0$, $f(s_{0}) \in \Phi(s_{0})$ but $f(s_{0}) \notin \calN(s_{0}, \delta)$. This contradicts the openness of $\calN(s_{0}, \delta)$ for a sufficiently small $\delta$ from Proposition \ref{P: minimal neighborhood exists} and so $\calF$ is non-empty. Openness follows from the fact that if $\delta > 0$ is chosen sufficiently small, then for every continuous $g$ satisfying $d(f,g) < \delta$, then $g$ must also separate $\calA(s)$ and $\calB(s)$ for every $s \in \calP$.

\qed
\end{proof}

We are now ready to prove Theorem \ref{T: hyperplane poly cert is always feasible}

\begin{proof}
\textbf{(of Theorem \ref{T: hyperplane poly cert is always feasible})}
    By Proposition \ref{P: calF non-empty}, $\calF$ is a non-empty open subset of continuous functions defined on the compact domain $\calP$. The Stone-Weierstrass theorem \cite[]{rudin1976principles} states that the set of polynomial functions on a compact domain is dense in the set of continuous functions in that domain under the pointwise metric. Therefore, $\calF$ must contain a map $p: s \mapsto \begin{bmatrix} a(s) \\ b(s) \end{bmatrix}$ such that each component is a polynomial. This polynomial is of finite degree and is a strictly separating hyperplane and therefore by ``effective" versions of Theorem \ref{T: Putinar} such as \cite[]{nie_complexity_2007, baldi2021moment}, there exists a Putinar certificates of finite degree certifying that $p(s)$ is a separating hyperplane.
    \qed
\end{proof}

\section{Practical aspects}\label{A: Practical Aspects}
In this section, we discuss some practical aspects for essential for enabling Algorithm \ref{Alg: Bilinear Alternation} to realistic examples. These include the choice of reference frame in which to express the forward kinematics, the selection of a finite basis for the polynomials in our SOS programs, and which aspects of \ref{Alg: Bilinear Alternation} can be parallelized.

\subsection{Choosing the Reference Frame}  \label{A: Frame Selection}
The polynomial implications upon which the certification program \eqref{E: cert by hyperplane poly} and polytope growth program \eqref{E: polytope growth program} are based require choosing a coordinate frame between each collision pair $\calA$ and $\calB$. However, as the collision-free certificate between two different collision pairs can be computed independently of each other, we are free to choose a different coordinate frame to express the kinematics for each collision pair. This is important in light of \eqref{E: gen forward kin} and \eqref{E: rational forward kinematics gen} that indicate that the degree of the polynomials $\leftidx{^F}f^{\calA_{j}}$  and $\leftidx{^F}g^{\calA_{j}}$ are equal to two times the number of joints lying on the kinematic chain between frame $F$ and the frame for $\calA$. For example, the tangent-configuration-space polynomial in the variable $s$ describing the position of the end-effector of a 7-DOF robot is of total degree $14$ when written in the coordinate frame of the robot base. However, when written in the frame of the third link, the polynomial describing the position of the end effector is only of total degree $(7-3)\times 2=8$. This observation is also used in  \cite[]{trutman2020globally} to reduce the size of the optimization program. 


The size of the semidefinite variables in \eqref{E: cert by hyperplane poly} and \eqref{E: polytope growth program} scale as the square of the degree of the polynomial used to express the forward kinematics. Supposing there are $n$ links in the kinematics chain between $\calA$ and $\calB$, then choosing the $j$th link along the kinematics chain as the reference frame $F$ leads to scaling of order $j^{2} + (n-j)^{2}$. Choosing the reference frame in the middle of the chain minimizes this complexity to scaling of order $\frac{n^2}{2}$ and we therefore adopt this convention in our experiments.

\subsection{Basis Selection} \label{A: Basis Selection}
The condition that a polynomial can be written as a sum of squares can be equivalently formulated as an equality constraint between the coefficients of the polynomial and an associated semidefinite variable known as the Gram matrix \cite[]{parrilo2004sum}. Namely, a polynomial $p(s)$ is sums-of-squares if and only if $p(s) = z(s)^T X z(s), X\succeq 0 $ where $z(s)$ is a vector of monomials and $X$ is the Gram matrix. The number of rows in the positive semidefinite Gram matrix equals to the size of the vector $z(s)$. In general, a sums-of-squares polynomial in $k$ variables of total degree $2d$ requires a Gram matrix of size ${k +d} \choose {d}$ to represent which can quickly become prohibitively large. Fortunately, the polynomials in our programs contain substantially more structure which will allow us to select a small-sized vector of monomials $z(s)$, and hence drastically reduce the size of the Gram matrices and speed up the optimization problem.

\subsubsection{Polytopic collision geometry}
We begin with the separating plane condition for polytopic collision geometries. Note that from \eqref{E: rational forward kinematics gen} that while both the numerator and denominator of the forward kinematics are of total degree $2n$, with $n$ the number of links of the kinematics chain between frame $A$ and $F$, both polynomials are of \emph{coordinate} degree of at most two (i.e. the highest degree of $s_{i}$ in any term is $s_{i}^2$). We will refer to this basis as $\nu(s)$ which is a vector containing terms of the form $\prod_{i = 1}^{n} s_{i}^{\text{degree}(s_i)}$ with $\text{degree}(s_i) \in \{0,1,2\}$ for all $3^n$ possible permutations of the exponents $\text{degree}(s_i)$.

We recall that we parametrize our hyperplane using polynomial entries. If $a_{\calA, \calB}(s)= a^T_{\calA, \calB}\eta(s)$, $b_{\calA, \calB}(s) = b^T_{\calA, \calB}\eta(s)$ for some basis $\eta$ in the variable $s$. The position of $x(s) \in \calA(s)$ is expressed in basis $\nu(s)$, then the left hand side of \eqref{E: polytope separation psatz condition} can be expressed as a linear function of the basis $\gamma(s)$, where $\gamma(s)$ contains all the possible entries that appear in the outer product $\eta(s) \nu(s)^{T}$. 
\begin{example}
    Suppose
    $$\eta(s) = \begin{bmatrix} 1 & s_{1} & s_{2} \end{bmatrix}^{T}$$ 
    and 
    $$\nu(s) =\begin{bmatrix} 1 & s_{1} & s_{1}^{2} & s_{2} & s_{2}^{2} & s_{1}s_{2} & s_{1}^{2}s_{2} &  s_{1}s_{2}^{2} &  s_{1}^{2}s_{2}^{2}\end{bmatrix}^{T}$$.

    Then:
    \begin{multline*}
    \gamma(s) =
    \Big[
        1 \quad 
        s_{1} \quad s_{1}^{2} \quad s_{1}^{3} \quad
        s_{2} \quad s_{2}^{2} \quad s_{2}^{3} \quad
        s_{1}s_{2} \quad s_{1}^{2}s_{2} \quad s_{1}^{3}s_{2} \quad
        s_{1}s_{2}^{2} \quad s_{1}^{2}s_{2}^{2} \quad s_{1}^{3}s_{2}^{2} \quad
        s_{1}s_{2}^{3} \quad s_{1}^{2}s_{2}^{3}
    \Big]
    \end{multline*}
    Namely $\gamma(s)$ contains the monomials whose degree for each $s_i$ is at most 3, and only one of $s_i$ can have degree 3 (hence $s_1^3s_2^3$ is not included in $\gamma(s)$).
\end{example}


Similarly, we must select a basis $\rho(s)$ for our multiplier polynomials $\lambda_{ij}^{\calA, \calB}(s)$. The equality in \eqref{E: polytope separation psatz condition} determines the minimum necessary basis $\rho(s)$. If the polynomial $p(s)$ is expressed in basis $\gamma(s)$, then the minimal such basis is related to an object known in computational algebra as the Newton polytope of $\gamma$ denoted $\textbf{New}(\gamma(s))$ \cite[]{sturmfels1994newton}. Denoting the linear basis
\begin{align*}
    l(s) = \begin{bmatrix} 1 & s_{1} & s_{2} & \dots & s_{N} \end{bmatrix},
\end{align*}
then exact condition is that  $$\textbf{New}(\gamma(s)) = \textbf{New}(\eta(s)) + \textbf{New}(\nu(s)) \subseteq \textbf{New}(\rho(s)) + \textbf{New}(l(s))$$ where the sum in this case is the Minkowski sum.

By using affine polynomials for separating plane parameters $a_{\calA, \calB}(s), b_{\calA, \calB}(s)$, we know that $\eta(s)$ is the same as the linear basis $l(s)$, then we obtain the condition that $\textbf{New}(\rho(s)) = \textbf{New}(\nu(s))$ and since $\nu(s)$ is a dense, even degree basis we must take $\rho(s) = \nu(s)$. A sums-of-squares polynomial in the basis of $\nu(s)$ has Gram matrix with $2^{n}$ rows. Choosing $\eta(s)$ as the constant basis would in fact result in the same condition, and therefore searching for separating planes which are linear functions of the tangent-configuration-space variable does not increase the size of the semidefinite variables. As the complexity of \eqref{E: cert by hyperplane poly} and \eqref{E: polytope growth program} are dominated by the size of these semidefinite variables, separating planes which are linear functions changes do not substantially affect the solve time but can dramatically increase the size of the regions which we can certify.

Because of this, we choose to parametrize all of our hyperplanes throughout our experiments as linear functions of the TC-space variables. We stress that in general, the choice of a linearly parametrized hyperplane, and the selection of $\rho(s)$ to be the minimum size to match the degree of the left hand side of \eqref{E: polytope separation psatz condition} may not be sufficient to prove that a region $\calP$ is collision-free, even if $\calP$ truly is collision-free. Indeed due of many complexity-theoretic results, we expect that in general $\eta(s)$ and $\rho(s)$ may need to have exponentially high degree for some robots, scenes, and polytopes $\calP$ \cite[]{stengle1996complexity}. However, in practice we have observed that the choices in this section are sufficient to certify many regions of interest, while keeping the optimization problem size tractable for state-of-art numerical solvers.

\begin{remark}
Attempting to certifying that the end-effector of a 7-DOF robot will not collide with the base using program \eqref{E: cert by hyperplane poly} using linearly parametrized hyperplanes and choosing to express conditions \eqref{E: polytope separation psatz condition} in the world frame with na\"ively chosen bases would result in semidefinite variables of size ${7+7 \choose 7} = 3432$. Choosing to express the same conditions according to the discussion in Section \ref{A: Frame Selection} and choosing the basis $\gamma(s)$ described in this section results in semidefinite matrices of rows at most $2^{\lceil7/2\rceil} = 2^{4} = 16$. The division by 2 comes from choosing the middle link as the expressed frame, hence halving the kinematic chain length.
\end{remark}

\subsubsection{Non-polytopic collision geometry} 
In this section, we use the sphere as a running example for explaining how we choose the monomial bases for certifying separation of the non-polytopic geometries; the monomial bases for capsules and cylinders can be derived in a similar manner.

As mentioned in \eqref{E: shur complement implication}, we need to impose
\begin{align}
\label{E: sphere_matrix_sos}
s \in \calP \implies
    \begin{bmatrix}
        \left((a(s))^T \ \leftidx{^F}f^{o}(s)+b(s) \ \leftidx{^F}g^{o}(s)\right)I_{3} & ra(s) \ \leftidx{^F}g^{o}(s) \\ r(a(s))^T \ \leftidx{^F}g^{o}(s) & (a(s))^T \ \leftidx{^F}f^{o}(s)+b(s) \ \leftidx{^F}g^{o}(s)
    \end{bmatrix}
    \succeq 0.
\end{align}
By the definition of positive semidefinite matrix
\footnote{A matrix $X$ is positive semidefinite if and only if $\forall\bar{u}, \begin{bmatrix}\bar{u}\\1\end{bmatrix}^TX\begin{bmatrix}\bar{u}\\1\end{bmatrix}\ge 0$}
, we know that the $4\times 4$ matrix in the right of $\implies$ in \eqref{E: sphere_matrix_sos} is positive semidefinite if and only if
\begin{align}
\forall \bar{u}\in\mathbb{R}^3,\underbrace{ \begin{bmatrix}\bar{u}\\1\end{bmatrix}^T \begin{bmatrix}
        \left((a(s))^T \ \leftidx{^F}f^{o}(s)+b(s) \ \leftidx{^F}g^{o}(s)\right)I_{3} & ra(s) \ \leftidx{^F}g^{o}(s) \\ r(a(s))^T \ \leftidx{^F}g^{o}(s) & (a(s))^T \ \leftidx{^F}f^{o}(s)+b(s) \ \leftidx{^F}g^{o}(s)
    \end{bmatrix}\begin{bmatrix}\bar{u}\\1\end{bmatrix}}_{\sigma(\bar{u}, s)} \geq 0. \label{E: sphere_matrix_sos_sigma}
\end{align}

We impose the following sufficient condition for \eqref{E: sphere_matrix_sos}, where $\calP = \{ s | c_j^T(s)\le d_j, j=1,\hdots,m\}$
\begin{subequations}
\begin{align}
\sigma(\bar{u}, s) = \lambda_0(\bar{u}, s) + \sum_{j=1}^{m}\lambda_j(\bar{u}, s) (d_j - c_j^Ts)\\
\text{for } j=0, \hdots, m, \lambda_j(\bar{u}, s) \ge 0 \;\forall \bar{u}, s.
\end{align}
\label{E: sphere_psatz}
\end{subequations}

Now we analyze the degree of the polynomial $\sigma(\bar{u}, s)$ defined in \eqref{E: sphere_matrix_sos_sigma}. As mentioned in the previous subsection, each monomial in $\leftidx^{F}f^o(s), \leftidx{^F}g^o(s)$ are of the form $\prod_{i=1}^n s_i^{\text{degree}(s_i)}, \text{degree}(s_i)\in\{0, 1, 2\}$. Combining this with 
 the choice of a separating plane $a(s), b(s)$ being affine functions of $s$, we derive that each monomial in $\sigma(\bar{u}, s)$ is of the form $\bar{u}_j^{\text{degree}(\bar{u}_j)} \prod_{i=1}^n s_i^{\text{degree}(s_i)}, \text{ where 
 } \text{degree}(\bar{u}_j)\in\{0, 1, 2\}$, $ \text{degree}(s_i)\in\{0, 1, 2, 3\}$, and at most one of $\text{degree}(s_i)$ can be 3. As an example, $\bar{u}_1^2 s_1^3s_2s_3^2$ is a valid monomial in $\sigma(\bar{u}, s)$ but $\bar{u}_1\bar{u}_2$ is not (because $\sigma(\bar{u}, s)$ doesn't contain the cross product between $\bar{u}_j, \bar{u}_k, j\neq k$). Similarly, $s_1^3s_2^3$ is not in the basis because at most one of $s_i$ can have degree 3. Given these properties on the monomials in $\sigma(\bar{u}, s)$- specifically there being no cross-product term $\bar{u}_j\bar{u}_k, j\neq k$ in $\sigma(\bar{u}, s)$- we can write the positive polynomials $\lambda_j(\bar{u}, s)$ as the summation of three SOS polynomials
 \begin{subequations}
 \begin{align}
 \lambda_j(\bar{u}, s) = \sum_{k=1}^3\lambda_{j, k}(\bar{u}_k, s)\\ \lambda_{j, k}(\bar{u}_k, s)\in\bSigma.
 \end{align}
 \end{subequations}
 For each monomial in the SOS polynomial $\lambda_{j, k}(\bar{u}_k, s)$, the degree of $\bar{u}_k \text{ and } s_i$ for $i=1,\hdots, n$ is either 0, 1, or 2. Hence the number of rows in the Gram matrix in $\lambda_{j, k}(\bar{u}_k, s)$ is of size $2^{n+1}$. By choosing the reference frame according to the convention from Appendix \ref{A: Frame Selection}, $n$ is no larger than $\ceil{N/2}$ where $N$ is the number of joints in the robot.
 \begin{remark}
     For a 6-DOF UR3erobot whose collision geometries are approximated by cylinders, to certify the collision-avoidance between the robot and objects in the world (or self-collision), the largest positive semidefinite matrix in our optimization problem has rows at most $2^{\lceil 6 / 2\rceil + 1} = 2^4 = 16$, where the division by 2 comes from choosing the middle link as the expressed link, hence halving the kinematic chain length to $\lceil6 / 2 \rceil$.
 \end{remark}

\subsection{Parallelization} \label{A: Parallelization}
While it is attractive from a theoretical standpoint to write \eqref{E: cert by hyperplane poly} as a single, large program it is worth noting that it can in fact be viewed as $K$ individual SOS programs, where $K$ is the number of collision pairs in the environment. Indeed, certifying whether pairs $(\calA_{1}, \calA_{2})$ are collision-free for all $s$ in the polytope $\calP$ can be done completely independently of the certification of another pair $(\calA_{1},  \calA_{3})$ as the constraint are not coupled between any pairs. Similarly, the search for the largest inscribed ellipsoid can be done independently of the search for the separating hyperplanes.

Solving the certification program \eqref{E: cert by hyperplane poly} as $K$ individual SOS programs has several advantages. First, as written \eqref{E: cert by hyperplane poly} has $2(m+1)K\sum_{i} \abs{\calA_{i}}$ semidefinite variables of various sizes, where $m$ is the number of inequalities in $\calP$ and $\abs{\calA_{i}}$ denotes the number of inequalities required to express that body $\calA_{i}$ is on a particular side of the plane (see Table \ref{Tab: shape conditions table poly}). In the example from Section \ref{S: Pinball} this corresponds to $18,720$ semidefinite variables. This can be prohibitively large to store in memory as a single program as the size of these semidefinite variables grow. Solving for the separating plane for each pair of collision bodies independently also enables us to determine which collision bodies cannot be certified as collision-free and allows us to terminate our search as soon as a single pair cannot be certified. Finally, decomposing the problems into subproblems enables us to increase computation speed by leveraging parallel processing.

The program \eqref{E: max inscribed ellipse in polytope} can also be solved completely independently of the certification program \eqref{E: cert by hyperplane poly} and is in general a much smaller SDP than any individual certification program. Therefore, lines $3$ and $4$ of Algorithm \ref{Alg: Bilinear Alternation} can be solved in parallel.

We note that \eqref{E: polytope growth program} cannot be similarly decomposed as on this step the variables $c_{i}^T$ and $d_{i}$ affect all of the constraints. However, this program is substantially smaller as we have fixed $2mK\sum_{i} \abs{\calA_{i}}$ of the semidefinite variables as constants and replaced them with $2m$ linear variables representing the polytope. This program is much more amenable to being solved as a single program. 

\section{Seeding Algorithm \ref{Alg: Bilinear Alternation}} \label{A: Seeding}
    Algorithm \ref{Alg: Bilinear Alternation} must be initialized with a polytope $\calP_{0}$ for which \eqref{E: cert by hyperplane poly} is feasible. In principle, the alternation proposed in Section \ref{S: Bilinear Alternation} can be seeded with an arbitrarily small polytope around a collision-free seed point. This seed polytope is then allowed to grow using Algorithm \ref{Alg: Bilinear Alternation}. However, this may require running several dozens of iterations of Algorithm \ref{Alg: Bilinear Alternation} for each seed point which can become prohibitive as the number of degrees of freedom in our robot or the complexity of the scene grows. It is therefore advantageous to seed with as large a region as can be initially certified.

Here we discuss an extension of the \Iris algorithm in \cite[]{deits2015computing} which uses nonlinear optimization to rapidly generate large regions in TC-space. These regions are not guaranteed to be collision-free and therefore they must still be passed to Algorithm \ref{Alg: Bilinear Alternation} to be certified, but do provide good initial guesses. In this section, we will assume that the reader is familiar with \Iris and will only discuss the modification required to use it to grow TC-space regions. Detailed pseudocode is available in Appendix \ref{S: iris pseudocode}.

\Iris grows regions in a given space by alternating between two subproblems: {\sc{SeparatingHyperplanes}} and {\sc{InscribedEllipsoid}}. The {\sc{InscribedEllipsoid}} is exactly the program described in \cite[Section 8.4.2]{boyd2004convex} and we do not need to modify it. The subproblem {\sc{SeparatingHyperplanes}} finds a set of hyperplanes which separate the ellipse generated by {\sc{InscribedEllipsoid}} from all of the obstacles. This subproblem is solved by calling two subroutines: {\sc{ClosestPointOnObstacle}} and {\sc{TangentPlane}}. The former finds the closest point on a given obstacle to the ellipse, while the latter places a plane at the point found in {\sc{ClosestPointOnObstacle}} that is tangent to the ellipsoid.

The original work of \cite[]{deits2015computing} assumes convex obstacles which enables {\sc{ClosestPointOnObstacle}} to be solved as a convex program and for the output of {\sc{TangentPlane}} to be globally separating plane between the obstacle and the ellipsoid of the previous step. Due to the non-convexity of the TC-space obstacles in our problem formulation, finding the closest point on an obstacle exactly becomes a computationally difficult problem to solve exactly \cite[]{ferrier2000computation}. Additionally, placing a tangent plane at the nearest point will be only a locally separating plane, not a globally separating one.

To address the former difficulty, we formulate {\sc{ClosestPointOnObstacle}} as a nonlinear program. Let the current ellipse be given as $\calE = \{Qs + s_{0}\mid \norm{s}_2 \leq 1 \}$ and suppose we have the constraint that $s \in \calP = \{s \mid Cs \leq d\}$. Let $\calA$ and $\calB$ be two collision pairs and ${}^{\calA}p_{\calA}, {}^{\calB}p_{\calB}$ be some point in bodies $\calA$ and $\calB$ expressed in some frame attached to $\calA$ and $\calB$. Also, let ${}^{W}X^{\calA}(s)$ and ${}^{W}X^{\calB}(s)$ denote the rigid transforms from the reference frames $\calA$ and $\calB$ to the world frame respectively. We remind the reader that this notation is drawn from \cite[]{tedrakeManip}. The closest point on the obstacle subject to being contained in $\calP$ can be found by solving the program
\begin{subequations}
\begin{gather} 
\min_{s, {}^{\calA}p_{\calA}, {}^{\calB}p_{\calB}} (s - s_{0})^TQ^TQ(s-s_{0}) \subjectto \\
{}^{W}X^{\calA}(s){}^{\calA}p_{\calA} = {}^{W}X^{\calB}(s){}^{\calB}p_{\calB} \label{E: same point constraint}\\
Cs \leq d
\end{gather}\label{E: closest point}
\end{subequations}
This program searches for the nearest configuration in the metric of the ellipse such that two points in the collision pair come into contact. We find a locally optimal solution $(s^{\star},  {}^{\calA}p_{\calA}^{\star}, {}^{\calB}p_{\calB}^{\star})$ to the program using a fast, general-purpose nonlinear solver such as {\sc{SNOPT}} \cite[]{gill2005snopt}. The tangent plane to the ellipse $\calE$ at the point $s^{\star}$ is computed by calling {\sc{TangentPlane}}, then appended to the inequalities of $\calP$ to form $\calP'$. This routine is looped until \eqref{E: closest point} is infeasible at which point {\sc{InscribedEllipse}} is called again.

Once a region $\calP  =\{s \mid Cs \leq d\}$ is found by Algorithm \ref{A: SNOPT IRIS}, it will typically contain some minor violations of the non-collision constraint. To find an initial, feasible polytope $\calP_{0}$ to use in Algorithm \ref{Alg: Bilinear Alternation}, we search for a minimal uniform contraction $\delta$ of $\calP$ such that $\calP_{\delta} = \{s \mid Cs \leq d - \delta*1\}$ is collision-free. This can be found by bisecting over the variable $\delta \in [0, \delta_{\max}]$ and solving repeated instances of \eqref{E: cert by hyperplane poly}.

Seeding Algorithm \ref{Alg: Bilinear Alternation} with a $\calP_{0}$ as above can dramatically reduce the number of alternations required to obtain a fairly large region and is frequently faster than seeding Algorithm \ref{Alg: Bilinear Alternation} with an arbitrarily small polytope.

\section{Supplementary Algorithms} \label{S: iris pseudocode}

We present a pseudocode for the algorithm presented in Appendix \ref{A: Seeding}. A mature implementation of this algorithm can be found in \href{https://github.com/RobotLocomotion/drake/blob/2f75971b66ca59dc2c1dee4acd78952474936a79/geometry/optimization/iris.cc}{Drake}\footnote{\url{https://github.com/RobotLocomotion/drake/blob/2f75971b66ca59dc2c1dee4acd78952474936a79/geometry/optimization/iris.cc\#L440}}.

\begin{algorithm}
\caption{
Given an initial tangent-configuration-space point $s_{0}$ and a list of obstacles $\calO$, return a polytopic region $\calP = \{s \mid Cs \leq d\}$ and inscribed ellipsoid $\calE_{\calP} = \{s \mid Qs + s_{0}\}$ which contains a substantial portion of the free TC-space (but is not guaranteed to contain no collisions).
}\label{A: SNOPT IRIS}
\SetAlgoLined
 \LinesNumbered
  \SetKwRepeat{Do}{do}{while}
 $(C, d) \gets $ robot joint limits
 \\
 $\calP_{0} \gets \{s \mid Cs \leq d\}$
 \\
  $\calE_{\calP_{0}} \gets $ {\sc{InscribedEllipsoid}}$(\calP_{0})$
 \\
 $j \gets$ number of rows of $C$
 \\
 \Do{$\left(\textbf{vol}(\calE_{i}) - \textbf{vol}(\calE_{i-1}) \right)/ \textbf{vol}(\calE_{i-1})  \geq$ tolerance}{
 \Do{{\sc{FindClosestCollision$(\calP_{i}, \calE_{\calP_{i}})$}} is feasible}
 {
 $(s^{\star},  {}^{\calA}p_{\calA}^{\star}, {}^{\calB}p_{\calB}^{\star}) \gets$ {\sc{FindClosestCollision}}$(\calP_{i}, \calE_{\calP_{i}})$
 \\
 $(c_{j+1}^T, d_{j+1}) \gets$ {\sc{TangentHyperplane}}$(s^{\star}, \calE_{\calP_{i}})$
 \\
 $C \gets \textbf{vstack}(C, c_{j+1}^T)$
 \\
 $d \gets \textbf{vstack}(d, d_{j+1})$
 \\
 $\calP_{i} \gets \{s \mid Cs \leq d\}$
 \\
 $j \gets j +1$
 }
  $\calE_{\calP_{i}} \gets $ {\sc{InscribedEllipsoid}}$(\calP_{i})$
  \\
 $i \gets i+1$
 }
 \Return $(\calP_{i}, \calE_{\calP_{i}})$
\end{algorithm}

\end{document}